\DeclareMathOperator*{\argmax}{arg\,max}
\newtheorem{assumption}{Assumption}
\newtheorem{theorem}{Theorem}
\newtheorem{lemma}{Lemma}
\newtheorem{corollary}{Corollary}
\newtheorem{definition}{Definition}
\newtheorem{proposition}{Proposition}
\newcommand{\Eb}{\mathbb{E}}
\newcommand{\Pb}{\mathbb{P}}
\newcommand{\Sc}{\mathcal{S}}
\newcommand{\Fc}{\mathcal{F}}
\newcommand{\Ec}{\mathcal{E}}
\newcommand{\Gc}{\mathcal{G}}
\newcommand{\congc}[1]{{\color{red}(Cong: #1)}}
\newcommand{\congc}[1]{}
\newcommand{\shir}[1]{{\color{blue}#1}}
\newcommand{\shir}[1]{#1}
\setlist[itemize]{noitemsep, topsep=0pt}
\title{Heterogeneous Multi-player Multi-armed Bandits: \\Closing the Gap and Generalization\footnotetext{To appear at the 35th Conference on Neural Information Processing Systems (NeurIPS), 2021. }}
\author{
	Chengshuai Shi\thanks{Department of Electrical and Computer Engineering, University of Virginia, Charlottesville, VA, USA; Email:  \texttt{\{cs7ync,cong@virginia.edu\}}.}
	\qquad\qquad Wei Xiong\thanks{Department of Mathematics, The Hong Kong University of Science and Technology, Hong Kong, China; Email:  \texttt{wxiongae@connect.ust.hk}.}
	\qquad\qquad Cong Shen\footnotemark[1]
	\qquad\qquad Jing Yang\thanks{School of Electrical Engineering and Computer Science, The Pennsylvania State University, University Park, PA, USA; Email: \texttt{yangjing@psu.edu}.} 
}
\begin{document}

\maketitle

\begin{abstract}
Despite the significant interests and many progresses in decentralized multi-player multi-armed bandits (MP-MAB) problems in recent years, the regret gap to the natural centralized lower bound in the heterogeneous MP-MAB setting remains open. In this paper, we propose BEACON -- \emph{Batched Exploration with Adaptive COmmunicatioN} -- that closes this gap. BEACON accomplishes this goal with novel contributions in implicit communication and efficient exploration. For the former, we propose a novel adaptive differential communication (ADC) design that significantly improves the implicit communication efficiency. For the latter, a carefully crafted batched exploration scheme is developed to enable incorporation of the combinatorial upper confidence bound (CUCB) principle. We then generalize the existing \emph{linear-reward} MP-MAB problems, where the system reward is always the sum of individually collected rewards, to a new MP-MAB problem where the system reward is a general (nonlinear) function of individual rewards. We extend BEACON to solve this problem and prove a \shir{logarithmic} regret. BEACON bridges the algorithm design and regret analysis of combinatorial MAB (CMAB) and MP-MAB, two largely disjointed areas in MAB, and the results in this paper suggest that this previously ignored connection is worth further investigation.

\end{abstract}

\section{Introduction}\label{sec:intro}
Motivated by the application of cognitive radio \citep{anandkumar2010opportunistic,anandkumar2011distributed,gai2010learning}, the multi-player version of the multi-armed bandits problem (MP-MAB) has sparked significant interests in recent years. MP-MAB takes player interactions into account by having multiple decentralized players simultaneously play the bandit game and interact with each other through arm collisions.

Prior MP-MAB studies mostly focus on the homogeneous variant, where the bandit model is assumed to be the same across players \citep{liu2010distributed,rosenski2016multi,besson2018multi}. Recent attentions have shifted towards the more general MP-MAB model with player-dependent bandit instances (i.e., the {\em heterogeneous} variant) \citep{bistritz2018distributed,bistritz2018game,tibrewal2019multiplayer,boursier2019practical}. However, unlike the homogeneous variant, the current understanding of the heterogeneous setting is still limited.
\begin{itemize}[leftmargin=*]\itemsep=0pt
    \item Recent advances \citep{boursier2019sic,proutiere2019optimal} show that for the homogeneous setting, decentralized MP-MAB algorithms can achieve almost the same performance as centralized ones. However, state-of-the-art results in heterogeneous variants still have significant gaps from the centralized performance. It remains an open problem whether a decentralized algorithm can approach the centralized performance for the heterogeneous MP-MAB variant. 
    
    \item All prior MP-MAB works are confined to a {\em linear} system reward function: the system reward is the sum of individual outcomes from players. However, practical system objectives are often captured by more complicated nonlinear reward functions, e.g., the minimal function (see Section \ref{subsec:reward}). 
\end{itemize}

In this paper, we make progress in the aforementioned problems for decentralized heterogeneous MP-MAB. A novel algorithm called BEACON -- Batched Exploration with Adaptive COmmunicatioN -- is proposed and analyzed. In particular, this work makes the following contributions.

\begin{itemize}[leftmargin=*]\itemsep=0pt
    \item BEACON introduces several novel ideas to the design of implicit communication and efficient exploration. For the former, a novel adaptive differential implicit communication (ADC) scheme is proposed, which can significantly lower the implicit communication loss compared to the state of the art. For the latter, core principles from CUCB  \citep{chen2013combinatorial} are incorporated with a batched exploration design, which leads to both efficient and effective explorations. 
    \item For the linear reward function, we rigorously show that regret bounds of BEACON, both problem-dependent and problem-independent, not only improve all prior regret analyses but more importantly are capable of \emph{approaching the centralized lower bounds}, thus answering the aforementioned open problem positively.
    \item We then propose to generalize the study of heterogeneous MP-MAB to general (nonlinear) reward functions. BEACON is extended to solve such problems and we show that it achieves a regret of $O(\log(T))$, \shir{where $T$ is the time horizon}. The analysis itself holds important value as it bridges the regret analysis of combinatorial MAB (CMAB) and MP-MAB.

    \item BEACON achieves impressive empirical results. It not only outperforms existing decentralized algorithms significantly, but indeed has a comparable performance as the centralized benchmark, hence corroborating the theoretical analysis. Remarkably, BEACON with the linear reward function generally achieves $\sim 6\times$ improvement over the state-of-the-art METC \citep{boursier2019practical}.
\end{itemize}

\section{Problem Formulation}\label{sec:problem}
A decentralized MP-MAB model consists of $K$ arms and $M$ players. As commonly assumed \citep{bistritz2018game,boursier2019practical}, there are more arms than players, i.e., $M\leq K$, and initially the players have knowledge of $K$ but not $M$. Furthermore, no \emph{explicit} communications are allowed among players, which results in a decentralized system. Also, time is assumed to be slotted, and at time step $t$, each player $m\in [M]$ chooses and pulls an arm $s_m(t)\in[K]$. The action vector of all players at time $t$ is denoted as  $S(t):=\left[s_1(t),...,s_M(t)\right]$, \shir{which is referred to as a ``matching'' for convenience although it is not necessarily one-to-one.}

\textbf{Individual Outcomes.} For each player $m$, an outcome\footnote{The term ``outcome'' distinguishes players' individual rewards from the later introduced system rewards.} $O_{k,m}(t)$ is associated with her action of pulling arm $s_m(t)=k$ at time $t$, which is defined as
\begin{equation}\label{eqn:obs}
		O_{k,m}(t):=X_{k,m}(t)\eta_{k}(S(t)).
\end{equation}
In Eqn.~\eqref{eqn:obs}, $X_{k,m}(t)$ is a random variable of arm utility and $\eta_{k}(S(t))$ is the no-collision indicator defined by $\eta_{k}(S):=\mathds{1}\{ | C_k(S) |\leq 1\}$ with $C_k(S):=\{n\in[M]|s_n=k\}$. 
In other words, if player $m$ is the only player choosing arm $k$, the outcome is $X_{k,m}(t)$; if multiple players choose arm $k$ simultaneously, a collision happens on this arm and the outcome is zero regardless of $X_{k,m}(t)$.

For a certain arm-player pair, i.e., $(k,m)$, the set of random arm utilities $\{X_{k,m}(t)\}_{t\geq 1}$ is assumed to be sampled independently from an unknown distribution $\phi_{k,m}$, which has a bounded support on $[0,1]$ and an unknown expectation $\Eb[X_{k,m}(t)]=\mu_{k,m}$. In general, these utility distributions are player-dependent, i.e., $\mu_{k,m}\neq \mu_{k,n}$ when $m\neq n$. Note that despite the time independence among  $\{X_{k,m}(t)\}_{t\geq 1}$ for a certain arm-player pair $(k,m)$, correlations can exist among the random utility variables of different arm-player pairs, i.e., among $X_{k,m}(t)$ for different $(k,m)$ pairs.

To ease the exposition, we define $\Sc=\left\{S=[s_1,...,s_M]|s_m\in[K], \forall m\in[M]\right\}$ as the set of all possible matchings $S$ and abbreviate the arm $k$ of player $m$ as arm $(k,m)$. We further denote $\boldsymbol{\mu} = \left[\mu_{k,m}\right]_{(k,m)\in[K]\times[M]}$ and $\boldsymbol{\mu}_S = \left[\mu_{s_m,m}\right]_{m\in[M]}$ for $S=[s_1,...,s_M]$.

\textbf{System Rewards.} Besides players' individual outcomes, with matching $S(t)$ chosen at time $t$, a random system reward, denoted as $V(S(t),t)$, is collected for the entire system. The most commonly-studied reward function \citep{bistritz2018game,boursier2019practical} is the sum of outcomes from different players (referrd to as the \emph{linear reward function}), i.e., $
    V(S(t),t) := \sum\nolimits_{m\in [M]} O_{s_m(t),m}(t)$.
With this linear reward function, for matching $S$, the expected system reward is denoted as $V_{\boldsymbol{\mu},S}: = \Eb[V(S,t)] = \sum_{m\in[M]}\mu_{s_m,m}\eta_{s_m}(S)$ under matrix $\boldsymbol{\mu}$. As almost all of the existing MP-MAB literature focus on the linear reward function, we also focus on this case first, but note that the problem formulation presented in this section can be extended to general (nonlinear) reward functions in Section~\ref{sec:general}.

\textbf{Feedback Model.} Different feedback models exist in the MP-MAB literature, and this work focuses on the collision-sensing model \citep{bistritz2018distributed,bistritz2018game,boursier2019sic,boursier2019practical}. Specifically, player $m$ can access her own outcome $O_{s_m(t),m}(t)$ and the corresponding no-collision indicator $\eta_{s_m(t)}(S(t))$, but neither the overall reward $V(S(t),t)$ nor outcomes of other players. In other words, at time $t$, player $m$ chooses arm $s_m(t)$ based on her own history $H_{m}(t) = \left\{s_m(\tau),O_{s_m(\tau),m}(\tau), \eta_{s_m(\tau)}(S(\tau))\right\}_{1\leq \tau\leq t-1}$.

\textbf{Regret Definition.} If $\boldsymbol{\mu}$ is known \textit{a priori}, the optimal choice is the matching that gives the highest expected reward  $V_{\boldsymbol{\mu},*}:=\max_{S\in\Sc}V_{\boldsymbol{\mu},S}$. We formally define the regret after $T$ rounds of playing as
\begin{equation}\label{eqn:regret_def}
    R(T) = TV_{\boldsymbol{\mu},*}-\Eb\left[\sum_{t=1}^T V(S(t),t)\right],
\end{equation}
where the expectation is w.r.t. the randomness of the policy and the environment.

One technical novelty worth noting is that this work considers the general case with possibly {\em multiple optimal matchings}, instead of the commonly assumed unique one \citep{bistritz2018distributed,bistritz2018game}. Multiple optimal matchings might be uncommon for the linear reward function, but often occur under more sophisticated reward functions that will be discussed later, e.g., the minimal function, and brings substantial difficulties into player coordination. In addition, the proposed BEACON design is also applicable to the homogeneous setting, i.e., $\forall m \in [M],\mu_{k,m}=\mu_k$, with some trivial adjustments.

\section{The BEACON Algorithm}\label{sec:alg}
\subsection{Algorithm Structure and Key Ideas}
After the orthogonalization procedure \citep{proutiere2019optimal} at the beginning of the game, during which each player individually estimates the number of players $M$ and assigns herself of a unique index $m\in[M]$, BEACON proceeds in epochs and each epoch consists of two phases: (implicit) communication and exploration.\footnote{Details of the orthogonalization procedure are given in Appendix~\ref{sub_app:orthogonal}. In addition, by ``exploration phase'', we mean the time steps in one epoch that are not used for (implicit) communications, which actually contain both exploration and exploitation.} While similar two-phase structures have been adopted by other heterogeneous MP-MAB algorithms \citep{tibrewal2019multiplayer,boursier2019practical}, those designs fail to have regrets approaching the centralized lower bound.

The challenge in approaching the centralized lower bound is not only designing more efficient implicit communications and explorations, but also connecting them in a way that neither phase dominates the overall regret and both approach the centralized lower bound simultaneously. BEACON precisely achieves these goals, with several key ideas that not only are crucial to closing the regret gap but also hold individual values in MP-MAB research.  
First, a novel adaptive differential communication (ADC) method is proposed, which is fundamental in improving the effectiveness and efficiency of implicit communications. Specifically, ADC drastically reduces the communication cost from up to $O(\log(T))$ per epoch in state-of-the-art designs \citep{boursier2019practical} to $O(1)$ per epoch, which ensures a low communication cost. Second, CUCB principles \citep{chen2013combinatorial} are incorporated with a batched exploration structure to ensure a low exploration loss \shir{(see Section~\ref{sec:related} for more discussions on the relationship between CMAB and MP-MAB)}. CUCB principles address a critical challenge of \emph{large amount of matchings} in heterogeneous MP-MAB \shir{(i.e., $|\Sc|=K^M$)}, which hampered prior designs. The batched structure, on the other hand, is carefully embedded and optimized such that the need of communication and exploration is balanced, leading to neither dominating the overall regret.

\subsection{Batched Exploration}\label{subsec:alg_expl}
To facilitate the illustration, we first present the batched exploration scheme and also a sketch of BEACON under an imaginary communication-enabled setting, which will be addressed in Section~\ref{subsec:alg_comm}. Specifically, players are assumed to be able to communicate with each other freely in this subsection. 

The batched exploration proceeds as follows. At the beginning of epoch $r$, each player $m$ maintains an arm counters $p^r_{k,m}$ for each arm $k$ of hers. The counters are updated as $p^r_{k,m}=\lfloor \log_2(T^r_{k,m})\rfloor$, where $T^r_{k,m}$ is the number of exploration pulls on arm $(k,m)$ up to epoch $r$. {Then, the leader (referring to the player with index $1$) collects arm statistics from followers (referring to the players other than the leader). Specifically, if $p^{r}_{k,m}>p^{r-1}_{k,m}$, statistics $\tilde{\mu}^{r}_{k,m}$ is collected from follower $m$; otherwise, $\tilde{\mu}^{r}_{k,m}$ is not updated and kept the same as $\tilde{\mu}^{r-1}_{k,m}$, where $\tilde{\mu}^r_{k,m}$ is a to-be-specified characterization of arm $(k,m)$'s sample mean $\hat{\mu}^r_{k,m}$. With the updated information}, an upper confidence bound (UCB) matrix $\boldsymbol{\bar{\mu}}_r = [\bar{\mu}^r_{k,m}]_{(k,m)\in[K]\times[M]}$ is calculated by the leader, where $\bar{\mu}^r_{k,m}=\tilde{\mu}^r_{k,m}+\sqrt{3\ln t_r/2^{p^r_{k,m}+1}}$, and $t_r$ is {the time step at the beginning of epoch $r$}. 

The UCB matrix $\boldsymbol{\bar{\mu}}_r$ is then fed into a combinatorial optimization solver, denoted as $\texttt{Oracle}(\cdot)$, which outputs the optimal matching w.r.t. the input. Specifically, $S_r = [s_1^r,...,s_M^r]\gets \texttt{Oracle}(\boldsymbol{\bar{\mu}}_r) = \argmax_{S\in \Sc}\left\{\sum_{s_m}\bar{\mu}^r_{s_m,m}\right\}$, which can be computed with a polynomial time complexity using the Hungarian algorithm \citep{munkres1957algorithms}. We note that similar optimization solvers are also required by \citet{boursier2019practical,tibrewal2019multiplayer}.
Inspired by the exploration choice of CUCB, this matching $S_r$ is chosen to be explored. The leader thus assigns the matching $S_r$ to followers, i.e., arm $s_m^r$ for player $m$.

{After the assignment, the exploration begins}. One important ingredient of BEACON is that the duration of exploring the chosen matching, i.e., the adopted batch size, is determined by the {smallest} arm counter in it. Specifically, for $S_r$, we denote $p_r = \min_{m\in[M]}p^r_{s^r_m,m}$ and the batch size is chosen to be $2^{p_r}$. In other words, during the following $2^{p_r}$ time steps, players are fixated to exploring the matching $S_r$. {Then, epoch $r+1$ starts, and the same procedures are iterated.}

\textbf{Remarks.} 
BEACON directly selects the matching with the largest UCB to explore. It turns out that this natural method significantly outperforms the ``matching-elimination'' scheme in \citet{boursier2019practical}, and is critical to achieving a near-optimal exploration loss. In addition, the chosen batch size of $2^{p_r}$ ensures \emph{sufficient but not excessive} pulls w.r.t. the least pulled arm(s) in the chosen matching, which dominate the uncertainties. Furthermore, while similar batched structures have been utilized in the bandit literature \citep{auer2002finite,hillel2013distributed}, the updating of arm counters in BEACON is carefully tailored. Last, the leader collects followers' statistics only when arm counters increase, i.e., $p^{r}_{k,m}>p^{r-1}_{k,m}$, which means $\tilde{\mu}^{r}_{k,m}$ is sufficiently more precise than $\tilde{\mu}^{r-1}_{k,m}$. This design contributes to a low communication frequency while not affecting the exploration efficiency.  

\subsection{Efficient Implicit Communication}\label{subsec:alg_comm}
{Since} explicit communication is prohibited in decentralized MP-MAB problems, we now discuss how to use \emph{implicit} communication \citep{boursier2019sic} to share information in BEACON. Specifically, players can take predetermined turns to ``communicate'' by having the ``receive'' player sample one arm and the ``send'' player either pull (create collision; bit $1$) or not pull (create no collision; bit $0$) the same arm to transmit one-bit information. {Although information sharing is enabled}, such a forced-collision communication approach is inevitably costly, as collisions reduce the rewards. The challenge now is how to keep the communication loss small, ideally $O(\log(T))$.

	\begin{algorithm}[htb]
	    \caption{BEACON: Leader}
	    \label{alg:leader}
	    \begin{algorithmic}[1]
		\State Initialization: $r\gets 0$; $\forall (k,m), p^r_{k,m}\gets -1, T^r_{k,m}\gets 0, \tilde{\mu}^r_{k,m}\gets 0$
		\State Play each arm $k\in[K]$ and $T^{r+1}_{k,1}\gets T^r_{k,1}+1$
		\While{not reaching the time horizon}
		\State $r\gets r+1$
		\State $\forall (k,m), p^r_{k,m}\gets \left\lfloor\log_2(T^r_{k,m})\right\rfloor$
		\State $\forall k\in[K]$, update sample mean $\hat{\mu}^{r}_{k,1}$ with the first $2^{p^r_{k,1}}$ exploratory samples from arm $k$
		\Statex $\triangleright$ \textit{Communication Phase}
		\For{$(k,m)\in[K]\times [M]$} 
		\If{$p^r_{k,m}>p^{r-1}_{k,m}$}
		\State $\tilde{\delta}^r_{k,m}\gets \texttt{Receive}(\tilde{\delta}^r_{k,m},m)$
		\State $\tilde{\mu}_{k,m}^r\gets \tilde{\mu}_{k,m}^{r-1}+\tilde{\delta}_{k,m}^r$
		\Else 
		\State $\tilde{\mu}_{k,m}^r\gets \tilde{\mu}_{k,m}^{r-1}$
		\EndIf
		\EndFor
		\State $\forall (k,m), \bar{\mu}^r_{k,m}\gets \tilde{\mu}_{k,m}^r+\sqrt{3\ln t_r/2^{p^r_{k,m}+1}}$
		\State $S_r=[s^r_1,...,s^r_M]\gets \texttt{Oracle}(\boldsymbol{\bar{\mu}_r})$
		\State $\forall m\in [M], \texttt{Send}(s^r_m,m)$
		\Statex $\triangleright$ \textit{Exploration Phase}
		\State $p_r \gets \min_{m\in[M]}p^r_{s^r_m,m}$
		\State Play arm $s^r_1$ for $2^{p_r}$ times
		\State Signal followers to stop exploration
		\State Update $\forall m\in[M], T^{r+1}_{s_m,m}\gets T^r_{s_m,m}+2^{p_r}$
		\EndWhile
		\end{algorithmic}
	\end{algorithm}

The {\em batched} exploration scheme plays a key role in reducing the communication loss via infrequent information updating. In other words, players only communicate statistics and decisions before each batch instead of each time step. With the aforementioned batch size, there are at most $O(\log(T))$ epochs in horizon $T$. Thus, intuitively, if the communication loss per epoch can be controlled of order $O(1)$ irrelevant of $T$, the overall communication loss would not be dominating. However, this requirement is challenging and none of the existing implicit communication schemes \citep{boursier2019sic,boursier2019practical} can meet it, which calls for a novel communication design.

From the discussion of the exploration phases, we can see that sharing  arm statistics $\tilde{\mu}^r_{k,m}$ is the most challenging part. Specifically, as opposed to sharing integers of arm indices in $S_r$ and the batch size parameter $p_r$, statistics $\tilde{\mu}^r_{k,m}$ is often a decimal while forced-collision is fundamentally a {digital communication} protocol. We thus focus on the communication design for sharing statistics $\tilde{\mu}^r_{k,m}$, and propose the adaptive differential communication (ADC) method as detailed below. Details of sharing $S_r$ and $p_r$ can be found in Appendix~\ref{subsec:supp_comm}.

The first important idea is to let followers \textbf{adaptively} quantize sample means for communication. Specifically, upon communication, the arm statistics $\tilde{\mu}^r_{k,m}$ is not directly set as the collected sample mean $\hat{\mu}^r_{k,m}$. Instead, $\tilde{\mu}^r_{k,m}$ is a quantized version of $\hat{\mu}^r_{k,m}$ using $\lceil 1+ p^r_{k,m}/2\rceil$ bits. Since $\tilde{\mu}^{r}_{k,m}$ is communicated only upon an increase of the arm counter $p^r_{k,m}$, this quantization length is adaptive to the arm counter (or equivalently the arm pulls), and further to the adopted confidence bound in Section~\ref{subsec:alg_expl}, i.e., {$\sqrt{3\ln t_r/2^{p^r_{k,m}+1}}$}. However, this idea alone is not sufficient because $p^r_{k,m}$ is of order {up to} $O(\log(T))$, instead of $O(1)$.

To overcome this obstacle, the second key idea is \textbf{differential} communication, which significantly reduces the redundancies in statistics sharing. Specifically, {follower} $m$ first computes the difference $\tilde{\delta}^{r}_{k,m} = \tilde{\mu}^{r}_{k,m}-\tilde{\mu}^{r-1}_{k,m}$, and then truncates the bit string of $\tilde{\delta}^{r}_{k,m}$ upon the most significant non-zero bit, e.g., $110$ for $000110$. She only communicates this truncated version of $\tilde{\delta}^{r}_{k,m}$ {in the transmission of $\tilde{\mu}^{r}_{k,m}$ to the leader}. The intuition is that $\tilde{\mu}^{r}_{k,m}$ and $\tilde{\mu}^{r-1}_{k,m}$ are both concentrated at $\mu_{k,m}$ with high probabilities, which results in a small $\tilde{\delta}^{r}_{k,m}$. From an information-theoretic perspective, the conditional entropy of  $\tilde{\mu}^{r}_{k,m}$ on $\tilde{\mu}^{r-1}_{k,m}$, i.e., $H(\tilde{\mu}^{r}_{k,m}|\tilde{\mu}^{r-1}_{k,m})$, is often small because they are highly correlated.\footnote{\shir{Note that sharing the truncated version of $\tilde{\delta}^r_{k,m}$ results in another difficulty that its length varies for different player-arm pairs and is unknown to the leader. A specially crafted ``signal-then-communicate'' scheme is designed to tackle this challenge and can be found in the Appendix~\ref{subsec:supp_comm}.}}

As will be clear in the regret analysis, putting these two ideas together results in an effective communication design, i.e, the ADC scheme, whose expected regret is of order $O(1)$ per epoch and $O(\log(T))$ overall. This method itself represents an important improvement over prior implicit communication protocols in MP-MAB, whose loss is typically of order $O(\log(T))$ per epoch and $O(\log^2(T))$ in total with multiple optimal matchings \citep{boursier2019sic,boursier2019practical}. Techniques similar to ADC have been utilized in areas outside of MAB, e.g., wireless communications \citep{goldsmith1998adaptive}, with proven success in practice \citep{goldsmith2005wireless}. 

\begin{figure}[tbh]
	\centering
	\includegraphics[width=\linewidth]{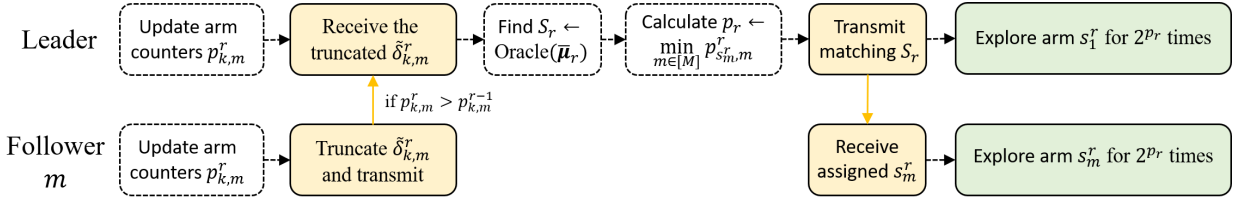}
	\caption{A sketch of epoch $r$ in BEACON. Yellow boxes and yellow lines indicate communications, green boxes for explorations, and boxes with dotted frame for computations.}
	\label{fig:procedure}
\end{figure}

The complete BEACON algorithm can now be obtained by plugging ADC into the batched exploration structure. A sketch of one BEACON epoch is illustrated in Fig.~\ref{fig:procedure}, and the leader's algorithm is presented in Algorithm~\ref{alg:leader}. The follower's algorithm can be found in Appendix~\ref{sub_app:follower}, along with the definitions of the implicit communication protocols denoted by functions \texttt{Send()} and \texttt{Receive()} in Appendix~\ref{sub_app:send_receive}. Note that the for-loops with $(k,m)$ and $\forall (k,m)$ in the pseudo-codes indicate the iteration over all possible arm-player pairs of $[K]\times [M]$. {In addition, the communications of the leader to herself indicated by the pseudo-codes denote her own calculations instead of real forced-collision communications (among the leader and followers), which is a simplification for better exposition.} 

\section{Theoretical Analysis}\label{sec:theory}
With notations $\Sc_c := \{S\in \Sc|\exists m\neq n, s_m=s_n\}$ as the set of collided matchings; $\Sc_{*} := \{S\in \Sc|V_{\boldsymbol{\mu},S} =  V_{\boldsymbol{\mu},*}\}$ as the set of optimal matchings; $\Sc_{b} = \Sc\backslash (\Sc_*\cup \Sc_c)$ as the set of collision-free suboptimal matchings; $\Delta^{k,m}_{\min} := V_{\boldsymbol{\mu},*} - \max\{V_{\boldsymbol{\mu},S}|S\in\Sc_b, s_m = k \}$ \shir{as the minimum sub-optimality gap for collision-free matchings containing arm-player pair $(k,m)$}; $\Delta_{\min} := \min\nolimits_{(k,m)} \{\Delta^{k,m}_{\min}\}$ \shir{as the minimum sub-optimality gap for all collision-free matchings}, 
the regret of BEACON with the linear reward function is analyzed in the following theorem. 
\begin{theorem}\label{thm:linear}
	With the linear reward function, the regret of BEACON is upper bounded as\footnote{With the notation $\tilde{O}(\cdot)$, logarithmic parameters containing $K$ are ignored.}
	\begin{align}
	    R_{\textup{linear}}(T) 
		&= \tilde{O}\left(\sum_{(k,m)\in [K]\times [M]}\frac{M\log(T)}{\Delta^{k,m}_{\min}}+M^2K\log(T)\right)\label{eqn:linear_regret}\\
		&=  \tilde{O}\left(\frac{M^2K\log(T)}{\Delta_{\min}}\right)\notag.
	\end{align}
\end{theorem}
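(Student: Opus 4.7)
The plan is to decompose $R_{\textup{linear}}(T)$ into three additive parts: (i) the one-time orthogonalization cost of $O(MK)$; (ii) the exploration regret from pulling sub-optimal matchings during the exploration phase of each epoch; and (iii) the communication regret from forced collisions that implement \texttt{Send} and \texttt{Receive}. First I would set up the standard high-probability machinery: for each epoch $r$, define the good event $\mathcal{F}_r$ on which every UCB is valid, $\bar{\mu}^r_{k,m} \ge \mu_{k,m}$, and the ADC quantization error obeys $|\tilde{\mu}^r_{k,m} - \hat{\mu}^r_{k,m}| \le \sqrt{3\ln t_r / 2^{p^r_{k,m}+1}}$. Applying Hoeffding's inequality to $\hat{\mu}^r_{k,m}$, and using the adaptive quantization length $\lceil 1+p^r_{k,m}/2\rceil$, a union bound gives $\mathbb{P}(\bar{\mathcal{F}}_r)=O(t_r^{-2})$, so the contribution of bad epochs is absorbed into lower-order terms.

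For the exploration regret on $\mathcal{F}_r$, I would adapt the CUCB argument of Chen et al.\ (2013) to the batched, heterogeneous setting. Because \texttt{Oracle} returns the UCB-optimal matching, $V_{\bar{\boldsymbol{\mu}}_r, S_r} \ge V_{\bar{\boldsymbol{\mu}}_r, S^*} \ge V_{\boldsymbol{\mu}, *}$ for any $S^*\in \mathcal{S}_*$ -- which handles the multiple-optimal-matching subtlety automatically -- so Lipschitz continuity of the linear reward function yields $V_{\boldsymbol{\mu},*} - V_{\boldsymbol{\mu}, S_r} \le 2\sum_m \sqrt{3\ln t_r / 2^{p^r_{s^r_m,m}+1}}$. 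The batch length is $2^{p_r}$, the minimum arm counter in $S_r$, so the per-epoch exploration regret is the above gap times $2^{p_r}$. I would then invoke the standard CMAB pigeonhole argument, charging each bad epoch to the bottleneck arm $(k,m)$ with $p^r_{k,m}=p_r$, and show that across all epochs in which $(k,m)$ is the bottleneck of a matching with gap at least $\Delta^{k,m}_{\min}$, the total pulls amount to $\tilde{O}(M^2\log T/(\Delta^{k,m}_{\min})^2)$. Multiplying by the per-pull gap and summing over $(k,m)$ reproduces the first term of \eqref{eqn:linear_regret}.

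For the communication regret I would bound three factors: the number of epochs $R_T$, the bits transmitted per epoch, and the reward cost per bit. Since every epoch strictly increments at least the bottleneck counter, and each of the $MK$ counters ranges over $O(\log T)$ distinct values, $R_T = O(MK\log T)$. Per epoch, assigning $S_r$ to the $M-1$ followers costs $O(M\log K)$ bits, and each bit incurs an $O(1)$ collision loss under the linear reward, giving the baseline contribution $\tilde{O}(M^2 K\log T)$ of the second term in \eqref{eqn:linear_regret}. The delicate piece is showing that ADC's \texttt{Receive} for the arm statistics does not dominate: I would prove that the expected length of the truncated differential $\tilde{\delta}^r_{k,m}$ is $O(1)$, using that $\tilde{\mu}^r_{k,m}$ and $\tilde{\mu}^{r-1}_{k,m}$ both concentrate around $\mu_{k,m}$ at the scale $2^{-\lceil 1+p^r_{k,m}/2\rceil}$ matching the adaptive quantization, so a sub-Gaussian tail bound caps the position of the most-significant non-zero bit by a geometric random variable with constant mean.

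The main obstacle is precisely this ADC coupling: one must reconcile the deterministic, counter-dependent quantization scale with the stochastic concentration of two highly correlated sample means, while routing the residual small-probability event where concentration fails into the $\bar{\mathcal{F}}_r$ accounting without losing a $\log T$ factor. Once the $O(1)$-expected-bits claim is established, combining the three regret sources yields \eqref{eqn:linear_regret}, and the collapsed form $\tilde{O}(M^2K\log T/\Delta_{\min})$ follows by replacing each $\Delta^{k,m}_{\min}$ with $\Delta_{\min}$, using $|[K]\times[M]|=MK$, and observing that the resulting $M^2 K\log T/\Delta_{\min}$ term dominates the additive $M^2K\log T$ piece.
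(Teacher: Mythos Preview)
Your decomposition and the communication-regret analysis are essentially what the paper does: the $O(1)$-expected-bits claim for the truncated differential $\tilde{\delta}^r_{k,m}$ is exactly Lemma~\ref{lem:comm_regret} (the paper gets $\Eb[L^r_{k,m}]\le 7$), and the epoch count $O(MK\log T)$ together with $O(M\log K)$ bits for matching assignment yields the $\tilde{O}(M^2K\log T)$ communication term.

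The genuine gap is in your exploration-regret step. Charging each bad epoch to the single bottleneck arm and invoking the Chen et al.\ (2013) pigeonhole does \emph{not} reproduce the first term of Eqn.~\eqref{eqn:linear_regret}: it is off by a factor of $M$. Concretely, once you bound $\Delta_{S_r}\le c\sum_{m}\sqrt{\ln t_r/2^{p^r_{s^r_m,m}}}$ and then pass to the bottleneck via $\sum_m(\cdot)\le M\cdot\max_m(\cdot)$, you obtain $2^{p_r}\le c'M^2\ln T/\Delta_{S_r}^2$. Summing the per-epoch regret $2^{p_r}\Delta_{S_r}$ over the at most $\log_2 T$ values of the bottleneck counter therefore gives $\tilde{O}(M^2\ln T/\Delta^{k,m}_{\min})$ per pair $(k,m)$, hence $\tilde{O}(M^3K\ln T/\Delta_{\min})$ overall --- exactly the METC rate, not the $\tilde{O}(M^2K\ln T/\Delta_{\min})$ claimed in Theorem~\ref{thm:linear}. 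Equivalently, your approach is the specialization of the general Theorem~\ref{thm:general}/Corollary~\ref{col:general} with $B=M$, which yields $B^2/\Delta^{k,m}_{\min}=M^2/\Delta^{k,m}_{\min}$ per pair.

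To recover the missing factor of $M$ the paper does \emph{not} charge to a single arm. It adapts the Kveton et al.\ (2015) peeling argument: one introduces two geometric sequences $(a_i),(b_i)$ and shows (Proposition~\ref{prop:split}) that whenever the sum constraint $\sum_m\sqrt{\ln t_r/2^{p^r_{s^r_m,m}}}\ge c\,\Delta_{S_r}$ holds, there exists a layer $i$ for which at least a $b_i$-fraction of the $M$ arms in $S_r$ are under-sampled relative to $a_i M^2\ln T/\Delta_{S_r}^2$. Distributing the charge across these $\ge b_i M$ arms, rather than a single bottleneck, is precisely what saves the factor of $M$ and produces $\sum_{(k,m)}M\ln T/\Delta^{k,m}_{\min}$ instead of $\sum_{(k,m)}M^2\ln T/\Delta^{k,m}_{\min}$.
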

Note that in Eqn.~\eqref{eqn:linear_regret}, the first term represents the exploration regret of BEACON, and the second term the communication regret. Compared with the state-of-the-art regret result $\tilde{O}(M^3K\log(T)/\Delta_{\min})$ for METC \citep{boursier2019practical}, the regret bound in Theorem~\ref{thm:linear} improves the dependence of $M$ from $M^3$ to $M^2$. It turns out that this quadratic dependence is optimal because the same dependence exists in the centralized lower bound (hence a natural lower bound for decentralized MP-MAB) for the linear reward function, as from \citet{kveton2015tight}:\footnote{This lower bound holds for the cases with arbitrarily correlated arms, as considered in this work. Under additional arm independence assumptions \citep{combes2015combinatorial}, lower regrets can be achieved.}
\begin{equation}\label{eqn:linear_lower}
		R_{\textup{linear}}(T) =  \Omega\bigg(\frac{M^2K}{\Delta_{\min}}\log(T)\bigg).
\end{equation}
By comparing Theorem~\ref{thm:linear} and Eqn.~\eqref{eqn:linear_lower}, it can be observed that with the linear reward function, BEACON achieves a regret that approaches the centralized lower bound. The efficiency and effectiveness of both exploration and communication phases are critical in this achievement, as we can see that both terms in Theorem~\ref{thm:linear} are non-dominating at $\tilde{O}(M^2K \log(T))$.

In addition to the problem-dependent bound given in Theorem~\ref{thm:linear}, the following theorem establishes a problem-independent bound, which can be thought of as a worst-case characterization.
\begin{theorem}
\label{thm:linear_gapfree}
    With the linear reward function, it holds that
    \begin{equation*}
        R_{\textup{linear}}(T)= O\left(M\sqrt{KT\log(T)}\right).
    \end{equation*}
\end{theorem}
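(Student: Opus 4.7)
The plan is to convert the gap-dependent bound of Theorem~\ref{thm:linear} into a gap-free bound via the standard peeling trick from combinatorial bandit analysis: introduce a threshold $\Lambda>0$ on the sub-optimality gap, handle small-gap matchings trivially, handle large-gap matchings through the logarithmic bound, and optimize over $\Lambda$.

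Concretely, I would split the collision-free suboptimal matchings $\Sc_b$ into $\Sc_b^{\leq\Lambda}=\{S\in\Sc_b : V_{\boldsymbol{\mu},*}-V_{\boldsymbol{\mu},S}\leq\Lambda\}$ and $\Sc_b^{>\Lambda}=\Sc_b\setminus\Sc_b^{\leq\Lambda}$, and decompose the regret according to which type of matching is played in each round (the collided matchings in $\Sc_c$ are absorbed into the communication regret). Any pull of a matching in $\Sc_b^{\leq\Lambda}$ contributes at most $\Lambda$ to the instantaneous regret, so the total contribution of this class over $T$ rounds is at most $T\Lambda$. For the matchings in $\Sc_b^{>\Lambda}$, I would revisit the derivation underlying Theorem~\ref{thm:linear}: its exploration term $\sum_{(k,m)}M\log(T)/\Delta^{k,m}_{\min}$ is obtained via a CUCB-style per-pair decomposition, in which the regret charged to arm-player pair $(k,m)$ is controlled by the smallest gap among suboptimal matchings containing $(k,m)$. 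Restricting that decomposition to $\Sc_b^{>\Lambda}$ replaces each $\Delta^{k,m}_{\min}$ by a quantity at least $\Lambda$, yielding a total contribution of at most $\tilde O(M^2 K\log(T)/\Lambda)$. Adding the gap-independent communication regret $\tilde O(M^2 K\log(T))$ from Theorem~\ref{thm:linear} gives an overall bound of $T\Lambda + \tilde O(M^2 K\log(T)/\Lambda) + \tilde O(M^2 K\log(T))$.

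Choosing $\Lambda = M\sqrt{K\log(T)/T}$ to balance the first two terms makes each equal to $O(M\sqrt{KT\log(T)})$, and the residual $\tilde O(M^2 K\log(T))$ is of lower order and absorbed into the claimed bound in the relevant asymptotic regime $T\gtrsim M^2 K\log(T)$. The main obstacle I expect lies in the second step, namely confirming that the exploration bound of Theorem~\ref{thm:linear} really does cleanly restrict to $\Sc_b^{>\Lambda}$ under BEACON's doubling schedule. Because the confidence radius $\sqrt{3\ln t_r/2^{p^r_{k,m}+1}}$ is governed by the counter $p^r_{k,m}$ rather than the raw pull count, I would re-examine the concentration event on which $\bar\mu^r_{k,m}$ stays close to $\mu_{k,m}$ and show that any $S_r\in\Sc_b^{>\Lambda}$ returned by $\texttt{Oracle}(\boldsymbol{\bar\mu}_r)$ on this event forces at least one $(k,m)\in S_r$ to satisfy $2^{p^r_{k,m}}=O(\log(T)/\Lambda^2)$, which combined with the doubling schedule reproduces the restricted exploration bound.
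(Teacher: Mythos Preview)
Your proposal is correct and takes essentially the same approach as the paper: introduce a threshold on the sub-optimality gap, bound the small-gap contribution by $T$ times the threshold, rerun the exploration analysis of Lemma~\ref{lem:expl_regret_linear} restricted to gaps above the threshold to get $\tilde O(M^2K\log(T)/\Lambda)$, and optimize $\Lambda\asymp M\sqrt{K\log(T)/T}$. The obstacle you flag about the doubling schedule is exactly the step the paper dispatches by noting that the argument of Lemma~\ref{lem:expl_regret_linear} goes through verbatim once every surviving gap is at least $\Lambda$.
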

Theorem~\ref{thm:linear_gapfree} not only improves the best known problem-independent bound $O(M^{\frac{3}{2}}\sqrt{KT\log(T)})$ \citep{boursier2019practical} in the decentralized MP-MAB literature, but also approaches the centralized lower bound $\Omega(M\sqrt{KT})$ \citep{kveton2015tight,merlis2020tight} up to logarithmic factors.

Theorems~\ref{thm:linear} and \ref{thm:linear_gapfree} demonstrate that for the linear reward function, BEACON closes the performance gap (both problem-dependent and problem-independent) between decentralized heterogeneous MP-MAB algorithms and their centralized counterparts. The regret bounds of various MP-MAB algorithms, including BEACON, are summarized in Table~\ref{tbl:regret}.

\shir{\textbf{Remarks.} We note that it is also feasible to combine the ADC protocol and METC \citep{boursier2019practical}, which can address its communication inefficiency, especially with multiple optimal matchings. However, with ideas from CUCB, BEACON is much more efficient in exploration than ``Explore-then-Commit''-type of algorithms (e.g., METC), which is the main reason we did not fully elaborate the combination of METC and ADC in this work. Theoretically, this superiority can be reflected in the extra multiplicative factor  in the exploration loss of METC shown in Table~\ref{tbl:regret}.}

\begin{table*}[thb]
	\caption{Regret Bounds of Decentralized MP-MAB Algorithms}
	\begin{center}
		\begin{tabular}{|c|c|c|c|c|l|}
			\hline
			\multicolumn{1}{|c|}{\multirow{4}*{Algorithm/Reference}} & \multirow{4}*{\makecell[c]{Reward \\ function}}& \multicolumn{3}{c|}{Assumptions} & \multicolumn{1}{c|}{\multirow{4}*{Regret}}\\
			\cline{3-5}
			~ & ~& \makecell[c]{Known \\horizon \\$T$} & \makecell[c]{Known \\gap\\ $\Delta_{\min}$}& \makecell[c]{Unique \\optimal \\matching}& ~\\
			\hline
			\makecell[c]{GoT $\dagger$\\ \citep{bistritz2018game}} & Linear & No & Yes & Yes &$O\left(M\log^{1+\kappa}(T)\right)$\\
			\makecell[c]{Decentralized MUMAB \\ \citep{magesh2019multi}}& Linear & No & Yes & No &$O\left(K^3\log(T)\right)$\\
			\makecell[c]{ESE1\\ \citep{tibrewal2019multiplayer}}& Linear & No & No & Yes &$O\left(\frac{M^2K}{\Delta_{\min}^2}\log(T)\right)$\\
			\makecell[c]{METC\\ \citep{boursier2019practical}} & Linear & Yes & No & Yes &$O\left(\frac{M^3K}{\Delta_{\min}}\log(T)\right)$\\
			\makecell[c]{METC\\ \citep{boursier2019practical}}& Linear & Yes & No & No &$O\left(MK\left(\frac{M^2\log(T)}{\Delta_{\min}}\right)^{1+\iota}\right)$\\
			\makecell[c]{BEACON  \\(this work, Thm.~\ref{thm:general})} & General & No& No & No &$\tilde{O}\left(\frac{MK\Delta_{\max}}{(f^{-1}(\Delta_{\min}))^2}\log(T)\right)$\\
			\makecell[c]{BEACON \\ (this work, Thm.~\ref{thm:linear})} & Linear & No & No & No &$\tilde{O}\left(\frac{M^2K}{\Delta_{\min}}\log(T)\right)$\\
		    \makecell[c]{Lower bound \\\citep{kveton2015tight}} & Linear & N/A & N/A & N/A& $\Omega\left(\frac{M^2K}{\Delta_{\min}}\log(T)\right)$\\
		 	\hline 				
		\end{tabular}
	\end{center}
	\begin{center}
		$\dagger$: tuning parameters in GoT requires knowledge of arm utilities; \\$\kappa, \iota$: arbitrarily small non-zero constants.
	\end{center}
	\label{tbl:regret}
\end{table*} 

\section{Beyond Linear Reward Functions}\label{sec:general}
\subsection{General Reward Functions}\label{subsec:reward}
In this section, we move away from the linear reward functions in almost all prior MP-MAB research, and extend the study to general (nonlinear) reward functions. Two exemplary nonlinear reward functions are given below, with more examples provided in Appendix~\ref{sub_app:example_general}.
\begin{itemize}[leftmargin=*]\itemsep=0pt
	\item Proportional fairness: $V(S,t) = \sum_{m\in[M]}\omega_m\ln(\epsilon+O_{s_m,m}(t))$, where $\epsilon>0$ and $\omega_m>0$ are constants. It promotes fairness among players \citep{mo2000fair};
	\item Minimal: $V(S,t)=\min_{m\in [M]}\{O_{s_m,m}(t)\}$, which indicates the system reward is determined by the least-rewarded player, i.e., the short board of the system;\footnote{Differences with the max-min fairness \citep{bistritz2020my} are elaborated in Appendix~\ref{sub_app:comp_maxmin}.}
\end{itemize}
These reward functions all hold their value in real-world applications, but are largely ignored and cannot be effectively solved by previous approaches. The difficulty introduced by this extension not only lies in the complex mapping from the (unreliable) individual outcomes to system rewards, but also comes from the potential ``coupling'' effect among players (e.g., the minimal reward function). 

To better characterize the problem, the following mild
assumptions are considered.
\begin{assumption}\label{asp:reward}
	There exists an expected reward function $v(\cdot)$ such that $V_{\boldsymbol{\mu},S}:=\Eb[V(S,t)] = v(\boldsymbol{\mu}_S\odot \boldsymbol{\eta}_{S})$, where $\boldsymbol{\eta}_{S} := [\eta_{s_m}(S)]_{m\in [M]}$ and $\boldsymbol{\mu}_S\odot \boldsymbol{\eta}_{S}:= [\mu_{s_m,m}\eta_{s_m}(S)]_{m\in[M]}$.
\end{assumption}
\begin{assumption}[Monotonicity]\label{asp:mono}
	The expected reward function is monotonically non-decreasing with respect to the vector $\boldsymbol{\Lambda}=\boldsymbol{\mu}_S\odot \boldsymbol{\eta}_S$, i.e., if $\boldsymbol{\Lambda}\preceq \boldsymbol{\Lambda}'$, we have $v(\boldsymbol{\Lambda})\leq v(\boldsymbol{\Lambda}')$.
\end{assumption}
\begin{assumption}[Bounded smoothness]\label{asp:bound}
	There exists a strictly increasing (and thus invertible) function $f(\cdot)$ such that $\forall \boldsymbol{\Lambda}, \boldsymbol{\Lambda'},|v(\boldsymbol{\Lambda})- v(\boldsymbol{\Lambda}')|\leq f(\|\boldsymbol{\Lambda}-\boldsymbol{\Lambda}'\|_{\infty})$.
\end{assumption}
Assumption~\ref{asp:reward} indicates that the expected reward $V_{\boldsymbol{\mu},S}$ of matching $S$ is determined only by its expected individual outcomes. It is true for the linear reward function, and also generally holds if distributions $\{\phi_{k,m}\}$ are mutually independent and determined by their expectations $\{\mu_{k,m}\}$, e.g., Bernoulli distribution.  Assumptions~\ref{asp:mono} and \ref{asp:bound} concern the monotonicity and smoothness of the expected reward function, which are natural for most practical reward functions, including the above examples. Similar assumptions have been adopted by \citet{chen2013combinatorial,chen2016combinatorial,wang2018thompson}. 

\subsection{BEACON Adaption and Performance Analysis}\label{subsec:theory_general}
In Section~\ref{subsec:alg_expl}, a combinatorial optimization solver $\texttt{Oracle}(\cdot)$ is implemented for the linear reward function. With ideas from CUCB \citep{chen2013combinatorial}, BEACON can be extended to handle a general reward function with a corresponding solver $\texttt{Oracle}(\cdot)$ that outputs the optimal (non-collision) matching w.r.t. the input matrix $\boldsymbol{\mu}'$, i.e., $S'\gets \texttt{Oracle}(\boldsymbol{\mu}')=\argmax_{S\in \Sc\backslash \Sc_c}V_{\boldsymbol{\mu}',S}$.

With such an oracle, the following theorem provides performance guarantees of BEACON.
\begin{theorem}[\textbf{General reward function}]\label{thm:general}
	Under Assumptions~\ref{asp:reward}, \ref{asp:mono}, and  \ref{asp:bound}, denoting $\Delta^{k,m}_{\max} := V_{\boldsymbol{\mu},*} - \min\{V_{\boldsymbol{\mu},S}|S\in\Sc_b, s_m = k \}$ and $\Delta_c := f(1)$, the regret of BEACON is upper bounded as
	\begin{align*}
	    R(T) &= \tilde{O}\left( \sum_{(k,m)\in[K]\times [M]}\left[\frac{\Delta^{k,m}_{\min}}{(f^{-1}(\Delta^{k,m}_{\min}))^2}+ \int_{\Delta^{k,m}_{\min}}^{\Delta^{k,m}_{\max}} \frac{1}{(f^{-1}(x))^2}\mathrm{d}x\right]\log(T)+ M^2K\Delta_c\log(T) \right)\\
		& = \tilde{O}\left(\sum_{(k,m)\in[K]\times [M]}\frac{\Delta^{k,m}_{\max}\log(T)}{(f^{-1}(\Delta^{k,m}_{\min}))^2}+M^2K\Delta_c\log(T)\right).
	\end{align*}
\end{theorem}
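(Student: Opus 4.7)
The plan is to decompose the regret as $R(T)=R_{\textup{expl}}(T)+R_{\textup{comm}}(T)$, where $R_{\textup{expl}}(T)$ accounts for rewards lost when the Oracle returns a suboptimal matching, and $R_{\textup{comm}}(T)$ accounts for rewards lost during the forced-collision communication phases. Because the batch sizes $2^{p_r}$ grow exponentially, the total number of communication events across all arm-player pairs is $\tilde{O}(MK)$, and because ADC's adaptive quantization keeps each event's expected length bounded by $O(1)$ with each forced-collision bit costing at most $\Delta_c=f(1)$ (by applying Assumption~\ref{asp:bound} with $\|\boldsymbol{\Lambda}-\boldsymbol{\Lambda}'\|_\infty\le 1$), standard accounting yields $R_{\textup{comm}}(T)=\tilde{O}(M^2K\Delta_c\log T)$, which is exactly the second term in the bound.

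The heart of the proof is the exploration term, for which I would extend the CUCB analysis of \citet{chen2013combinatorial} to BEACON's batched and quantized framework. The first ingredient is a validity lemma: with high probability, $\bar{\mu}^r_{k,m}\ge\mu_{k,m}$ and $\bar{\mu}^r_{k,m}-\mu_{k,m}\le 2\sqrt{3\ln t_r/2^{p^r_{k,m}+1}}$. This follows from Hoeffding's inequality applied to $\hat{\mu}^r_{k,m}$ (which uses $2^{p^r_{k,m}}$ i.i.d.\ exploratory samples) together with the observation that the adaptive quantization length $\lceil 1+p^r_{k,m}/2\rceil$ is chosen precisely so that the quantization error $|\tilde{\mu}^r_{k,m}-\hat{\mu}^r_{k,m}|$ is dominated by the confidence radius. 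Next, using Assumptions~\ref{asp:mono} and~\ref{asp:bound} together with the optimality of $S_r$ under $\boldsymbol{\bar{\mu}}_r$, a standard argument shows that whenever the played matching $S_r$ has gap $\Delta>0$, there must exist some arm $(s^r_m,m)\in S_r$ satisfying $2^{p^r_{s^r_m,m}}=O(\log T/(f^{-1}(\Delta/2))^2)$; that is, the regret of epoch $r$ can be attributed to an under-explored ``bottleneck'' arm in $S_r$, which aligns naturally with BEACON's choice $p_r=\min_m p^r_{s^r_m,m}$ as the batch-size exponent.

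The last ingredient is a CMAB-style stratification to convert per-epoch bottleneck bounds into the stated per-arm form. For each pair $(k,m)$, I would partition the collision-free suboptimal matchings containing $(k,m)$ into geometric gap layers in $[\Delta^{k,m}_{\min},\Delta^{k,m}_{\max}]$; within a layer of gap $\Delta$, the number of epochs in which $(k,m)$ is the bottleneck is bounded by the contrapositive of the previous step, contributing loss of order $\Delta\cdot\log T/(f^{-1}(\Delta))^2$. Summing these geometric layers and passing to an integral as in \citet{chen2013combinatorial} reproduces the boundary term $\Delta^{k,m}_{\min}/(f^{-1}(\Delta^{k,m}_{\min}))^2$ plus the integral $\int_{\Delta^{k,m}_{\min}}^{\Delta^{k,m}_{\max}}(f^{-1}(x))^{-2}\,\mathrm{d}x$, each multiplied by $\log T$. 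Summing over $(k,m)$ and combining with $R_{\textup{comm}}(T)$ gives the first form of the bound; the second form follows by uniformly bounding the integrand by its value at $\Delta^{k,m}_{\min}$ and multiplying by $\Delta^{k,m}_{\max}$.

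The main obstacle I expect is the coupling of the three roles played by the single counter $p^r_{k,m}$: it sets the batch size, the Hoeffding radius, and the ADC quantization precision, and all three must remain of order $\sqrt{\log T/2^{p^r_{k,m}}}$ for the UCB property, the bottleneck-charging argument, and the $O(1)$ per-epoch communication loss to coexist. A further subtlety, relative to CUCB in a standard CMAB setting, is that exploratory samples used to form $\hat{\mu}^r_{k,m}$ must be carefully distinguished from pulls incurred during the communication phase, and the counting argument must remain valid when multiple optimal matchings are present, so that epochs in which an optimal matching is explored do not spuriously contaminate the bottleneck attribution for suboptimal ones.
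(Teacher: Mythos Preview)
Your proposal is correct and follows essentially the same route as the paper: the paper likewise splits $R(T)=R_e(T)+R_c(T)+R_o(T)$, bounds $R_c(T)$ via the $O(1)$-per-update ADC length together with the $O(MK\log T)$ count of counter increments (the $M^2K$ factor actually enters through the per-epoch matching/batch-size broadcasts to $M-1$ followers, not the statistics sharing), and bounds $R_e(T)$ by charging each suboptimal epoch to its ``representative'' arm---exactly your bottleneck arm with $p^r_{k,m}=p_r$---then layering over the thresholds $q^{k,m}_n$ defined by $2^{q^{k,m}_n}\asymp \ln T/(f^{-1}(\Delta^{k,m}_n))^2$ and passing to the integral. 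The only omission is the constant-order orthogonalization term $R_o(T)$, which is absorbed in $\tilde{O}(\cdot)$.
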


With a stronger smoothness assumption, we can obtain a clearer exposition of the regret.
\begin{corollary}\label{col:general}
	Under Assumptions~\ref{asp:reward} and \ref{asp:mono}, if there exists $B>0$ such that $\forall \boldsymbol{\Lambda},\boldsymbol{\Lambda}', |v(\boldsymbol{\Lambda})- v(\boldsymbol{\Lambda}')|\leq B\|\boldsymbol{\Lambda}-\boldsymbol{\Lambda}'\|_{\infty}$, it holds that 
	\begin{equation*}
	    R(T) =  \tilde{O}\left(\sum_{(k,m)\in [K]\times [M]}\frac{B^2}{\Delta^{k,m}_{\min}}\log(T)+M^2KB\log(T)\right).
	\end{equation*}
\end{corollary}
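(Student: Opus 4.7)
\textbf{Proof proposal for Corollary~\ref{col:general}.} The plan is to derive this corollary as a direct specialization of Theorem~\ref{thm:general}. First I would observe that the hypothesis $|v(\boldsymbol{\Lambda})-v(\boldsymbol{\Lambda}')|\leq B\|\boldsymbol{\Lambda}-\boldsymbol{\Lambda}'\|_{\infty}$ is exactly Assumption~\ref{asp:bound} with the choice $f(x)=Bx$, which is strictly increasing and therefore invertible with $f^{-1}(y)=y/B$. Hence all three assumptions required by Theorem~\ref{thm:general} are satisfied, so its regret bound applies to the setting of the corollary with $f$ and $f^{-1}$ of this explicit form.

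Next I would substitute $f^{-1}(x)=x/B$ into both summands inside the bracketed term from the integral form of Theorem~\ref{thm:general}. The first summand becomes
\[
\frac{\Delta^{k,m}_{\min}}{(f^{-1}(\Delta^{k,m}_{\min}))^{2}} \;=\; \frac{B^{2}\,\Delta^{k,m}_{\min}}{(\Delta^{k,m}_{\min})^{2}} \;=\; \frac{B^{2}}{\Delta^{k,m}_{\min}},
\]
while the second is evaluated in closed form as
\[
\int_{\Delta^{k,m}_{\min}}^{\Delta^{k,m}_{\max}}\frac{B^{2}}{x^{2}}\,\mathrm{d}x \;=\; B^{2}\left(\frac{1}{\Delta^{k,m}_{\min}}-\frac{1}{\Delta^{k,m}_{\max}}\right)\;\leq\; \frac{B^{2}}{\Delta^{k,m}_{\min}}.
\]
Adding these two contributions and absorbing the constant into the $\tilde{O}(\cdot)$ gives the per-pair term $\tilde{O}(B^{2}\log(T)/\Delta^{k,m}_{\min})$.

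For the additive communication term, I would just note that $\Delta_{c}=f(1)=B$, so $M^{2}K\Delta_{c}\log(T)$ becomes $M^{2}KB\log(T)$. Summing over all arm-player pairs $(k,m)\in[K]\times[M]$ then yields the claimed bound
\[
R(T) \;=\; \tilde{O}\!\left(\sum_{(k,m)\in[K]\times[M]}\frac{B^{2}}{\Delta^{k,m}_{\min}}\log(T)+M^{2}KB\log(T)\right).
\]
There is essentially no obstacle here: the corollary is a routine substitution and the integral computation is elementary. The only mild care point is verifying that the stronger Lipschitz smoothness legitimately instantiates Assumption~\ref{asp:bound} (which it does since linear $f$ is strictly increasing), so that the machinery of Theorem~\ref{thm:general} can be invoked verbatim without revisiting its proof.
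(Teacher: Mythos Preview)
Your proposal is correct and matches the paper's (implicit) approach: the corollary is stated immediately after Theorem~\ref{thm:general} without a separate proof, precisely because it follows by the routine substitution $f(x)=Bx$, $f^{-1}(y)=y/B$, and $\Delta_c=f(1)=B$ that you carry out. Your verification that the Lipschitz hypothesis instantiates Assumption~\ref{asp:bound} and your explicit evaluation of the integral term are exactly the steps needed, and there is nothing further to add.
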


In addition, since the combinatorial optimization problems with general reward functions can be NP-hard, it is more practical to adopt approximate solvers rather than the exact ones \citep{vazirani2013approximation}. To accommodate such needs, we introduce the following definition of $(\alpha,\beta)$-approximation oracle for $\alpha,\beta \in [0,1]$ as in  \citet{chen2013combinatorial,chen2016combinatorial2, chen2016combinatorial,wang2017improving}:
\begin{definition}
    With a matrix $\boldsymbol{\mu}'=[\mu'_{k,m}]_{(k,m)\in[K]\times [M]}$ as input, an $(\alpha,\beta)$-approximation oracle outputs a matching $S'$, such that $\Pb[V_{\boldsymbol{\mu}',S'}\geq \alpha\cdot V_{\boldsymbol{\mu}',*}]\geq \beta$, where $V_{\boldsymbol{\mu}',*}=\max_{S\in \Sc}V_{\boldsymbol{\mu}',S}$.
\end{definition}
With only an approximate solver, it is no longer fair to compare the performance against the optimal reward. Instead, as in the CMAB literature \citep{chen2013combinatorial,chen2016combinatorial2, chen2016combinatorial,wang2017improving}, an $(\alpha,\beta)$-approximation regret is considered: $R_{\alpha,\beta}(T) = T\alpha\beta V_{\boldsymbol{\mu},*}-\Eb[\sum\nolimits_{t=1}^T V(S(t),t)]$,
where the performance is compared to the $\alpha\beta$ fraction of the optimal reward. As shown in Appendix~\ref{app:alpha_beta}, for this $(\alpha,\beta)$-approximation regret, an upper bound similar to Theorem~\ref{thm:general} can be obtained.

\section{Experiments}\label{sec:exp}
In this section, BEACON is empirically evaluated with both linear and general (nonlinear) reward functions. {All results are averaged over $100$ experiments and the utilities follow mutually independent Bernoulli distributions. Additional experimental details, empirical algorithm enhancements and more experimental results (e.g., with a large game), can be found in Appendix~\ref{app:exp}.} 

\textbf{Linear Reward Function.} 
BEACON is evaluated along with the centralized CUCB \citep{chen2013combinatorial} and the state-of-the-art decentralized algorithm METC \citep{boursier2019practical}. 
The decentralized GoT algorithm \citep{bistritz2018game} is also evaluated but its regrets are over $100\times$ larger than those of BEACON, and thus is omitted in the plots. 
Fig.~\ref{fig:linear} reports results under the same instance in \citet{boursier2019practical} with $K=5, M=5$. Although this is a relatively hard instance with multiple optimal matchings and small sub-optimality gaps, BEACON still achieves a comparable performance as CUCB, and significantly outperforms METC: an approximate $7 \times $ regret reduction at the horizon. 

To validate whether this significant gain of BEACON over METC is representative, we plot in Fig.~\ref{fig:linear_random} the histogram of regrets with $100$ randomly generated instances still with $M = 5, K =5, T = 10^6$. Expected arm utilities are uniformly sampled from $[0,1]$ in each instance. It can be observed that the gain of BEACON is very robust -- its average regret is approximately $6\times$ lower than METC.

\begin{figure*}[t]
    \setlength{\abovecaptionskip}{-1pt}
	\centering
	\subfigure[Linear, cumul. regret.]{ \includegraphics[width=0.4\linewidth]{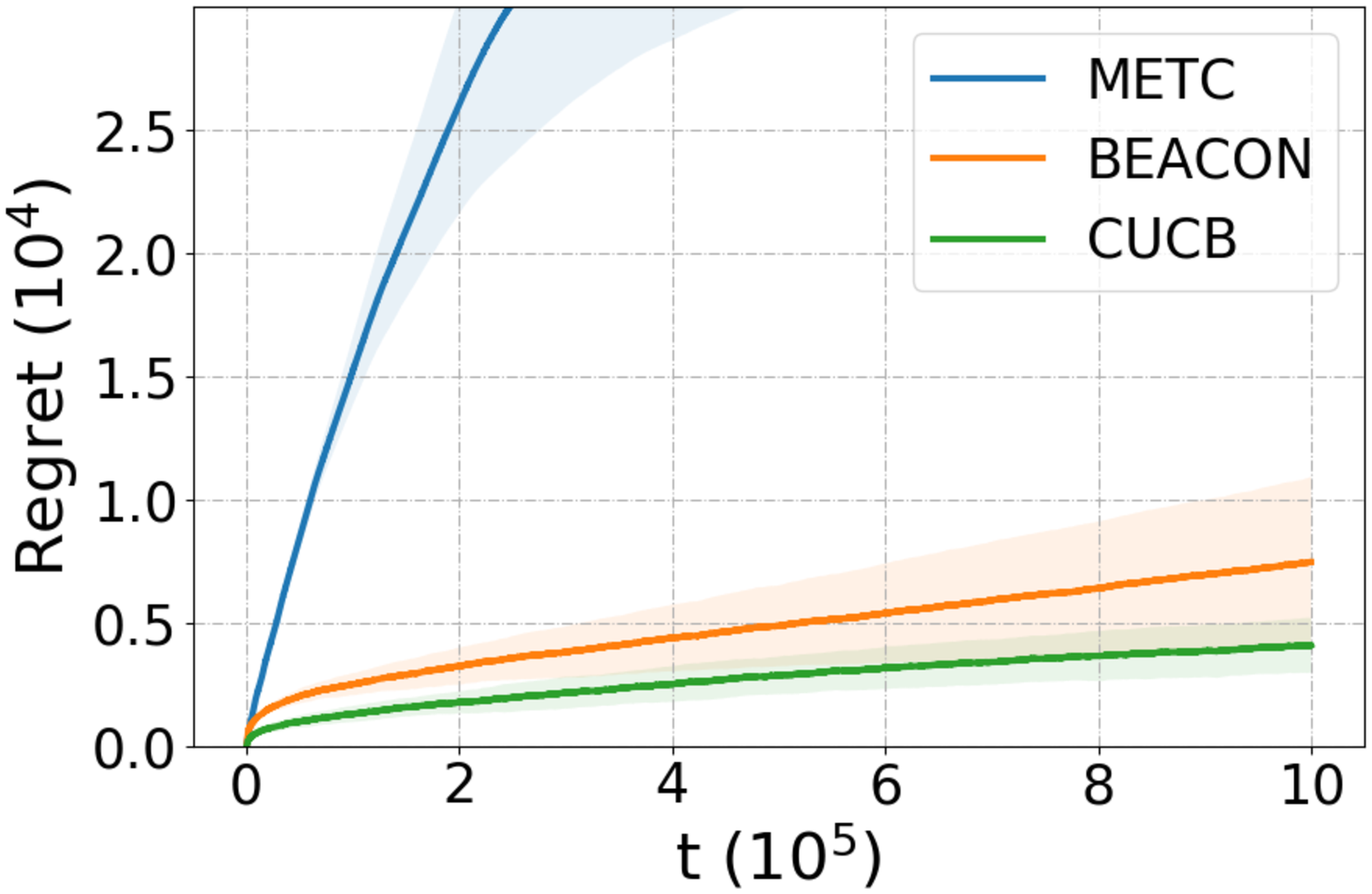}\label{fig:linear}}
	\subfigure[Linear, regret histo.]{ \includegraphics[width=0.4\linewidth]{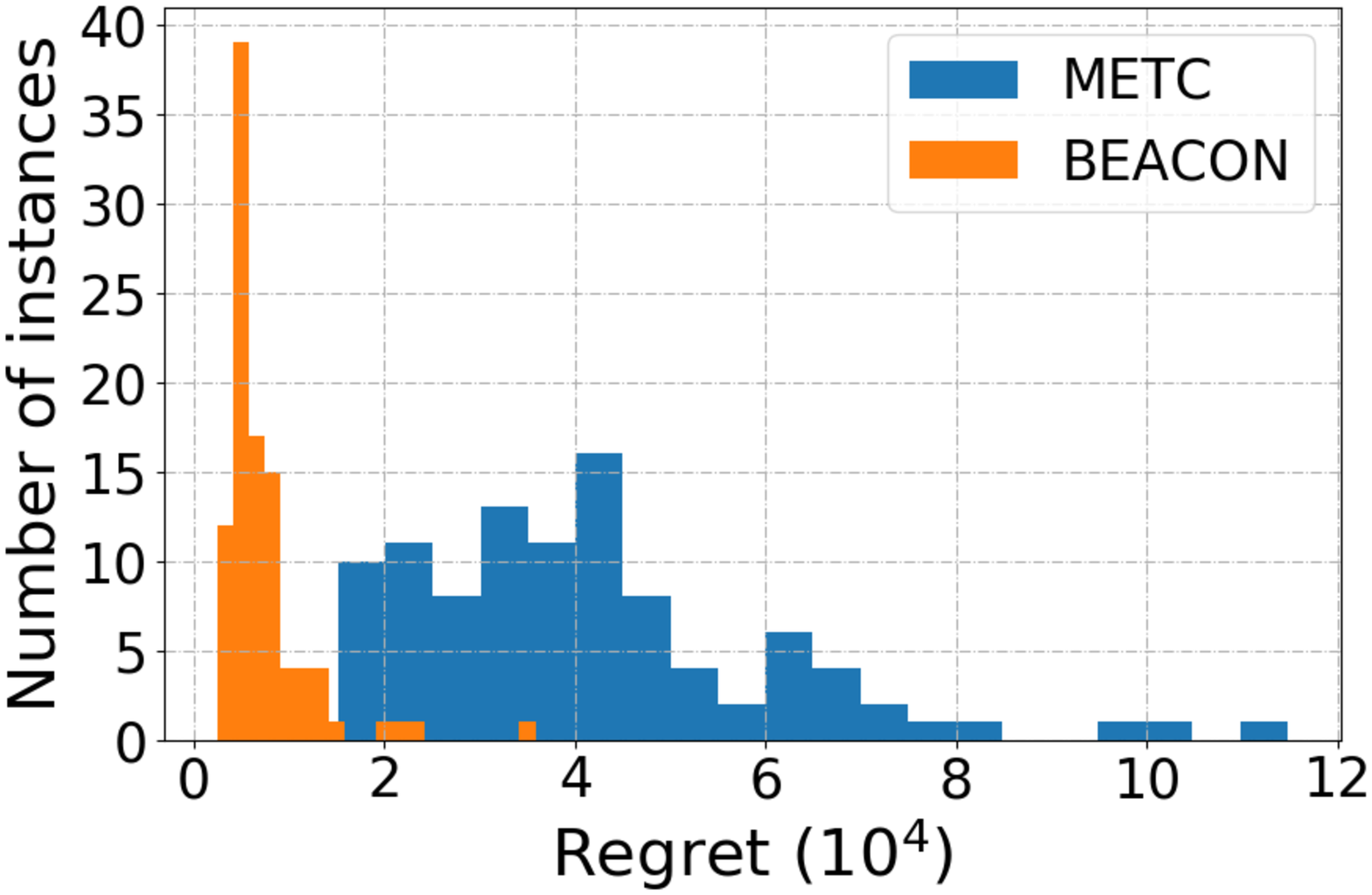}\label{fig:linear_random}}\\
	\subfigure[Proportional fairness.]{ \includegraphics[width=0.4\linewidth]{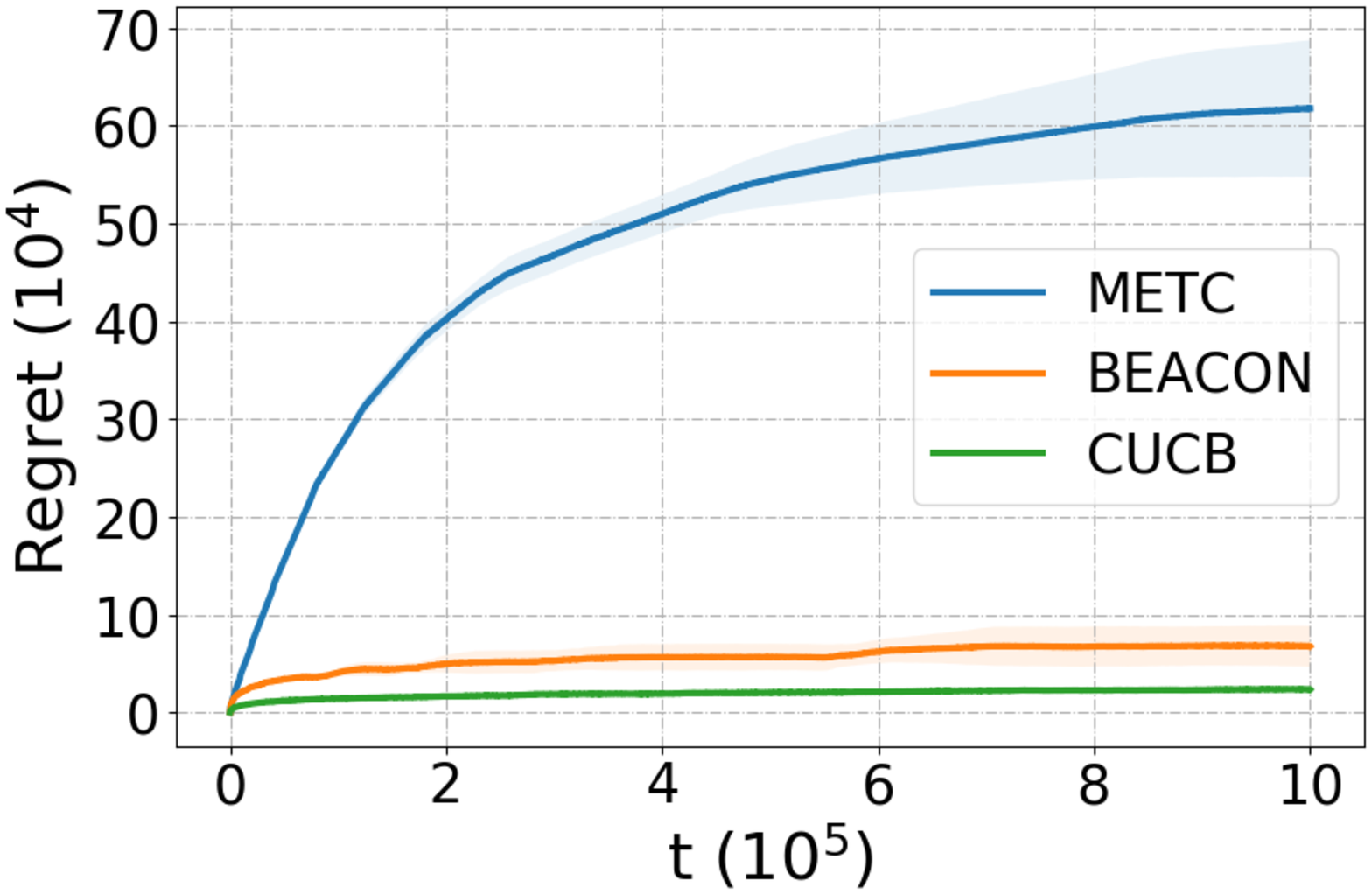}\label{fig:propotional}}
	\subfigure[Minimal.]{ \includegraphics[width=0.4\linewidth]{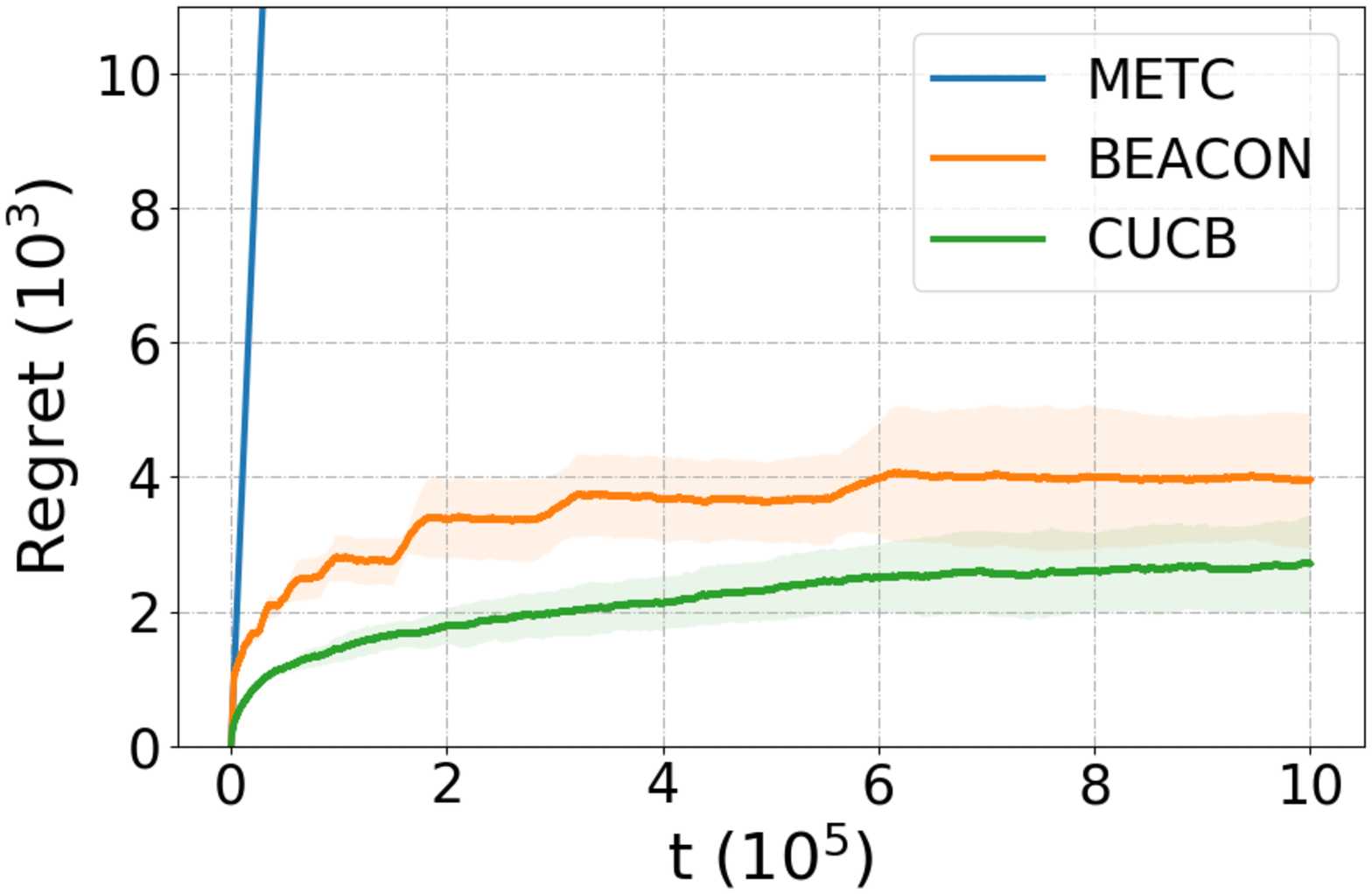}\label{fig:shortboard}}
	\caption{Regret comparisons. The continuous curves represent the empirical average values, and the shadowed areas represent the standard deviations.
	(a), (c) and (d) are evaluated with specific game instances, and (b) is the regret histogram of $100$ randomly generated instances.
	}
	\label{fig:performance}
\end{figure*}

\textbf{General Reward Function.} 
Two representative nonlinear reward functions are used to evaluate BEACON: (1) the proportional fairness function with $\forall m\in [M], \omega_m = 1, \epsilon = 10^{-2}$; (2) the minimal function. BEACON is compared with CUCB and METC.\footnote{To make meaningful comparisons, non-trivial adjustments and enhancements have been applied to METC, which originally applies only to the linear reward function. Details are given in Appendix~\ref{sub_app:metc_enhance}.} Under a game instance with $M = 6, K = 8$, Fig.~\ref{fig:propotional} reports the regrets under the proportional fairness function, and Fig.~\ref{fig:shortboard} with the minimal function. From both results, it can be observed that BEACON has slightly larger (but comparable) regrets than the centralized CUCB, while significantly outperforming METC.

To summarize, BEACON not only significantly outperforms state-of-the-art decentralized MP-MAB algorithms, but is also capable of {\em empirically} approaching the centralized performance, which is the first time for a decentralized heterogeneous MP-MAB algorithm to the best of our knowledge.

\section{Discussions}\label{sec:diss}
We briefly summarize the novel theoretical contributions of this work:
\begin{itemize}[leftmargin=*]\itemsep=0pt
	\item \textbf{Closing the regret gap.} With the linear reward function, BEACON can approach (both problem-dependent and problem-independent) centralized lower bounds. To the best of our knowledge, this is the first time such performance gap is closed (scaling wise) for the heterogeneous MP-MAB.
	\item \textbf{Broader applicability.} BEACON can handle a broad range of general reward functions with a regret of $O(\log(T))$, while existing algorithms mostly focus on the linear reward function and their analyses do not apply to the general reward functions. To the best of our knowledge, this is the first time general reward functions are studied in decentralized MP-MAB.
	\item \textbf{Fewer assumptions.} BEACON achieves a strictly $O(\log(T))$-regret without any assumptions or prior knowledge of the game instance, while prior MP-MAB algorithms typically rely on additional assumptions or knowledge; see Table~\ref{tbl:regret} for details.
\end{itemize}

In addition to these tangible contributions, this work also demonstrates the benefit of incorporating CMAB techniques in the study of MP-MAB. In this paper, both the BEACON design and its regret analysis benefit from CMAB, especially CUCB \citep{chen2013combinatorial,kveton2015tight}. While these two sub-fields of MAB are largely considered disjoint, this work shows that the underlying connection is rather fundamental. This revelation may open up interesting future research directions. For example, under the structure of BEACON, it is conceivable to introduce more advanced CMAB algorithms, e.g., ESCB \citep{combes2015combinatorial}, into the study of MP-MAB with additional assumptions on the arm dependence. In another direction, ideas from this work may also contribute to the study of CMAB. For example, due to the batched structure, BEACON only accesses the oracle $O(\log(T))$ times over $T$ steps, which is more computational efficient than the $O(T)$ times access in CUCB.

Besides contributions, there are open questions left for future studies. First, BEACON relies on a centralized combinatorial optimization solver, i.e., $\texttt{Oracle}(\cdot)$, and so do \citet{boursier2019practical,tibrewal2019multiplayer}. While being a reasonable requirement, this oracle might be computational-infeasible for some applications, e.g., with Internet-of-Things (IoT) devices, especially when $M$ and $K$ are large. Also, while the oracle allows general analysis, it also decouples the problem into two disconnected parts: combinatorial optimization and bandits. It might be helpful to tailor the algorithm into one specific reward function, where joint designs over these two parts can be performed. \shir{Furthermore, it would be interesting to investigate the non-cooperative setting as in \citet{boursier2020selfish}, where we believe the design ideas in this work can still be of use, especially ADC.}

\section{Related Works}\label{sec:related}
\textbf{Decentralized MP-MAB.} Since \citet{liu2010distributed}, most MP-MAB works consider the homogeneous variant with player-independent arm utilities \citep{avner2014concurrent,rosenski2016multi,besson2018multi}. With implicit communications, \citet{boursier2019sic,proutiere2019optimal} prove regrets that approach the centralized ones. The homogeneous variant is fairly well understood by now. The heterogeneous MP-MAB problems \citep{kalathil2014decentralized,nayyar2016regret} with player-dependent arm utilities, on the other hand, remain largely open. The recent attempts have been summarized in Table~\ref{tbl:regret}, whose regrets are far from the (natural) centralized lower bound. Note that a similar idea of adaptive quantization is applied by \citet{boursier2019practical}, but the differential communication part in ADC is entirely novel and more critical to the overall performance.

All the aforementioned works are confined to the linear reward model. To the best of our knowledge, this work is the first to study general reward functions. Fairness is considered in \citet{bistritz2020my}  but with major differences elaborated in Appendix~\ref{sub_app:comp_maxmin}. Other MP-MAB variants, including ``stable'' allocations \citep{avner2016multi,darak2019multi}, no-sensing \citep{lugosi2018multiplayer, Shi2020aistats, bubeck2020coordination,bubeck2020cooperative}, and adversarial \citep{alatur2020multi,bubeck2020non,shi2020no}, fall out of our scope.

\textbf{Combinatorial MAB. } Since first presented by \citet{chen2013combinatorial}, many variants of stochastic CMAB have been investigated \citep{kveton2014matroid, kveton2015cascading}. Some recent works have also introduced Thompson Sampling into CMAB \citep{wang2018thompson,perrault2020statistical}. The study of lower bounds in CMAB has been active, e.g., for the linear reward function with correlated arms \citep{kveton2015combinatorial,degenne2016combinatorial} and independent arms \citep{combes2015combinatorial}. Recent attempts on lower bounds for general reward functions are reported by \citet{merlis2020tight}.

\shir{As illustrated in the design of BEACON, the decentralized MP-MAB model is closely related to CMAB, while these connections are largely ignored in the previous works. With more details presented in Appendix~\ref{app:similarity}, we here briefly note that in some sense, MP-MAB can be thought of as a decentralized version of CMAB, and this decentralized nature leads to additional challenges with collision-avoidance and information sharing. }

\section{Conclusion}\label{sec:cls}
In this work, we first investigated decentralized heterogeneous MP-MAB problems with linear reward function and proposed the BEACON algorithm. A novel adaptive differential (implicit) communication approach was designed and a batched structure was carefully crafted to incorporate the exploration principles from CUCB. With these novel ideas, BEACON achieved regrets that not only improve all prior regret bounds but in fact approach the centralized lower bound for the first time in the study of decentralized heterogeneous MP-MAB. Then, we extended the study to general reward functions and showed that BEACON can still obtain a regret of $O(\log(T))$ with simple modifications. Experimental results demonstrated that the gain of BEACON does not exist just in the theoretical analysis -- significant gains over state-of-the-art decentralized algorithms and achieving a comparable performance with the centralized benchmark have been empirically established. 

BEACON has demonstrated the intimate connection between MP-MAB and CMAB, two largely disjoint sub-fields of the MAB research. It is our hope that this work sparks future interest in investigating this fundamental connection and improving existing algorithms in both areas.

\bibliography{comb_bandit,bandit}
\bibliographystyle{apalike}

\newpage
\appendix

\allowdisplaybreaks

\section{MP-MAB and CMAB}\label{app:similarity}
The formulation of the MP-MAB model in the main paper shares several similarities with the CMAB model \citep{chen2013combinatorial,chen2016combinatorial,kveton2015tight}. However, these connections are largely ignored and unexplored in the previous literature, and we elaborate their similarities and differences here. First, the $K$ arms with different utilities for $M$ players can be equivalently interpreted as $MK$ base arms in the CMAB model. The matching set $\Sc$ can be viewed as one special set of super arms in CMAB, where each super arm is of size $M$ and must contain one arm from each player's $K$ arms. Furthermore, the semi-bandit feedback in CMAB assumes that observations from pulled arms are observable instead of the entire reward function, which is similar to the collision-sensing feedback discussed in the main paper. At last, the definition of reward function and regret also fit in the CMAB framework.

The key differences between MP-MAB and CMAB are in the structure of decentralized players. In CMAB, there is one centralized agent who decides all the actions and gets all the observations. However, MP-MAB is a decentralized setup where each player makes her own decisions and gets her own observations. From the perspective of decision making, the centralized configuration is more efficient as it will naturally choose the collision-free matchings. On the other hand, collision-avoidance is much harder in MP-MAB due to the decentralized decision making. To be more specific about the difference regarding the feedback, at time $t$, the centralized agent in CMAB makes decision based on the entire history $H(t) = \left\{s_m(\tau),O_{s_m(\tau),m}(\tau)\right\}_{m\in [M],1\leq \tau\leq t-1}$, while player $m$ in MP-MAB makes decision with her individual history $H_{m}(t) = \left\{s_m(\tau),O_{s_m(\tau),m}(\tau), \eta_{s_m(\tau)}(S(\tau))\right\}_{1\leq \tau\leq t-1}$. Obviously, information contained in $H_m(t)$ is more limited than that in $H(t)$. Note that $ \left\{\eta_{s_m(\tau)}(S(\tau))\right\}_{1\leq \tau\leq t-1}$ is omitted in $H(t)$ since it can be directly inferred by the centralized agent. Thus, MP-MAB can be viewed as a decentralized version of CMAB to some extent.

\section{Algorithmic Details of BEACON}\label{app:comm}
Some omitted algorithmic details of BEACON are presented in this section. 

\subsection{Orthogonalization Procedure}\label{sub_app:orthogonal}
In the orthogonalization (sometimes also referred to as the initialization) procedure, players estimate the number of players in the MP-MAB game and obtain distinct indices in a fully distributed manner. The initialization technique from \cite{proutiere2019optimal} is adopted in BEACON. It consists of two sub-phases: orthogonalization and rank assignment. The orthogonalization sub-phase aims at assigning each player with a unique external rank $k\in[K]$. It contains a sequence of blocks with length $K+1$, where each player attempts to fixate on arms without collision at first time step and states of fixation (successful or not) are broadcast (enabled by implicit communication). Note that in the original scheme \citep{proutiere2019optimal}, the broadcast is performed on the reserved arm $K$, which results in the need of $K>M$. To accommodate the scenarios with $K=M$, the broadcast can take place sequentially on arm $1$ to arm $K$. In the rank assignment sub-phase, a modified Round-Robin sequential hopping scheme helps the players convert their external ranks to internal ranks $m\in[M]$ and estimate the overall number of players $M$. Detailed algorithms can be found in \cite{proutiere2019optimal}. Using the same proofs in Lemma 1 and Lemma 2 in \cite{proutiere2019optimal}, we have the following performance characterization.
\begin{lemma}\label{lem:regret_init}
	The expected duration of the orthogonalization procedure in BEACON is less than $\frac{K^2M}{K-M}+2K$ time steps. Once the procedure completes, all players correctly learn the number of players $M$ and each of them is assigned with a unique index between $1$ and $M$.
\end{lemma}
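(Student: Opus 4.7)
The plan is to decompose the expected duration of the orthogonalization procedure into its two constituent sub-phases and analyze each separately, closely following the arguments of Lemmas~1 and 2 of \citet{proutiere2019optimal}, with the minor adaptation needed for the sequential-broadcast variant used to handle $K=M$. The total bound $\frac{K^2M}{K-M}+2K$ will then arise as the sum of an orthogonalization-sub-phase bound of $\frac{K^2M}{K-M}$ and a rank-assignment-sub-phase bound of $2K$; correctness of the learned quantities (the number of players $M$ and the internal indices) will follow from the structural properties of the two sub-phases.

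For the orthogonalization sub-phase, I would view the process as a sequence of blocks of length $K+1$ during which every not-yet-fixated player samples an arm uniformly in $[K]$ on the first step of the block and retains that arm if and only if no collision is reported, with the remaining $K$ steps used by each player to broadcast (and observe) fixation status. Letting $j$ denote the number of unfixated players at the start of a block, I would lower bound the probability that a tagged unfixated player fixates in that block by
\[
\frac{K-M+j}{K}\left(1-\frac{1}{K}\right)^{j-1},
\]
since the chosen arm must avoid both the $M-j$ already-fixated arms and the $j-1$ other unfixated players' random choices. A standard geometric-tail argument across $j=M,M-1,\ldots,1$ (as in the proof of Lemma~1 of \citet{proutiere2019optimal}) then bounds the expected number of blocks by $\frac{KM}{K-M}$ when $K>M$, and multiplication by the block length $K+1$ yields the stated orthogonalization bound $\frac{K^2M}{K-M}$ (absorbing a lower-order term into the $2K$ contribution).

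For the rank-assignment sub-phase, I would argue that once the external ranks are injective, the modified sequential-hopping Round-Robin schedule is deterministic and terminates within $2K$ steps. In that window each player probes every arm position exactly once (accounting for the factor of two due to the shift used to separate ``listen'' from ``transmit'' slots) and records the collision pattern; because the external ranks are a subset of $[K]$ of size $M$, the observed pattern uniquely encodes the set of occupied external ranks, from which every player recovers $M$ by a count and converts her external rank to an internal rank in $[M]$ by its position in the sorted list of occupied ranks. This establishes both the $2K$ additive term and the correctness claim, conditional on the success of the first sub-phase.

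The main obstacle will be verifying that the sequential-broadcast modification used when $K=M$—replacing the single reserved-arm broadcast of \citet{proutiere2019optimal} by a rotation over arms $1,\ldots,K$—does not alter either the per-block fixation probability in the first sub-phase or the injectivity of the collision-pattern decoding in the second. Once this invariance is checked, the two sub-phase bounds sum to the claimed total, and the correctness statements follow from the analysis of the rank-assignment sub-phase.
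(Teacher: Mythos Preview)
Your proposal takes essentially the same approach as the paper: the paper does not give an independent proof of this lemma but simply states that it follows from the proofs of Lemmas~1 and~2 in \citet{proutiere2019optimal}, which is exactly the decomposition into orthogonalization and rank-assignment sub-phases that you outline. One minor caution: your step of ``absorbing a lower-order term into the $2K$ contribution'' is not quite right as written, since $(K+1)\cdot\frac{KM}{K-M}-\frac{K^2M}{K-M}=\frac{KM}{K-M}$ is not dominated by $2K$ when $K-M$ is small, so you should verify the exact constants against the cited reference rather than rely on that hand-wave.
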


\subsection{Detailed Communication Protocols}\label{subsec:supp_comm}
In this section, more details of the communication design are presented. First, as illustrated in Section~\ref{subsec:alg_comm}, the implicit communications are performed by having the ``receive'' player sample one arm and the ``send'' player either pull (create collision; bit $1$) or not pull (create no collision; bit $0$) the same arm to transmit one-bit information. Other players that are not communicating would fixate on other arms to avoid interruptions.  The arm(s) that the players pull for receiving or avoiding are referred to as ``communication arm(s)'', which is an arm-player matching and is assigned before the communication happens. In BEACON, the matching of communication arms for epoch $r>1$ is chosen as the exploration matching in the previous epoch, i.e., $S_{r-1}$. The benefit of this choice is that with the increasing explorations, $S_{r-1}$ would gradually become near-optimal with a high probability, which also leads to smaller communication losses.  Specifically,  in epoch $r$, follower $m>1$ (resp. the leader) communicates to the leader (resp. follower $m>1$) by either pulling or not pulling arm $s^{r-1}_1$ (resp. arm $s^{r-1}_m$), while the leader (resp. the follower $m$) stays on arm $s^{r-1}_1$ (resp. arm $s^{r-1}_m$) during receiving. To make this happen, in addition to the knowledge of index $s^{r-1}_m$ which is assigned to follower $m$ for explorations, index $s^{r-1}_1$ should also be communicated to the followers in the communication phase of epoch $r-1$.

Then, as illustrated in Section~\ref{subsec:alg_comm}, there are three kinds of information to be communicated, which are separately discussed in the following.

\textbf{Arm statistics.} The main idea of the adaptive differential communication (ADC) design is illustrated in Section~\ref{subsec:alg_comm}. However, two important ingredients are missing. The first is when follower $m$ quantizes the arm statistics $\tilde{\mu}^r_{k,m}$ from the collected sample mean $\hat{\mu}^r_{k,m}$ using $\lceil 1+ p^r_{k,m}/2\rceil$ bits. The least significant bit (LSB) is always ceiled to $1$ if $\lceil 1+ p^r_{k,m}/2\rceil$ bits cannot fully represent $\hat{\mu}^r_{k,m}$. We refer such process of quantizing $\tilde{\mu}^r_{k,m}$ as $\texttt{ceil}( \hat{\mu}^r_{k,m} )$ with $\lceil 1+ p^r_{k,m}/2\rceil$ bits. This process is needed for the later theoretical analysis to have $\tilde{\mu}^r_{k,m}\geq \hat{\mu}^r_{k,m}$.

The second missing component in ADC is referred to as the \textbf{signal-then-communicate} approach. The purpose of this approach is to synchronize the communication order and communication duration among players. It consists of two parts: the leader would first create a collision on the follower's communication arm to indicate the beginning of her statistics sharing; then, since the length of non-zero LSB at the end of $\delta^r_{k,m}$ is not fixed, after receiving the start signal, the follower $m$ would take the following approach to transmit $L$ bits ($L$ is however unknown to the leader), in which creating no collision indicates there are more bits to transmit while creating collision means the end of transmission:
\begin{align*}
    &\text{collision: start signal} \to \text{no collision} \to \text{one information bit} \to \cdots \\
    &\to \text{no collision} \to \text{one information bit} \to \text{collision: end signal}.
\end{align*}
Using no collision as an indicator also reduces the practical communication loss, as it avoids creating collisions during communications. In summary, with this signal-to-communicate approach, the original $L$-bits information of arm statistics would require no more than $(2L+2)$-bits.

\textbf{The chosen matching and leader's communication arm. } In epoch $r$, the leader needs to notify follower $m$ of both $s^r_m$ (for exploration) and $s^r_1$ (for communication in the next epoch). Similar to sharing arm statistics, the leader has to initiate the communication with a specific follower by creating a collision. Since both arm indices can be communicated via a fixed length of $\lceil\log_2(K)\rceil$ bits, they can be directly transmitted without using no-collisions to synchronize. Thus, with $K$ arms for each player, this part of communication can be done in $2\lceil\log_2(K)\rceil+1$ bits for each follower.

\textbf{Batch size. } A naive idea to transmit the batch size $p_r$ is to directly notify the followers of this number. However, the value of $p_r$ is at most $O(\log(T))$, which requires $O(\log\log(T))$ bits. With at most $O(\log(T))$ epochs of communication, directly sharing $p_r$ may lead to a dominating regret. Luckily, sharing $p_r$ only serves to let players explore the same length, which can be achieved by a much simpler and more efficient \textbf{stop-upon-signal} approach. Specifically, while $p_r$ is calculated by the leader, rather than broadcasting it to the followers via implicit collisions, she counts the exploration length herself and creates a collision on the exploration arm of each follower upon the end of exploration in this epoch. Upon perceiving collisions, followers become aware that the current exploration phase has ended.

\subsection{Algorithm for Followers}\label{sub_app:follower}
The detailed algorithm for the follower $m$ is presented in Algorithm.~\ref{alg:follower}.

\begin{algorithm}[htb]
	\caption{BEACON: Follower $m$}
	\label{alg:follower}
	\begin{algorithmic}[1]
		\State Set epoch counter $r\gets 0$; arm counter $[p^r_{k,m}]_{k\in[K]}\gets 0$; sample time $[T^r_{k,m}]_{k\in[M]}\gets 0$; communicated statistics $[\tilde{\mu}^r_{k,m}]_{k\in[M]}\gets 0$
		\State In order $k\in [K]$, play arm $[(m-1+k) \text{ mod } K]$ once  and update sample time $T^{r+1}_{k,m}\gets T^r_{k,m}+1$
		\While{not reaching the time horizon $T$}
		\State $r\gets r+1$
		\State $\forall k\in[K], p^r_{k,m}\gets \left\lfloor\log_2(T^r_{k,m})\right\rfloor$
		\State Update $\hat{\mu}^{r}_{k,m}$ as the sample mean from the first $2^{p^r_{k,m}}$ exploratory samples from arm $k$
		\Statex $\triangleright$ \textit{Communication Phase}
		\For{$k\in [K]$} 
		\If{$p^r_{k,m}>p^{r-1}_{k,m}$}
		\State {$\tilde{\mu}^r_{k,m} \gets \texttt{ceil}(\hat{\mu}^r_{k,m})$ with $\lceil 1+ p^r_{k,m}/2\rceil$} bits
		\State $\tilde{\delta}^r_{k,m}\gets \tilde{\mu}_{k,m}^r - \tilde{\mu}_{k,m}^{r-1}$
		\State $\texttt{Send}(\tilde{\delta}^r_{k,m}, 1)$
		\Else
		\State $\tilde{\mu}_{k,m}^r\gets \tilde{\mu}_{k,m}^{r-1}$
		\EndIf
		\EndFor
		\State $s^r_m \gets \texttt{Receive}(s^r_m,1)$
		\Statex $\triangleright$ \textit{Exploration Phase}
		\State Play arm $s^r_m$ until signaled
		\State Update $T^{r+1}_{s^r_m,m}\gets T^r_{s^r_m,m}+2^{p_r}$
		\EndWhile
	\end{algorithmic}
\end{algorithm}

\subsection{Sending and Receiving Protocols}\label{sub_app:send_receive}
The $\texttt{Send()}$ and $\texttt{Receive()}$ functions in Algorithms~\ref{alg:leader} and \ref{alg:follower} denote the protocols of sending and receiving information via forced collisions. In order to make this work self-contain, these two functions are illustrated in Algorithms~\ref{alg:send} and \ref{alg:receive}, while a more detailed illustration of the implicit communication approach can be found in \citet{boursier2019sic}. We further note that to better expose the sending and receiving structure, Algorithms~\ref{alg:send} and \ref{alg:receive} contain the key ideas in implicit communications, but omit some detailed protocols, e.g., the signal-then-communicate approach.

	\begin{algorithm}[htb]
	    \caption{$\texttt{Send()}$ for Player $m$}
	    \label{alg:send}
	    \begin{algorithmic}[1]
		\Require bit string $\boldsymbol{u} = [u_1, u_2, ..., u_{l}]$ with length $l$, receiver index $n$
		\State Initialization: player $m$'s communication arm $c_m$, player $n$'s communication arm $c_n$
		\For{$i = 1, 2, \cdots , l$}
		\If{$u_i=1$}
		\State Pull arm $c_n$ \Comment{collision for bit $1$}
		\Else
		\State Pull arm $c_m$ \Comment{no collision for bit $0$}
		\EndIf
		\EndFor
		\end{algorithmic}
	\end{algorithm}
	\begin{algorithm}[htb]
	    \caption{$\texttt{Receive()}$ for Player $n$}
	    \label{alg:receive}
	    \begin{algorithmic}[1]
		\Require bit string $\boldsymbol{u}'$ with length $l$, sender index $m$
		\State Initialization: player $n$'s communication arm $c_n$
		\For{$i = 1, 2, \cdots , l$}
		\State Pull arm $c_n$
		\If{ collision}
		\State $u'_i \gets 1$ \Comment{collision for bit $1$}
		\Else
		\State $u'_i \gets 0$ \Comment{no collision for bit $0$}
		\EndIf
		\EndFor
		\Ensure $\boldsymbol{u}'$
		\end{algorithmic}
	\end{algorithm}

\section{Reward Functions}
\subsection{Additional Examples}\label{sub_app:example_general}
Other than the proportional fairness function and minimal reward function gliven in the main paper, the following general (nonlinear) reward functions are also commonly adopted in real-world applications:
\begin{itemize}[leftmargin=*]\itemsep=0pt
    \item \textbf{Threshold}: $V(S,t) = \sum_{m\in[M]}\mathds{1}\left\{O_{s_m,m}(t)\geq \varphi_m\right\}$, where $\varphi_m$ is a player-dependent threshold. It characterizes the need of reaching certain thresholds, e.g., quality-of-service requirements, in cognitive radio systems;
	\item \textbf{Video quality-rate model}: $V(S,t) = \sum_{m\in[M]}U_m(O_{s_m,m}(t))$, where $U_m(O_{s_m,m}(t))$ is a piece-wise linear concave function on $[0,1]$ with decreasing slopes. It is typically used to describe video quality, and illustrates the decreasing of marginal utility with increased allocated resources;
	\item \textbf{Top-$L$ utility}: $V(S,t) = \max\left\{\sum_{m\in \mathcal{L}}O_{s_m(t),m}(t)|\mathcal{L} = [m_1,...,m_L]\subseteq [M], |\mathcal{L}|=L\right\}$, which features the highest sum of observations from any $L$ players.
\end{itemize}

\subsection{Comparison with Max-Min Fairness in \cite{bistritz2020my}}
\label{sub_app:comp_maxmin}
In \cite{bistritz2020my}, fairness is considered among the players in MP-MAB with a specific ``Max-Min'' fairness measure, which shares some similarities with the minimal reward function considered in this work but with major differences discussed in the following. 

\textbf{Reward function of \cite{bistritz2020my}.} The instantaneous system reward gained by the players of playing matching $S$ at time $t$ in \cite{bistritz2020my} is defined as
\begin{align*}
    V'(S,t) = \min_{m\in[M]}\left\{\Eb\left[O_{s_m,m}(t)\right]\right\} = \min\left\{\boldsymbol{\mu}_{S}\odot \boldsymbol{\eta}_S\right\},
\end{align*}
where expectations have already been taken {\em inside} the minimal function. To be consistent with the notation of this paper, the corresponding expected system reward of \cite{bistritz2020my} can be written as
\begin{align}\label{eqn:maxmin}
    V'_{\boldsymbol{\mu},S} = \Eb\left[V'(S,t) \right] =  \min_{m\in[M]}\left\{\boldsymbol{\mu}_{S}\odot \boldsymbol{\eta}_S\right\} = \min\left\{\boldsymbol{\mu}_{S}\odot \boldsymbol{\eta}_S\right\} = V'(S,t),
\end{align}
which does not differ from the instantaneous reward and remains the same with different utility distributions. 

\textbf{Reward function of this paper.} However, for the minimal reward function defined in this work, the instantaneous reward is 
\begin{align*}
    V(S,t) = \min_{m\in[M]}\left\{O_{s_m,m}(t)\right\},
\end{align*}
which is determined entirely by the instantaneously realized observations of players and does not incorporate any form of expectation. Further, the expected system reward is
\begin{align*}
    V_{\boldsymbol{\mu},S} = \Eb\left[V(S,t)\right] = \Eb\left[\min_{m\in[M]}\left\{O_{s_m,m}(t)\right\}\right],
\end{align*}
which does not have a uniform expression for different utility distributions. 

\textbf{Illustration of the differences.} The differences can be illustrated more clearly by assuming that the utility distributions are mutually independent Bernoulli distributions, i.e., $\phi_{k,m} = \text{Bernoulli}(\mu_{k,m})$, where $\mu_{k,m}\leq 1$ here is the probability that utility $1$ is generated by arm $(k,m)$. Then, the expected system reward function of \cite{bistritz2020my} and this work are shown in the following, respectively:
\begin{align*}
    \text{Max-Min fairness in \cite{bistritz2020my}: }&V'_{\boldsymbol{\mu},S} = \min_{m\in[M]}\{\mu_{s_m,m}\};\\
    \text{Minimal reward function in this work: }&V_{\boldsymbol{\mu},S} = \prod_{m\in[M]}\mu_{s_m,m}.
\end{align*}

Although the Max-Min fairness measure has several distinctions with the minimal reward function, its expected system reward function in Eqn.~\eqref{eqn:maxmin} also satisfies Assumptions~\ref{asp:reward}--\ref{asp:bound}. Thus, if we directly take Eqn.~\eqref{eqn:maxmin} as the expected sysmtem reward function (without explicitly defining the instantaneous reward function),  both the design and analysis of BEACON are applicable to the Max-Min fairness setting in \cite{bistritz2020my}. In this sense,  \cite{bistritz2020my} studied a special case of the general framework proposed in this work. Furthermore, since Theorem~\ref{thm:general} holds for this special case, this work improves the $O(\log\log(T)\log(T))$ regret provided by \cite{bistritz2020my} into a strictly $O(\log(T))$ regret.

\section{Experiment Details and Additional Results}\label{app:exp}
\subsection{Codes and Computational Resources}
The codes for the experiments are publicly available at \url{https://github.com/ShenGroup/MPMAB_BEACON}, along with detailed instructions. The experiments do not require heavy computations and all the simulations were performed by a common PC, which only took a few hours to complete in total.

\subsection{Detailed Experiment Settings}
All experimental results are averaged over $100$ independent runs and the utility distributions are taken as mutually independent Bernoulli distributions, i.e.,  $\phi_{k,m} = \text{Bernoulli}(\mu_{k,m})$.. The $5$-arms-$5$-players game adopted for the evaluation of the linear reward function shown in Fig.~\ref{fig:linear} is specified in the following, which is the same as the one adopted in \citet{boursier2019practical}:
\begin{equation*}
\boldsymbol{\mu}^T = [\mu_{k,m}]^T_{(k,m)}=
    \begin{bmatrix}
    0.5 & 0.49 & 0.39 & 0.29 & 0.5 \\
      0.5 & 0.49 & 0.39 & 0.29 & 0.19\\
      0.29 & 0.19& 0.5& 0.499& 0.39\\
      0.29& 0.49& 0.5& 0.5& 0.39\\
      0.49& 0.49& 0.49& 0.49& 0.5
    \end{bmatrix}.
\end{equation*}
The $8$-arms-$6$-players instance used in the simulation with the proportional fairness function and the minimal function in Figs.~\ref{fig:propotional} and \ref{fig:shortboard} is shown in the following:
\begin{equation*}
\boldsymbol{\mu}^T = [\mu_{k,m}]^T_{(k,m)}=\begin{bmatrix}
      0.45&0.49&0.59&0.17&0.37&0.86&0.94&0.98\\
      0.39&0.25&0.4&0.6&0.24&0.54&0.43&0.67\\
      0.39&0.33&0.8&0.01&0.12&0.2&0.61&0.77\\
      0.95&0.22&0.24&0.88&0.2&0.12&0.29&0.3\\
      0.69&0.89&0.25&0.59&0.43&0.18&0.01&0.84\\
      0.97&0.15&0.89&0.16&0.09&0.57&0.61&0.19
    \end{bmatrix}.
\end{equation*}

\subsection{METC Enhancements}\label{sub_app:metc_enhance}
To have a more fair comparison with METC \citep{boursier2019practical}, several enhancements and adjustments are conducted. First, all empirical enhancements introduced in the supplementary material of \cite{boursier2019practical} are implemented to achieve the best performance. Second, since METC is originally designed only for the linear reward function, enhancements are made to accommodate the adoption of general nonlinear reward functions. Specifically, for each active arm $(k,m)$, METC selects the empirically best matching $B_{k,m}$ containing arm $(k,m)$ w.r.t. the upper confidence bounds $\bar{\boldsymbol{\mu}}' = [\bar{\mu}'_{k,m}]_{(k,m)\in [K]\times [M]}$. The construction of $\bar{\boldsymbol{\mu}}'$ strictly follows the design from \cite{boursier2019practical}.  In its original form, this step is confined to the linear reward function as 
\begin{align*}
    B_{k,m} \gets \argmax_{S\in \Sc, s_m=k}\left\{\sum\nolimits_{n\in[M]}\bar{\mu}'_{s_n,n}\right\}.
\end{align*}
We apply the same principle to the general reward functions by assuming an enhanced oracle such that
\begin{align*}
    B_{k,m} \gets \texttt{OracleEnhanced}(\boldsymbol{\bar{\mu}}',k,m)  \gets \argmax_{S\in \Sc, s_m=k}\left\{v(\bar{\boldsymbol{\mu}}'_S\odot \boldsymbol{\eta}_S)\right\}.
\end{align*}
The same idea is applied to the procedure of eliminating arms in METC. Note that the requirement for this oracle is much higher than the one used in BEACON, since it needs to output a specific exploration matching for each active arm, instead of only one matching as in BEACON.

\subsection{Additional Experimental Results}
\begin{figure*}[htb]
	\centering
	\subfigure[Linear reward function.]{ \includegraphics[width=0.33\linewidth]{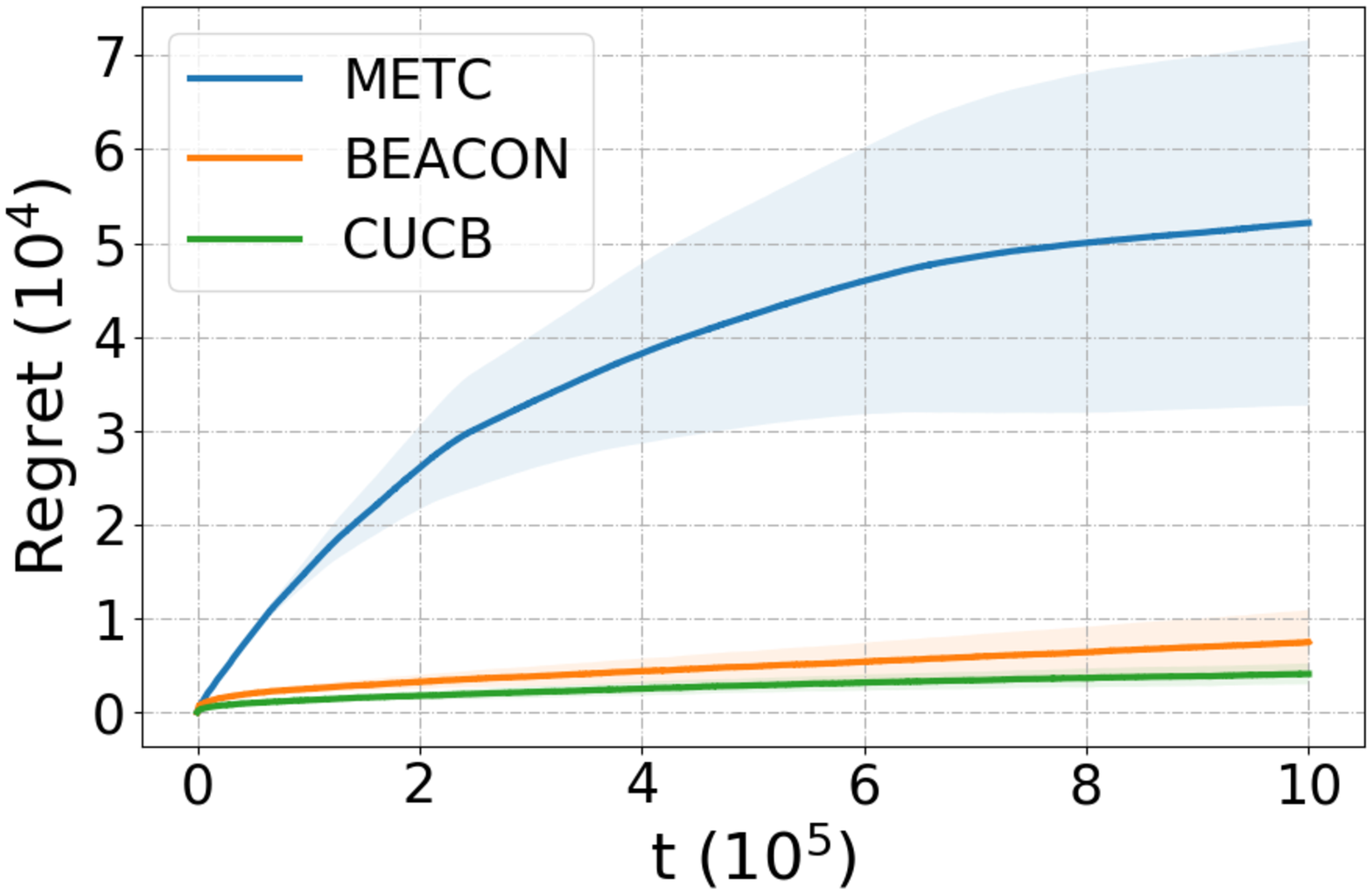}\label{fig:linear_comp}}
	\subfigure[Minimal function.]{ \includegraphics[width=0.33\linewidth]{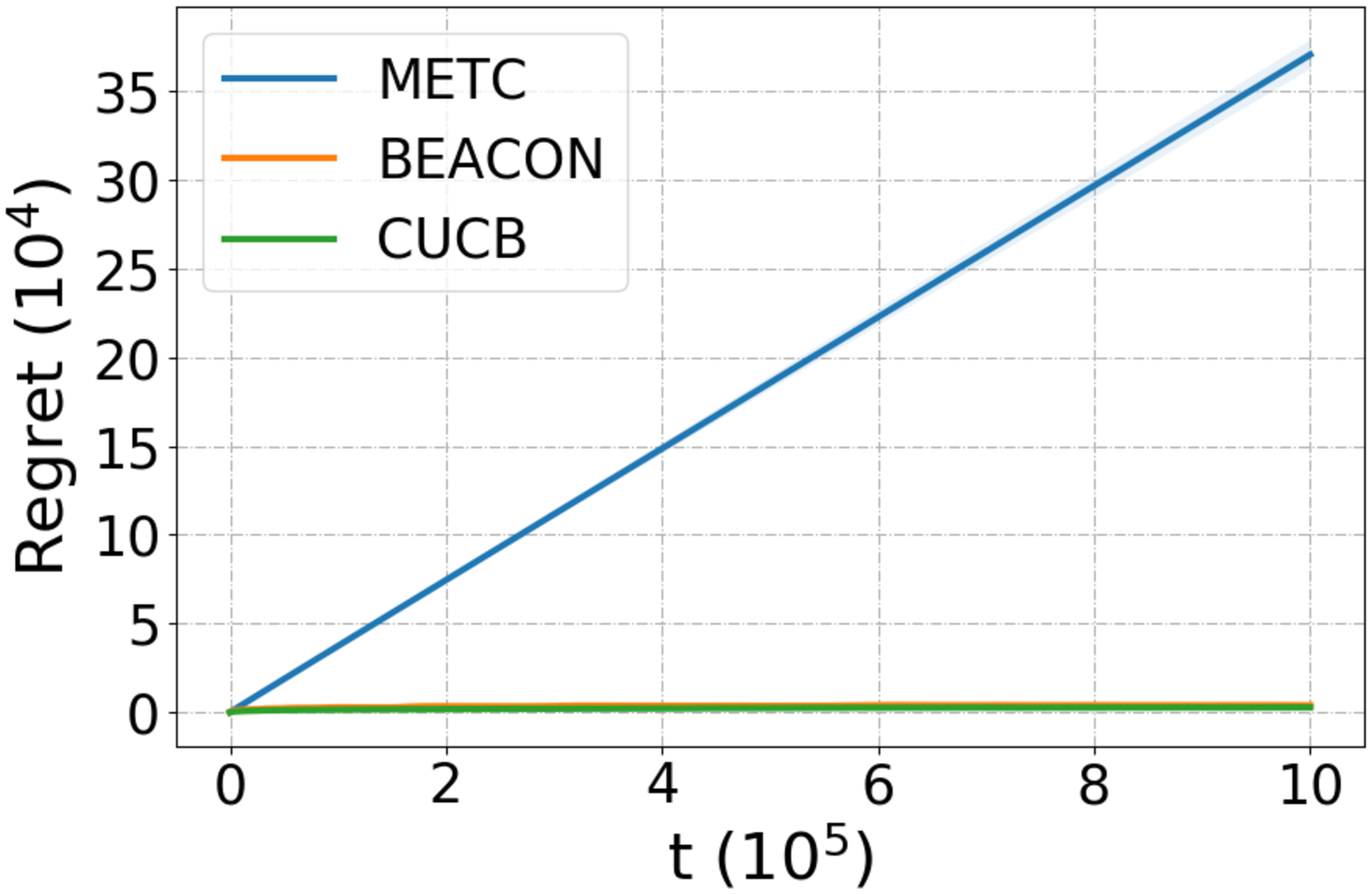}\label{fig:shortboard_comp}}
	\caption{Complete regret comparisons of Figs.~\ref{fig:linear} and \ref{fig:shortboard}. The regret curves of CUCB and BEACON are sometimes too close to each other to be distinguished.}
\end{figure*}

First, Figs.~\ref{fig:linear_comp} and \ref{fig:shortboard_comp} are the complete versions of Figs.~\ref{fig:linear} and \ref{fig:shortboard}, where the significant advantage of BEACON over METC is illustrated more clearly. 

\begin{figure*}[htb]
	\centering
	\subfigure[Linear reward function.]{ \includegraphics[width=0.32\linewidth]{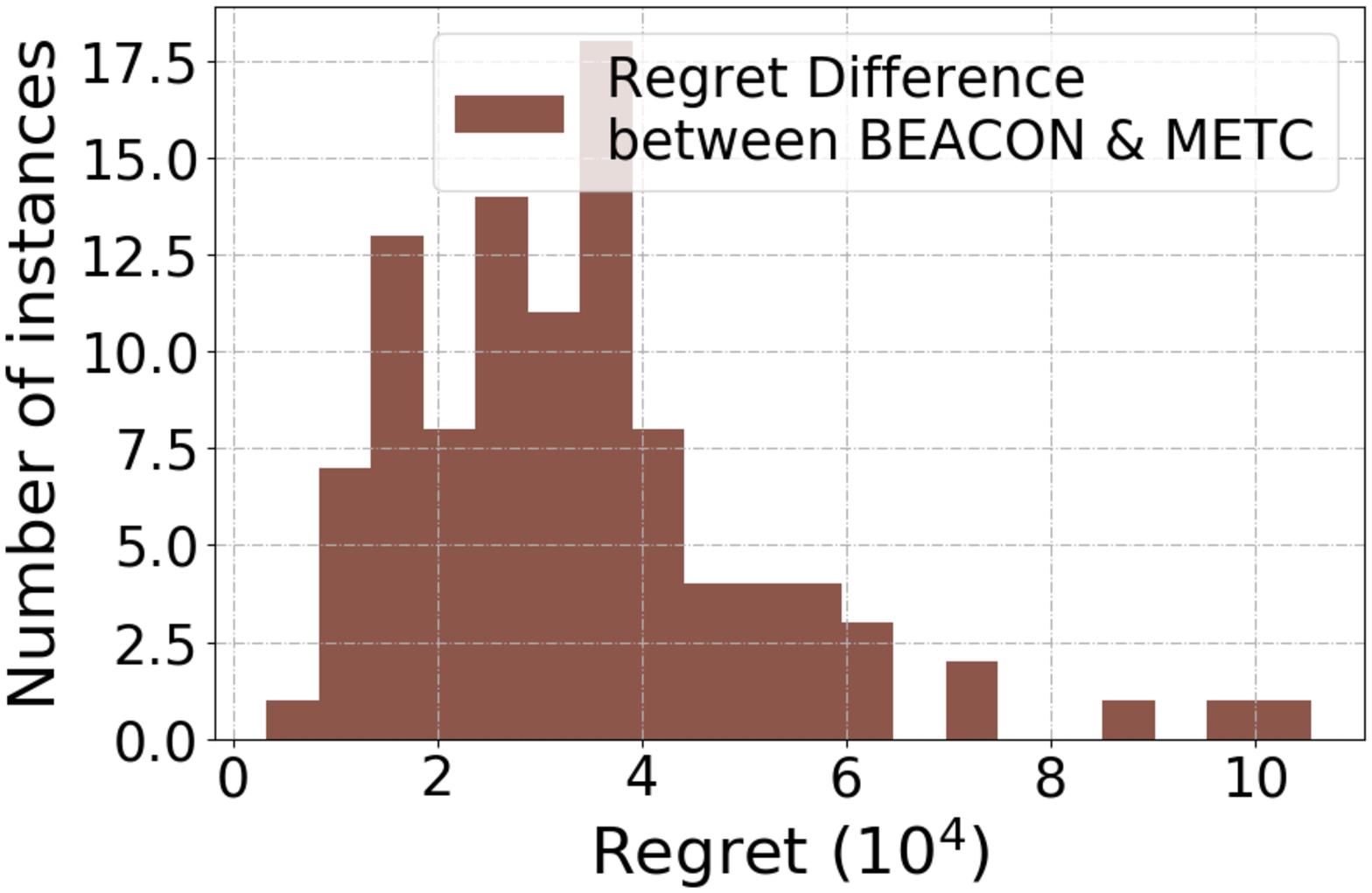}\label{fig:linear_diff}}
	\subfigure[Linear reward function and a large game.]{ \includegraphics[width=0.32\linewidth]{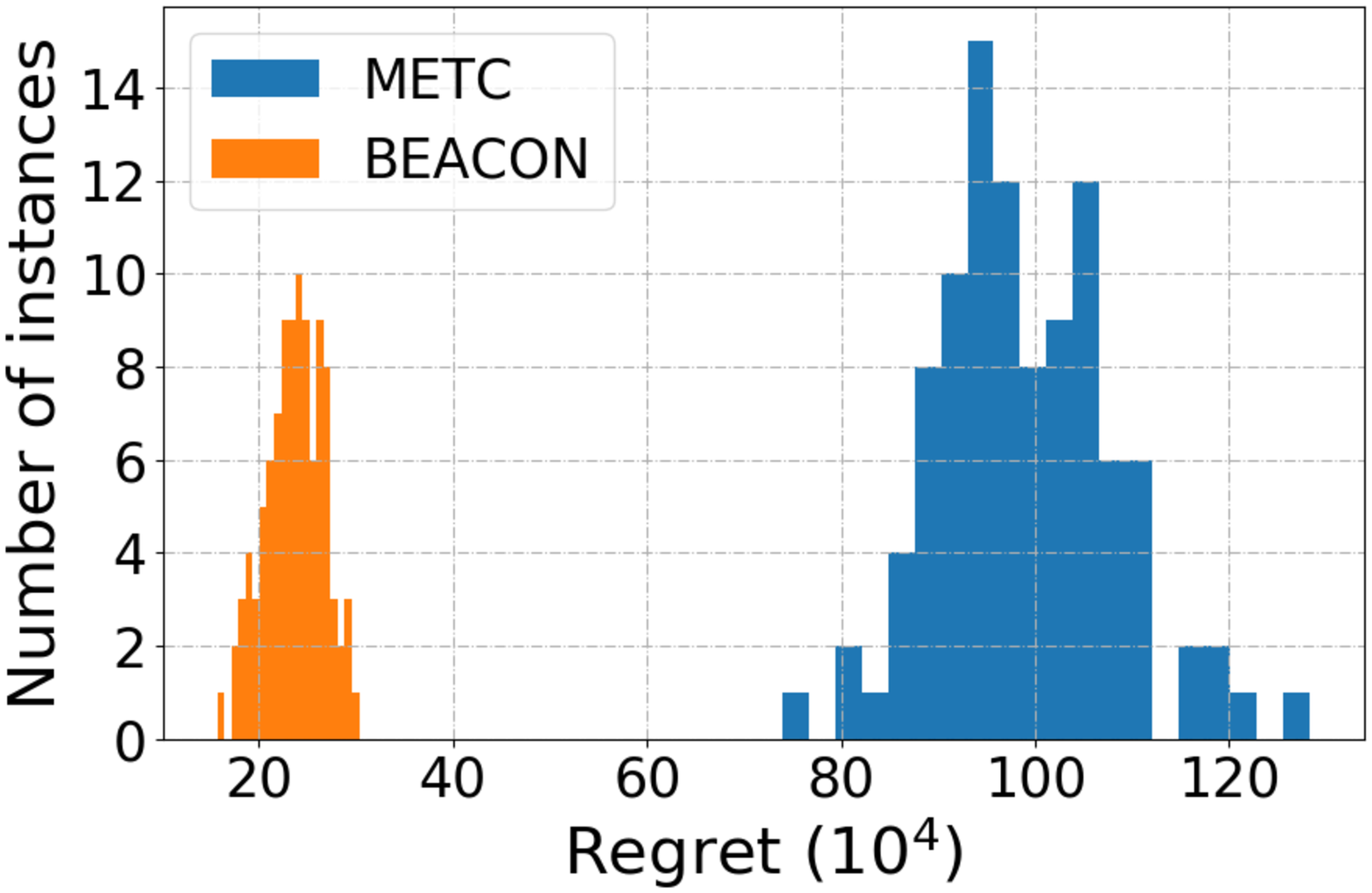}\label{fig:linear_hist_large}}
	\subfigure[Linear reward function and a large game.]{ \includegraphics[width=0.32\linewidth]{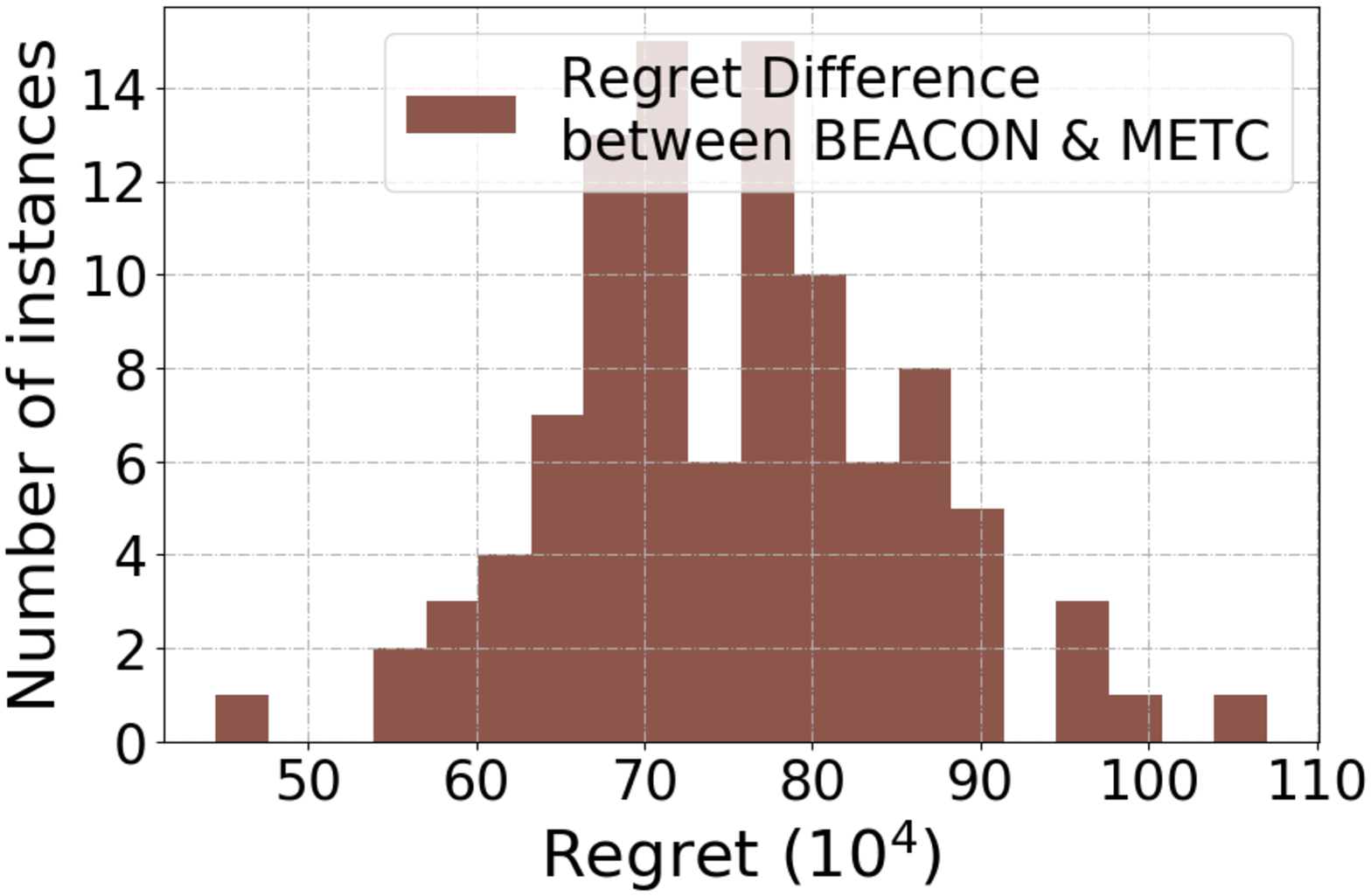}\label{fig:linear_diff_large}}
	\caption{Regret histograms with the linear reward function. (a) is the regret difference corresponding to Fig.~\ref{fig:linear_random}. (b) is the cumulative result from $100$ randomly generated instances with a large game setting with $M=10$ and $K = 30$, and (c) is the corresponding regret difference for (b).}
\end{figure*}

Then, Fig.~\ref{fig:linear_diff} presents the regret differences between BEACON and METC corresponding to Fig.~\ref{fig:linear_random}. A large game setting with $M=10, K=30, T=10^6$ are evaluated using $100$ randomly generated instances with results reported in Fig.~\ref{fig:linear_hist_large} and \ref{fig:linear_diff_large}. We can observe that the performance of BEACON is stable with this large game setting and is still significantly better than METC, which further demonstrates the advantages of BEACON.

\begin{figure*}[htb]
	\centering
	\subfigure[Proportional fairness function.]{ \includegraphics[width=0.32\linewidth]{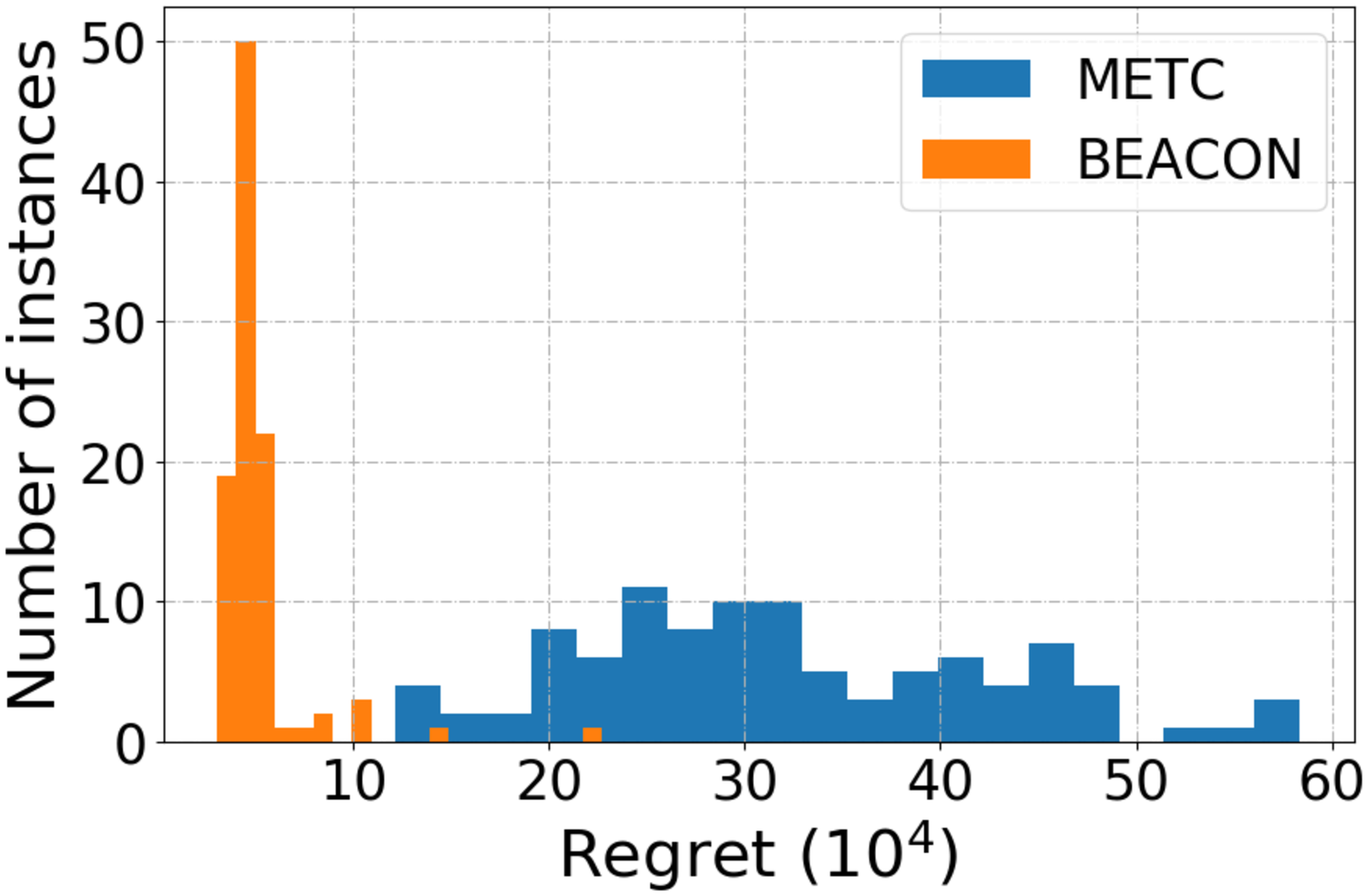}\label{fig:prop_hist}}
	\subfigure[Proportional fairness function.]{ \includegraphics[width=0.32\linewidth]{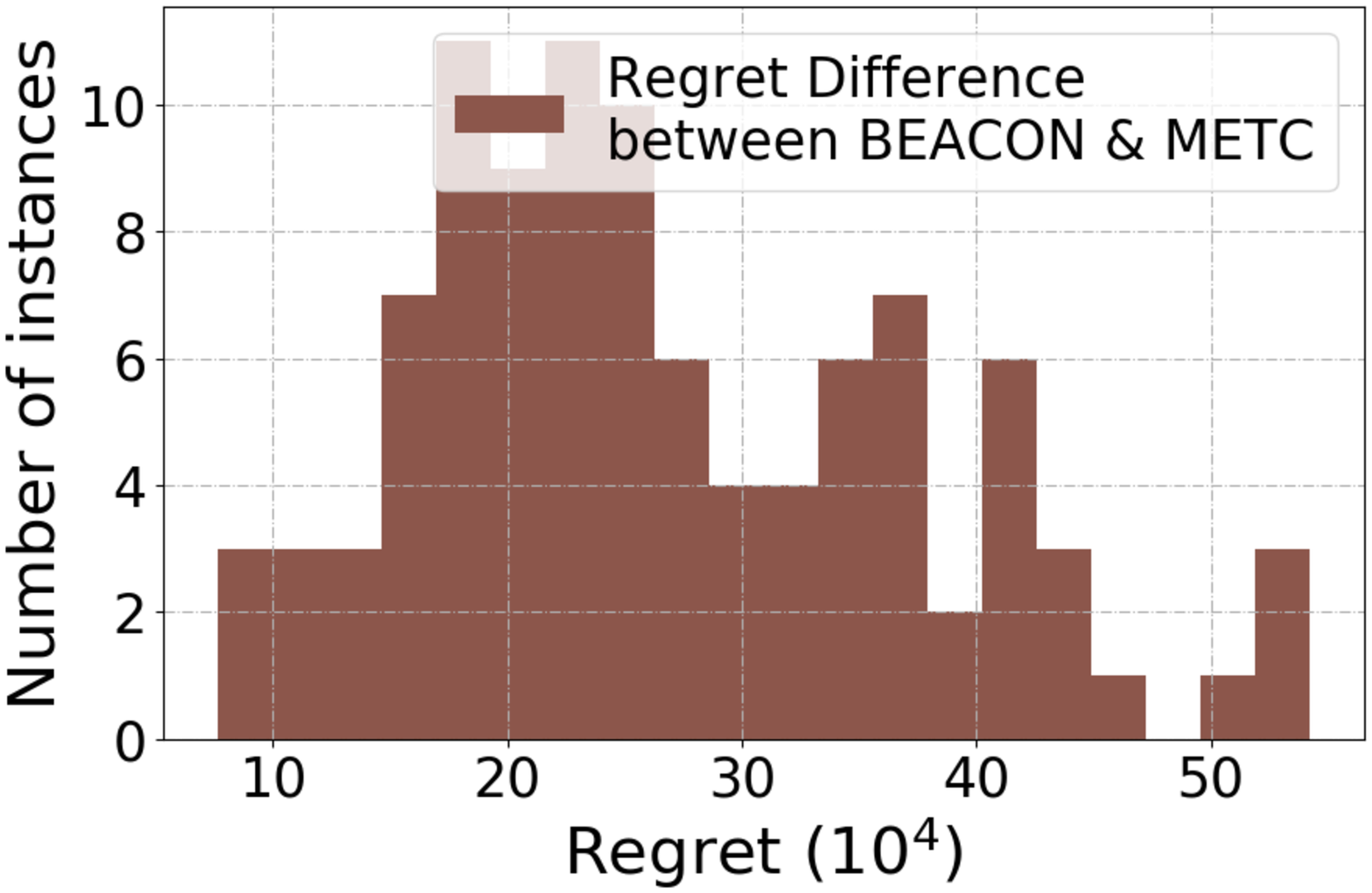}\label{fig:prop_diff}}
	\caption{Regret histograms with the proportional fairness function. (a) is the cumulative regret result from $100$ randomly generated instances, and (b) is the corresponding regret difference for (a).}
\end{figure*}

Also, $100$ randomly generated instances with $M=5,K = 6, T=10^6$ are used to evaluate the performance of BEACON and METC in dealing with the proportional fairness function. The histogram of the regrets is given in Fig.~\ref{fig:prop_hist} along with the histogram of the regret differences in Fig.~\ref{fig:prop_diff}, the latter of which gives a more definitive illustration of the advantage of BEACON. It can be observed that BEACON effectively deals with this proportional fairness function and outperforms METC uniformly across all realizations, which again proves the stable performance of BEACON in dealing with general reward functions.

In addition to the theoretical comparison of regret analyses given in Table \ref{tbl:regret}, we also provide some empirical explanations of BEACON's advantages over METC. First, the differential communication design is the key to lower communication losses. In fact, in the experiments, the statistical difference to be communicated, i.e., $\delta^r_{k,m}$, is much smaller than the theory dictates. We have frequently observed that there are only one to two non-zero bits to be communicated. Second, for explorations, METC adopts the strategy of arm elimination, while BEACON does not explicitly eliminate arms but instead uses confidence bounds to balance exploration and exploitation. From the experimental results, the arm elimination approach in METC is more costly than the exploration strategy in BEACON. This improvement again illustrates the importance of the connection between MP-MAB and CMAB.

\section{Proof for Theorem~\ref{thm:general}}
We begin with the analysis of BEACON with general reward functions, i.e., Theorem~\ref{thm:general}, since it is more intuitive than the one for the linear reward function, i.e., Theorem~\ref{thm:linear}. The latter follows the same spirit of the former but is carefully tailored to the linear reward function.

The complete version of Theorem~\ref{thm:general} is first presented in the following.
\begin{theorem}[\textbf{Complete version of Theorem~\ref{thm:general}}]\label{thm:general_full}
	Under Assumptions~\ref{asp:reward}, \ref{asp:mono}, and  \ref{asp:bound}, the regret of BEACON is upper bounded as
	\begin{align*}
		&R(T) \leq  \sum_{(k,m)\in [K]\times [M]}\left[\frac{28\Delta^{k,m}_{\min}\ln(T)}{(f^{-1}(\Delta^{k,m}_{\min}))^2}+ \int_{\Delta^{k,m}_{\min}}^{\Delta^{k,m}_{\max}} \frac{28\ln(T)}{(f^{-1}(x))^2}\mathrm{d}x+
		4KM\Delta^{k,m}_{\max}\right]\\
		&+\frac{6}{\ln 2}M^2K\log_2(K)\Delta_c\ln(T) + \frac{18}{\ln 2} MK \Delta_c\ln(T) +MK\Delta_c+\left(\frac{K^2M}{K-M}+2K\right)\Delta_{c}+K\Delta_{\max}\\
		& =\tilde{O}\left( \sum_{(k,m)\in[K]\times [M]}\left[\frac{\Delta^{k,m}_{\min}}{(f^{-1}(\Delta^{k,m}_{\min}))^2}+ \int_{\Delta^{k,m}_{\min}}^{\Delta^{k,m}_{\max}} \frac{1}{(f^{-1}(x))^2}\mathrm{d}x\right]\log(T)+ M^2K\Delta_c\log(T) \right)\\
		&=  \tilde{O}\left(\sum_{(k,m)\in[K]\times [M]}\frac{\Delta^{k,m}_{\max}\log(T)}{(f^{-1}(\Delta^{k,m}_{\min}))^2}+M^2K\Delta_c\log(T)\right).
	\end{align*}
\end{theorem}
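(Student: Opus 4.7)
The plan is to decompose $R(T)$ into four additive contributions aligned with the distinct phases of BEACON: (i) the orthogonalization overhead, directly controlled by Lemma~\ref{lem:regret_init} with a worst-case per-slot cost $\Delta_c=f(1)$ (bounded smoothness), giving the $(\tfrac{K^2M}{K-M}+2K)\Delta_c$ term; (ii) an exploration regret incurred during epochs in which the chosen matching $S_r\in\Sc_b$; (iii) a communication regret from the forced-collision slots of ADC, matching-index broadcast, and stop-upon-signal batch termination; and (iv) a low-probability ``bad event'' term accounting for violations of UCB concentration. A key reduction is that on clean events $S_r$ is collision-free (since $\texttt{Oracle}(\cdot)$ returns a matching from $\Sc\setminus\Sc_c$), so within-epoch regret is either an exploration loss on a suboptimal collision-free matching or a communication loss from an intentional collision, and these two types never overlap.

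For the exploration regret I would adapt the CUCB framework of \citet{chen2013combinatorial} to the matching super-arm structure and the batched schedule. First, establish a clean event on which, simultaneously for all $r$ and $(k,m)$, $\bar{\mu}^r_{k,m}\in[\mu_{k,m},\mu_{k,m}+2\sqrt{3\ln t_r/2^{p^r_{k,m}+1}}]$; the $\texttt{ceil}$ quantization guarantees $\tilde{\mu}^r_{k,m}\geq \hat{\mu}^r_{k,m}$ so only the Hoeffding side must be controlled, and the chosen quantization length $\lceil 1+p^r_{k,m}/2\rceil$ keeps the added granularity below the confidence radius. When $S_r\in\Sc_b$ contains arm $(k,m)$, the optimality of $S_r$ under $\boldsymbol{\bar\mu}_r$ combined with Assumptions~\ref{asp:mono}--\ref{asp:bound} forces $f\bigl(c\sqrt{\ln T/2^{p_r}}\bigr)\geq \Delta^{s^r_m,m}_{\min}$, hence $2^{p_r}=O(\ln T/(f^{-1}(\Delta^{k,m}_{\min}))^2)$. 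Charging each such epoch's regret $\leq 2^{p_r}\Delta^{s^r_m,m}$ to the bottleneck arm-player pair, then summing over geometric counter levels and converting the peeled suboptimality levels into the Riemann sum $\int_{\Delta^{k,m}_{\min}}^{\Delta^{k,m}_{\max}}\!(f^{-1}(x))^{-2}\,\mathrm{d}x$, recovers the exploration term in Theorem~\ref{thm:general_full}.

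For the communication regret, three observations do the work. First, because batch sizes at least double whenever an arm is the bottleneck, the total number of epochs is $O(MK\log T)$ and each arm-player pair triggers at most $O(\log T)$ counter increases. Second, transmitting $S_r$ and the leader's communication arm costs a fixed $2\lceil \log_2 K\rceil+1$ bits per follower, and the stop-upon-signal protocol incurs only $O(1)$ collisions per epoch, aggregating to the $M^2K\log_2(K)\Delta_c\ln(T)$ and $MK\Delta_c\ln(T)$ summands after multiplying by $\Delta_c$. Third and most delicately, each ADC transmission of $\tilde{\delta}^r_{k,m}$ uses only the truncated significant bits plus the signal-then-communicate envelope of $2L+2$ bits; on the clean event both $\tilde{\mu}^r_{k,m}$ and $\tilde{\mu}^{r-1}_{k,m}$ lie within the corresponding confidence radii of $\mu_{k,m}$, so the leading zeros of $\tilde{\delta}^r_{k,m}$ cancel the $p^r_{k,m}/2$ quantization length and leave an expected constant number of informative bits per update.

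The main obstacle I anticipate is exactly this last step: rigorously bounding the expected length of the truncated differential by an $O(1)$ quantity rather than the naive $\lceil 1+p^r_{k,m}/2\rceil=O(\log T)$ bits, since a loose bound here would dominate the regret and destroy the near-optimality. The argument must combine (a) the concentration of both $\tilde{\mu}^{r-1}_{k,m}$ and $\tilde{\mu}^r_{k,m}$ to within $O(\sqrt{\ln t_r/2^{p^r_{k,m}}})$ of $\mu_{k,m}$, which forces $|\tilde{\delta}^r_{k,m}|=O(2^{-p^r_{k,m}/2}\sqrt{\ln t_r})$ so that after truncation only $O(\log\log T)$, and in expectation $O(1)$, leading bits remain; and (b) a Hoeffding-style tail to absorb the residual mass off the clean event, where the worst-case $O(\log T)$ bits occur with probability polynomially small in $t_r$. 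With this lemma in hand, the remaining work---union bounding bad events, adding the initialization term from Lemma~\ref{lem:regret_init}, and collecting constants---is routine bookkeeping and yields the statement of Theorem~\ref{thm:general_full}.
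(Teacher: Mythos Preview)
Your decomposition and the exploration-regret argument are essentially the paper's: the paper also splits $R(T)=R_e(T)+R_c(T)+R_o(T)$, charges each suboptimal epoch to its ``representative'' (bottleneck) arm-player pair, and under the nice event uses monotonicity and bounded smoothness to force $f\bigl(c\sqrt{\ln T/2^{p_r}}\bigr)\ge \Delta_{S_r}$, then peels over counter levels to obtain the integral term. The communication bookkeeping (at most $MK\log_2 T$ counter updates, fixed $2\lceil\log_2 K\rceil{+}1$ bits per follower for indices, $M{-}1$ stop signals per epoch) also matches. One small omission: $R_o(T)$ in the paper also absorbs the $K$-step activation round-robin before epoch~$1$, which is where the extra $K\Delta_{\max}$ in the explicit bound comes from.

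There is, however, a real issue in your ADC step, and it matters for the \emph{explicit-constant} form of Theorem~\ref{thm:general_full}. Your argument~(a) routes through $\mu_{k,m}$: on the clean event both $\tilde\mu^{r}_{k,m}$ and $\tilde\mu^{r-1}_{k,m}$ lie within $O(\sqrt{\ln t_r/2^{p^r_{k,m}}})$ of $\mu_{k,m}$, so $|\tilde\delta^r_{k,m}|=O\bigl(2^{-p^r_{k,m}/2}\sqrt{\ln t_r}\bigr)$. Since the quantization precision is $2^{-\lceil 1+p^r_{k,m}/2\rceil}$, the truncated string then retains $\Theta(\log_2\sqrt{\ln t_r})=\Theta(\log\log T)$ bits, not $O(1)$; your step~(b) only handles the off-clean mass and cannot shrink this on-event contribution. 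Summed over the $MK\log_2 T$ updates this yields $O(MK\log T\log\log T)$ communication slots---fine for the $\tilde O$ statement, but it does not recover the $\frac{18}{\ln 2}MK\Delta_c\ln T$ constant.

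The paper's fix avoids $\mu_{k,m}$ entirely: it writes
\[
\hat\mu^{r}_{k,m}-\hat\mu^{r-1}_{k,m}=\frac{1}{2^{p^r_{k,m}}}\sum_{\tau=1}^{2^{p^r_{k,m}-1}}\bigl(\gamma^{k,m}_{\tau+2^{p^r_{k,m}-1}}-\gamma^{k,m}_{\tau}\bigr),
\]
which is $2^{-(p^r_{k,m}+1)/2}$-sub-Gaussian with \emph{no} $\sqrt{\ln T}$ factor. Combined with the two quantization errors $\le 2^{-p^r_{k,m}/2}+2^{-(p^r_{k,m}-1)/2}$, this gives $\Pb\bigl(L^r_{k,m}\ge 3+\log_2(3+x)\bigr)\le 2e^{-x^2}$, a doubly-exponential tail for the truncated length, whence $\Eb[L^r_{k,m}]\le 7$. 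This is the missing idea that turns your ``$O(\log\log T)$, and in expectation $O(1)$'' into an actual $O(1)$ bound; the rest of your plan then goes through as you describe.
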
   
To facilitate the proof, we introduce (or recall) the following notations:
\begin{align*}
	&V_{\boldsymbol{\mu},*} =\max \{V_{\boldsymbol{\mu},S}|S\in \Sc\}=\max\{v(\boldsymbol{\mu}_S\odot\boldsymbol{\eta}_S) |S\in \Sc\}\text{: the optimal reward value};\\
	&\Sc_* = \{S|S\in \Sc, V_{\boldsymbol{\mu},S} = V_{\boldsymbol{\mu},*} \} \hfill\text{: the set of the optimal matchings};\\
	&\Sc_c = \{S|\exists m\neq n, s_m=s_n\} \text{: the set of matchings with collisions};\\
	&\Sc_b = \Sc\backslash (\Sc_*\cup \Sc_c) \text{: the set of collision-free suboptimal matchings};\\
	&\Delta^{k,m}_{\min} = V_{\boldsymbol{\mu},*} - \max\{V_{\boldsymbol{\mu},S}|S\in\Sc_b, s_m = k \};\\
	&\Delta^{k,m}_{\max} = V_{\boldsymbol{\mu},*} - \min\{V_{\boldsymbol{\mu},S}|S\in\Sc_b, s_m = k \};\\
	&\Delta_{\min} = \min \{\Delta^{k,m}_{\min}\}\text{: the smallest reward gap among collision-free matchings};\\
	&\Delta_{\max} = \max \{\Delta^{k,m}_{\max}\}\text{: the largest reward gap among collision-free matchings};\\
	&\Delta_{c} =  V_{\boldsymbol{\mu},*}-\min\{V_{\boldsymbol{\mu},S}|S\in\Sc_c\}\leq f(1)\text{: the largest possible per-step loss upon collisions}.
\end{align*}
\begin{proof}[Proof for Theorems~\ref{thm:general} and \ref{thm:general_full}]
	The overall regret $R(T)$ can be decomposed into three parts: the exploration regret $R_e(T)$, the communication regret $R_c(T)$, and the other regret $R_o(T)$, i.e.,
	\begin{align*}
		R(T) = R_e(T) + R_c(T) +R_o(T).
	\end{align*}
	The exploration regret $R_e(T)$ and the communication regret $R_c(T)$ are caused by exploration and communication phases, respectively, and are analyzed in the following subsections. The other regret $R_o(T)$ contains the regret caused by orthogonalization and activation, i.e., the explorations before epoch $1$, and can be easily bounded as
	\begin{align}
	\label{eqn:apdx1_Ro}
		R_o(T) \leq \left(\frac{K^2M}{K-M}+2K\right)\Delta_{c}+K\Delta_{\max},
	\end{align}
	where the first term is the regret from orthogonalization (Lemma~\ref{lem:regret_init}) and the second term is the regret from activation.
	
	With Lemmas~\ref{lem:comm_regret} and \ref{lem:expl_regret}, which bound $R_c(T)$ and $R_e(T)$ respectively, established in the following subsections, and the bound on $R_o(T)$ in Eqn.~\eqref{eqn:apdx1_Ro}, Theorems~\ref{thm:general} and \ref{thm:general_full} can be directly proved.
\end{proof}

\subsection{Communication Regret}
\begin{lemma}\label{lem:comm_regret}
	For BEACON, under time horizon $T$, the cumulative length of all communication phases $D_c$ is bounded as
	\begin{align*}
		\Eb[D_c]  \leq \frac{6}{\ln 2}M^2K\log_2(K)\ln(T) + \frac{18}{\ln 2} MK \ln(T) +MK,
	\end{align*}
	and the communication loss $R_c(T)$ is bounded as
	\begin{align*}
		R_c(T) \leq \Eb[D_c]\Delta_c  \leq \frac{6}{\ln 2}M^2K\log_2(K)\Delta_c\ln(T) + \frac{18}{\ln 2} MK \Delta_c\ln(T) +MK\Delta_c.
	\end{align*}
\end{lemma}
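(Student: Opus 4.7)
The plan is to bound $\Eb[D_c]$ by decomposing the cumulative length of the communication phases into three ingredients---arm-index sharing, ADC transmission of arm statistics, and protocol overhead (start/stop signals)---and then to translate bits into regret via the observation that each single time step spent inside a communication phase costs at most $\Delta_c$. The conversion is immediate: during any such step, at most one pair of players is inducing a forced collision while the remaining players stay on the arms of the previous exploration matching $S_{r-1}\in\Sc$, so the realized matching lies in $\Sc_c$ and incurs at most $\Delta_c=V_{\boldsymbol{\mu},*}-\min\{V_{\boldsymbol{\mu},S}\colon S\in\Sc_c\}$ in expected loss. Hence $R_c(T)\leq \Delta_c\,\Eb[D_c]$ and everything reduces to controlling $\Eb[D_c]$.

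First I would bound the total number of epochs. In any epoch $r$, the arm $(s^r_{m^*},m^*)$ achieving the minimum $p_r=\min_m p^r_{s^r_m,m}$ has $2^{p_r}\le T^r_{s^r_{m^*},m^*}<2^{p_r+1}$ before the phase and $T^{r+1}_{s^r_{m^*},m^*}\geq 2^{p_r+1}$ afterwards, so its counter strictly increments. Because each of the $MK$ counters $p^r_{k,m}$ is bounded above by $\lfloor\log_2 T\rfloor$, the number of epochs is at most $MK\log_2 T$. Next I bound the matching-index overhead: in the communication phase of each epoch, the leader sends $s^r_m$ and $s^r_1$ together with a one-bit start signal to each of the $M-1$ followers, which is $2\lceil\log_2 K\rceil+1$ bits per follower. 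Multiplying gives $(M-1)(2\lceil\log_2 K\rceil+1)\cdot MK\log_2 T=O(M^2K\log_2(K)\log_2 T)$ time steps, which supplies the first term of the bound. The stop-upon-signal protocol for the batch length contributes only a constant per follower per epoch, absorbed into the remaining terms.

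The crux is bounding the ADC cost. Fix $(k,m)$ and let $r_1<r_2<\cdots$ be the epochs where $p^{r_i}_{k,m}>p^{r_{i-1}}_{k,m}$; the $i$-th update occupies $2L_i+2$ time steps, with $L_i$ the truncated length of $\tilde{\delta}^{r_i}_{k,m}$. On the good event that both $\hat{\mu}^{r_i}_{k,m}$ and $\hat{\mu}^{r_{i-1}}_{k,m}$ concentrate within their Hoeffding radii $\sqrt{3\ln t_r/2^{p^r_{k,m}+1}}$, the ceil quantization in ADC combined with the radii yields $|\tilde{\mu}^{r_i}_{k,m}-\mu_{k,m}|=O(2^{-p^{r_i}_{k,m}/2})$ and similarly for the previous one, so that $|\tilde{\delta}^{r_i}_{k,m}|=O(2^{-p^{r_{i-1}}_{k,m}/2})$. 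Since $\tilde{\delta}^{r_i}_{k,m}$ is represented with $\lceil 1+p^{r_i}_{k,m}/2\rceil$ bits of precision but has leading zeros down to position $p^{r_{i-1}}_{k,m}/2-O(1)$, its truncated length obeys $L_i\leq (p^{r_i}_{k,m}-p^{r_{i-1}}_{k,m})/2+O(1)$ on the good event. On the complementary event of Hoeffding failure, the trivial worst case $L_i\leq \lceil 1+p^{r_i}_{k,m}/2\rceil=O(\log_2 T)$ is weighted by a probability of order $T^{-c}$ and contributes only $O(1)$ in expectation after summing over all $(k,m)$ and all epochs. Summing the good-event bound telescopes: $\sum_i L_i=O(p^{\mathrm{final}}_{k,m})=O(\log_2 T)$, and aggregating the $2L_i+2$ term across all $(k,m)$ pairs and the at most $\log_2 T$ updates each yields the second contribution $O(MK\log_2 T)$ together with the additive $MK$ start/stop overhead. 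Plugging the three bounds into $\Eb[D_c]$ and multiplying by $\Delta_c$ gives the stated inequalities.

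The main obstacle is the third step: the combined analysis of ceil quantization, the confidence radius, and the leading-zero truncation of $\tilde{\delta}^{r_i}_{k,m}$. Keeping the good-event estimate at $O(1)$ per update (rather than $O(\log\log T)$) requires a careful choice of concentration threshold so that the bad-event contribution summed over all $(k,m)$ and all epochs remains $O(1)$, and the explicit constants $6/\ln 2$ and $18/\ln 2$ in the statement demand tight bookkeeping of bits per follower per epoch rather than asymptotic shortcuts.
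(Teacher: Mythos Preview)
Your three-part decomposition and the epoch-counting via the incrementing counters match the paper exactly, and the bounds for arm-index sharing and stop signals go through as you describe. The gap is in the ADC step. You condition on the event that $\hat\mu^{r_i}_{k,m}$ and $\hat\mu^{r_{i-1}}_{k,m}$ lie within the algorithm's confidence radius $\sqrt{3\ln t_r/2^{p+1}}$ of $\mu_{k,m}$ and then assert $|\tilde\mu^{r_i}_{k,m}-\mu_{k,m}|=O(2^{-p^{r_i}_{k,m}/2})$. But that radius already carries a $\sqrt{\ln t_r}$ factor, so on your good event one only gets $|\tilde\delta^{r_i}_{k,m}|=O\bigl(\sqrt{\ln T}\cdot 2^{-p^{r_{i-1}}_{k,m}/2}\bigr)$ and hence $L_i=(p_i-p_{i-1})/2+O(\log\log T)$, not $O(1)$. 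You flag this in your final paragraph, but the proposed remedy---tuning the concentration threshold---runs into the trade-off you anticipate: a radius small enough to kill the $\sqrt{\ln T}$ factor leaves the bad event with non-vanishing probability, and its $O(\log T)$ worst-case length summed over $O(\log T)$ updates per arm overshoots the target.

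The paper sidesteps the true mean entirely. Because each update has $p^r_{k,m}=p^{r-1}_{k,m}+1$, the sample means satisfy
\[
\hat\mu^{r}_{k,m}-\hat\mu^{r-1}_{k,m}
=\frac{1}{2^{p^r_{k,m}}}\sum_{\tau=1}^{2^{p^r_{k,m}-1}}\bigl(\gamma^{k,m}_{\tau+2^{p^r_{k,m}-1}}-\gamma^{k,m}_\tau\bigr),
\]
a centred sum that is $1/\sqrt{2^{p^r_{k,m}+1}}$-sub-Gaussian with no dependence on $t_r$ or $T$. Adding the two quantization errors (each at most $2^{-p/2}$) gives a tail bound on $|\tilde\delta^r_{k,m}|$ whose scale is purely $2^{-p/2}$; translating to the truncated bit-length and integrating the tail yields $\Eb[L^r_{k,m}]\le 7$ uniformly in $T$. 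One sign bit and the signal-then-communicate wrapper bring the per-update cost to at most $2+2(L^r_{k,m}+1)$, and with at most $\log_2 T$ updates per $(k,m)$ this produces the $\frac{18}{\ln 2}MK\ln T+MK$ term. The direct sub-Gaussian analysis of the \emph{difference} $\hat\mu^{r}_{k,m}-\hat\mu^{r-1}_{k,m}$, rather than concentration of each sample mean around $\mu_{k,m}$, is the missing ingredient that converts your $O(\log\log T)$ per-update estimate into a genuine constant.
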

\begin{proof}[Proof for Lemma~\ref{lem:comm_regret}]
	As illustrated in Section~\ref{subsec:alg_comm} and Appendix~\ref{app:comm}, communication phases consist of three parts of information sharing: arm statistics $\tilde{\mu}^r_{k,m}$, the chosen matching $S_r$, and the batch size parameter $p_r$. With the detailed communication protocol described in Appendix~\ref{app:comm}, we bound the communication lengths of the aforementioned three parts,  respectively.
	
	\textbf{Part I: Arm statistics.} We take arm $(k,m), m\neq 1$ as an example. In epoch $1$, $\tilde{\mu}_{k,m}^0$ is initialized as $0$ while $\bar{\mu}_{k,m}^1$ is the value of one random utility sample from arm $(k,m)$. With $p^1_{k,m} = \lfloor\log_2(T_{k,m}^1)\rfloor = \lfloor\log_2(1)\rfloor =  0$, $\tilde{\mu}_{k,m}^1$ is quantized from  $\hat{\mu}_{k,m}^1$ with $1+p^1_{k,m} = 1$ bit. The difference $\tilde{\delta}^1_{k,m} = \tilde{\mu}^1_{k,m} - \tilde{\mu}^0_{k,m} = \tilde{\mu}^1_{k,m}$ is transmitted and it contains only $1$ bit.
	
	 In epoch $r> 1$, if $p^r_{k,m}> p^{r-1}_{k,m}$, i.e., $p^r_{k,m}= p^{r-1}_{k,m}+1$, arm statistics of arm $(k,m)$ should be communicated via the truncated version of the difference $\tilde{\delta}^r_{k,m} = \tilde{\mu}^r_{k,m} - \tilde{\mu}^{r-1}_{k,m}$. Then, we can bound the duration of communication through bounding $\tilde{\delta}^r_{k,m}$. Specifically, it holds that
	 \begin{align*}
	 	|\tilde{\delta}^r_{k,m}| &= |\tilde{\mu}^r_{k,m} - \tilde{\mu}^{r-1}_{k,m}|\\
	 	& = |\tilde{\mu}^r_{k,m} - \hat{\mu}^r_{k,m} - (\tilde{\mu}^{r-1}_{k,m}-\hat{\mu}^{r-1}_{k,m})+(\hat{\mu}^r_{k,m}-\hat{\mu}^{r-1}_{k,m})|\\
	 	& \leq |\tilde{\mu}^r_{k,m} - \hat{\mu}^r_{k,m}| + |\tilde{\mu}^{r-1}_{k,m}-\hat{\mu}^{r-1}_{k,m}|+|\hat{\mu}^r_{k,m}-\hat{\mu}^{r-1}_{k,m}|\\
	 	&\overset{(a)}{\leq} \sqrt{\frac{1}{2^{p^r_{k,m}}}}+\sqrt{ \frac{1}{2^{p^r_{k,m}-1}}}+|\hat{\mu}^r_{k,m}-\hat{\mu}^{r-1}_{k,m}|,
	 \end{align*}
 	where inequality (a) is due to the quantization process specified Section~\ref{subsec:supp_comm}, i.e., $\tilde{\mu}_{k,m}^r=\texttt{ceil}(\hat{\mu}_{k,m}^r)$ with $\lceil 1+ p^r_{k,m}/2\rceil$ bits. This quantization leads to a quantization error of at most ${2^{-p^r_{k,m}/2}}$. Further, denoting $\gamma^{k,m}_{\tau}$ as the $\tau$-th random utility sample from arm $(k,m)$ during exploration phases, we can rewrite the difference $\hat{\mu}^r_{k,m}-\hat{\mu}^{r-1}_{k,m}$ as
 	\begin{align*}
 		\hat{\mu}^r_{k,m}-\hat{\mu}^{r-1}_{k,m} &= \frac{\sum_{\tau = 1}^{2^{p^r_{k,m}}}\gamma^{k,m}_{\tau}}{2^{p^r_{k,m}}} - \frac{\sum_{\tau = 1}^{2^{p^r_{k,m}-1}}\gamma^{k,m}_{\tau}}{2^{p^r_{k,m}-1}}\\
 		& = \frac{\sum_{\tau = 1}^{2^{p^r_{k,m}-1}}\gamma^{k,m}_{\tau}+\sum_{\tau = 1+2^{p^r_{k,m}-1}}^{2^{p^r_{k,m}}}\gamma^{k,m}_{\tau}}{2^{p^r_{k,m}}} - \frac{\sum_{\tau = 1}^{2^{p^r_{k,m}-1}}\gamma^{k,m}_{\tau}}{2^{p^r_{k,m}-1}}\\
 		& = \frac{\sum_{\tau = 1+2^{p^r_{k,m}-1}}^{2^{p^r_{k,m}}}\gamma^{k,m}_{\tau}-\sum_{\tau = 1}^{2^{p^r_{k,m}-1}}\gamma^{k,m}_{\tau}}{2^{p^r_{k,m}}}\\
 		& =\frac{1}{2^{p^r_{k,m}}}\sum_{\tau = 1}^{2^{p^r_{k,m}-1}}\left(\gamma^{k,m}_{\tau+2^{p^r_{k,m}-1}}-\gamma^{k,m}_{\tau}\right)
 	\end{align*} 
 	which is a $\frac{1}{\sqrt{2^{p^r_{k,m}+1}}}$-sub-Gaussian random variable since the utility samples are independent across time. Thus, we can further derive that, with a dummy variable $x\geq \sqrt{\ln 2}$, 
 	\begin{align*}
 		&\Pb\left(\left|\hat{\mu}^r_{k,m}-\hat{\mu}^{r-1}_{k,m}\right|\geq \sqrt{\frac{x^2}{2^{p^r_{k,m}}}}\right)\leq 2\exp\left[-2^{p^r_{k,m}}\frac{x^2}{2^{p^r_{k,m}}}\right] \leq  2\exp[-x^2]\\
 		\Rightarrow & \Pb\left(|\tilde{\delta}^r_{k,m}|\geq \sqrt{\frac{1}{2^{p^r_{k,m}}}}+\sqrt{ \frac{1}{2^{p^r_{k,m}-1}}}+\sqrt{\frac{x^2}{2^{p^r_{k,m}}}}\right)\leq 2\exp[-x^2]\\
 		\overset{(a)}{\Rightarrow} & \Pb\left(L^r_{k,m} \geq 3+\frac{p^r_{k,m}}{2}+\log_2\left(\frac{1+\sqrt{2}+x}{\sqrt{2^{p^r_{k,m}}}}\right)\right)\leq 2\exp[-x^2]\\
 		\Rightarrow & \Pb\left(L^r_{k,m} \geq 3+\log_2\left(3+x\right)\right)\leq 2\exp[-x^2]\\
 		\Rightarrow & \Pb\left(L^r_{k,m} \leq 3+\log_2\left(3+x\right)\right)\geq 1-2\exp[-x^2]\\
 		\overset{(b)}{\Rightarrow} & \Pb\left(L^r_{k,m} \leq l\right)\geq 1-2\exp\left[-(2^{l-3}-3)^2\right]
 	\end{align*}
 	where $L^r_{k,m}$ in implication (a) is the length of the truncated version $|\tilde{\delta}^r_{k,m}|$ and is upper bounded by 
 	\begin{align*}
 		L^r_{k,m}&\leq \lceil 1+p^r_{k,m}/2\rceil-\lfloor\log_2(1/|\tilde{\delta}^r_{k,m}|)\rfloor\\
 		&\leq 3+p^r_{k,m}/2+\log_2(|\tilde{\delta}^r_{k,m}|).
 	\end{align*}
	In deriving (b), we substitute the variable $3+\log_2(3+x)$ with $l$, which satisfies that $l\geq 3+\log_2(3+\sqrt{\ln 2})$, and thus equivalently $x = 2^{l-3}-3$. With the above results and viewing $L^r_{k,m}$ as a random variable, we have that its cumulative distribution function (CDF) $F_{L^r_{k,m}}(l)$ satisfies the following property:
	\begin{align*}
		\forall l \geq 5>3+\log_2(3+\sqrt{\ln 2}), F_{L^r_{k,m}}(l) = \Pb\left(L^r_{k,m} \leq l\right)\geq 1-2\exp\left[-(2^{l-3}-3)^2\right].
	\end{align*}
	Using the property of CDF, we can bound the expectation of $L_{k,m}^r$ as
	\begin{align*}
		\Eb\left[L_{k,m}^r\right] &= \sum_{l = 0}^{\infty} (1-F_{L^r_{k,m}}(l))\\
		&\leq 6+ \sum_{l=6}^{\infty}2\exp\left[-(2^{l-3}-3)^2\right]\\
		&\leq 6+ \int_{l=5}^{\infty}2\exp\left[-(2^{l-3}-3)^2\right]\mathrm{d} l\\
		&\leq 7.
	\end{align*}

	Thus, we have that in expectation, the truncated version of $|\tilde{\delta}^r_{k,m}|$ has a length that is less than $7$ bits. In addition, $1$-bit information should also be transmitted to indicate the sign of $\tilde{\delta}^r_{k,m}$. As a summary, in expectation, $8$ bits is sufficient to represent the truncated version of $\tilde{\delta}^r_{k,m}$, 
	
	With overall time horizon of $T$, there are at most $\log_2(T)$ statistics updates of arm $(k,m)$ in addition to the first epoch. The expected communication duration for arm statistics $D_s$ is bounded as
	\begin{align}
		\Eb\left[D_s\right] &\overset{(a)}{=}  \underbrace{MK}_{\text{epoch $r=1$}}+  \underbrace{\Eb\left[\sum_{r}\sum_{(k,m):p^r_{k,m}>p^{r-1}_{k,m}}(2 + 2(L_{k,m}^r+1))\right]}_{\text{epoches $r>1$}}\notag\\ 
		&{\leq} MK+ (2+2\times 8) MK\log_2(T) \notag\\ 
		& \leq  18 MK\log_2(T)+MK \notag\\ 
		& = \frac{18}{\ln 2} MK \ln(T) +MK\label{eqn:d_s},
	\end{align}
	where equation (a) takes the signal-then-communicate protocol described in Appendix~\ref{app:comm} into consideration, where transmitting $\tilde{\delta}^r_{k,m}$ consists of $1$ step of the leader notifying the follower to start, $(L^r_{k,m}+1)$ steps of the truncated version of $\tilde{\delta}^r_{k,m}$ and correspondingly $(L^r_{k,m}+2)$ steps of synchronization between the leader and follower.
	
	\textbf{Part II \& III: Matching choice and batch size.} These two parts of communications are relatively easy to bound. In each epoch $r$, the leader initiates and then transmits two arm indices ($s^r_1$ and $s^r_m$) to each follower $m$, thus, the communication duration $D_m$ for matching assignments is bounded as
	\begin{align}
		D_m  &= \sum_{r}(M-1)(1+2\lceil\log_2(K)\rceil) \notag\\
		& \leq (M-1)(2\log_2(K)+3)MK\log_2(T) \notag\\
		& < \frac{1}{\ln 2}M^2K(2\log_2(K)+3)\ln(T).\label{eqn:d_m}
	\end{align}

	For the communication duration $D_b$ for the batch size, as illustrated in Appendix~\ref{app:comm}, the leader notifies followers to stop exploring by sending stopping signals. Thus, it holds that
	\begin{align}\label{eqn:d_b}
		D_b &= \sum_{r}(M-1) \leq (M-1)MK\log_2(T) < \frac{1}{\ln 2}M^2K\ln(T).
	\end{align}
	
	By combining Eqns.~\eqref{eqn:d_s}, \eqref{eqn:d_m} and \eqref{eqn:d_b}, Lemma~\ref{lem:comm_regret} can be obtained as
	\begin{align*}
		\Eb[D_c]  &= \Eb[D_s] +\Eb[D_m] +\Eb[D_b] \\
		&\leq  \frac{18}{\ln 2} MK \ln(T) +MK + \frac{1}{\ln 2}M^2(2\log_2(K)+3)K\ln(T)+\frac{1}{\ln 2}M^2K\ln(T)\\
		& \leq \frac{6}{\ln 2}M^2K\log_2(K)\ln(T) + \frac{18}{\ln 2} MK \ln(T) +MK.
	\end{align*}
\end{proof}

\subsection{Exploration Regret}
\begin{lemma}\label{lem:expl_regret}
	For BEACON, under time horizon $T$, the exploration regret is upper bounded as
	\begin{align*}
		R_e(T) &\leq \sum_{(k,m)\in [K]\times [M]}\left[\frac{28\Delta^{k,m}_{\min}\ln(T)}{(f^{-1}(\Delta^{k,m}_{\min}))^2}+ \int_{\Delta^{k,m}_{\min}}^{\Delta^{k,m}_{\max}} \frac{28\ln(T)}{(f^{-1}(x))^2}\mathrm{d}x+
		4 KM\Delta^{k,m}_{\max}\right].
	\end{align*}
\end{lemma}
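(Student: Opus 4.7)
The plan is to adapt the CUCB regret analysis of \citet{chen2013combinatorial,wang2017improving} to our batched, quantized, decentralized setting. First, I would define the clean event
\[
\mathcal{E}_r = \bigcap_{(k,m)\in[K]\times[M]} \Bigl\{ |\hat{\mu}^r_{k,m} - \mu_{k,m}| \leq \sqrt{\tfrac{3\ln t_r}{2 \cdot 2^{p^r_{k,m}}}} \Bigr\}
\]
and argue, via Hoeffding together with a union bound over dyadic counter levels and $(k,m)$ pairs, that $\sum_r \Pr[\mathcal{E}_r^c]$ is summable in $t_r^{-2}$ and therefore contributes only an $O(MK\Delta_{\max})$ additive term that can be absorbed into the $4KM\Delta^{k,m}_{\max}$ summand.

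Next, on $\mathcal{E}_r$ the ceiling step in ADC guarantees $\tilde{\mu}^r_{k,m} \geq \hat{\mu}^r_{k,m}$, hence $\bar{\mu}^r_{k,m} \geq \mu_{k,m}$ is a valid UCB; moreover, the quantization error $2^{-p^r_{k,m}/2}$ is of the same order as the confidence radius $\sqrt{3\ln t_r / 2^{p^r_{k,m}+1}}$, so the total deviation $|\bar{\mu}^r_{k,m} - \mu_{k,m}|$ is bounded by a constant multiple of $\sqrt{\ln t_r / 2^{p^r_{k,m}}}$. Combining the oracle property $V_{\boldsymbol{\bar{\mu}}_r, S_r} \geq V_{\boldsymbol{\bar{\mu}}_r, S^*}$ for any $S^*\in\Sc_*$, Assumption~\ref{asp:mono} (which gives $V_{\boldsymbol{\bar{\mu}}_r, S^*} \geq V_{\boldsymbol{\mu}, *}$ since $\bar{\boldsymbol{\mu}}_r \succeq \boldsymbol{\mu}$ on $S^*$), and Assumption~\ref{asp:bound}, I would obtain for every epoch $r$
\[
\Delta_{S_r} := V_{\boldsymbol{\mu}, *} - V_{\boldsymbol{\mu}, S_r} \leq f\Bigl(\max_{m\in[M]} |\bar{\mu}^r_{s^r_m, m} - \mu_{s^r_m, m}|\Bigr),
\]
so the bottleneck arm $(k^*,m^*) = \arg\min_m p^r_{s^r_m,m}$ must satisfy $2^{p^r_{k^*, m^*}} \leq C \ln t_r / (f^{-1}(\Delta_{S_r}))^2$ for an absolute constant $C$.

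Then, since the batched regret of epoch $r$ is $2^{p_r}\Delta_{S_r} = 2^{p^r_{k^*,m^*}}\Delta_{S_r}$, I would charge the entire epoch-regret to the single bottleneck arm $(k^*, m^*)$. For each arm-player pair $(k,m)$ I would collect the epochs in which it plays the bottleneck role, group them by their dyadic counter level $p$, and bound $\sum_p 2^p \cdot \Delta$ subject to the constraint $2^p \leq C\ln T/(f^{-1}(\Delta))^2$. The standard Chen--Wang--Yuan slicing over gap levels $\Delta \in [\Delta^{k,m}_{\min}, \Delta^{k,m}_{\max}]$ then turns this geometric sum into the integral $\int_{\Delta^{k,m}_{\min}}^{\Delta^{k,m}_{\max}} \frac{28\ln T}{(f^{-1}(x))^2}\,\mathrm{d}x$, while the boundary contribution at $\Delta^{k,m}_{\min}$ yields the first $\frac{28\Delta^{k,m}_{\min}\ln T}{(f^{-1}(\Delta^{k,m}_{\min}))^2}$ summand.

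The main obstacle is reconciling three sources of slack simultaneously: the dyadic batched counter $p^r_{k,m} = \lfloor \log_2 T^r_{k,m}\rfloor$, the adaptive $\lceil 1+p^r_{k,m}/2\rceil$-bit ADC quantization, and the batched scheduling where the full batch of length $2^{p_r}$ incurs a single per-step penalty $\Delta_{S_r}$. Ensuring that the quantization never dominates the confidence radius (so the UCB/lower-bound trick survives), and that charging all $2^{p_r}$ steps to the bottleneck arm alone avoids an extra factor of $M$, are the two subtle bookkeeping points; multiple optimal matchings are handled by the fact that the bound above only requires some $S^*\in\Sc_*$ to exist, so the argument is insensitive to which one is chosen.
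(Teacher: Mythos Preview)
Your proposal is correct and follows essentially the same route as the paper. The paper defines the nice event directly on the quantized statistics $\tilde{\mu}^r_{k,m}$ with an asymmetric window (absorbing the ceiling quantization error into the upper side), whereas you define it on $\hat{\mu}^r_{k,m}$ and add the quantization error afterward; these are equivalent formulations. Your ``bottleneck arm'' is exactly the paper's ``representative arm'' $\rho_r$, and the key bookkeeping observation---that this arm's counter increments by one after the epoch, so for each $(k,m)$ and each dyadic level $p$ there is at most one epoch to charge---is the same device the paper uses to avoid the extra factor of $M$. The gap-slicing you attribute to Chen--Wang--Yuan is implemented in the paper via the thresholds $q^{k,m}_n$ satisfying $2^{q^{k,m}_n-1}\le 14\ln(T)/(f^{-1}(\Delta^{k,m}_n))^2<2^{q^{k,m}_n}$, which yields the constant $28$ and the integral form exactly as you anticipate.
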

\begin{proof}[Proof for Lemma~\ref{lem:expl_regret}]
	The following proof is inspired by the proof for CUCB in \cite{chen2013combinatorial}. However, \cite{chen2013combinatorial} does not consider the batched structure, which introduces additional challenges for the proof here. To better characterize the exploration regret, we introduce the following notations:
	\begin{align*}
		&\Sc^{k,m}_{b} = \{S|S\in \Sc_b, s_m=k\}= \{S^{k,m}_{1},..., S^{k,m}_{N(k,m)}\};\\
		&\Delta^{k,m}_{n} = V_{\boldsymbol{\mu},*} - V_{\boldsymbol{\mu}, S^{k,m}_{n}}, \forall n\in\{1,...,N(k,m)\},
	\end{align*}
	where $\Sc^{k,m}_{b}$ is the set of collision-free sub-optimal matchings that contain arm $(k,m)$ and we denote its size as $N(k,m)$. $\Delta^{k,m}_{n}$ denotes the sub-optimality gap of the matching $S^{k,m}_{n}$.
	In the following proof, we re-arrange the set $\Sc^{k,m}_{b} = \{S^{k,m}_{1},..., S^{k,m}_{N(k,m)}\}$ in a decreasing order w.r.t. the gap $\Delta^{k,m}_{n}$, i.e., if $n_1\geq n_2$, $\Delta^{k,m}_{n_1}\leq \Delta^{k,m}_{n_2}$. Also, for convenience, we denote $\Delta^{k,m}_{N(k,m)+1}=0$. Furthermore, it naturally holds that $\Delta^{k,m}_{\min} = \Delta^{k,m}_{N(k,m)}$ and $\Delta^{k,m}_{\max} = \Delta^{k,m}_{1}$.
	
	We denote $q_{n}^{k,m}, \forall n\in\{1,...,N(k,m)\}$ as the integer such that
	\begin{align*}
		2^{q^{k,m}_{n}-1} \leq  \frac{14\ln(T)}{(f^{-1}(\Delta^{k,m}_n))^2} < 2^{q^{k,m}_{n}} < \frac{28\ln(T)}{(f^{-1}(\Delta^{k,m}_n))^2}.
	\end{align*}
	In addition, we define $q_{0}^{k,m} = 0$ and $q_{N(k,m)+1}^{k,m} = \lceil\log_2(T)\rceil$. Note that with the above definition of $q^{k,m}_n$, it holds that 
	\begin{align}\label{eqn:qkm}
		\forall p\geq q^{k,m}_n, f\left(2\sqrt{\frac{3\ln t_r}{2^{p+1}}}+\sqrt{\frac{1}{2^{p}}}\right)&\leq f\left(3\sqrt{\frac{3\ln t_r}{2^{p+1}}}\right)\leq f\left(3\sqrt{\frac{3\ln T}{2^{p+1}}}\right)< \Delta^{k,m}_n,
	\end{align}
	which is a key property that is utilized in the subsequent proofs.
	
	For epoch $r$, we define the ``representative arm'' $\rho_r = (s^r_m,m)$ as one of the arms in $S_r$ such that $p^r_{s^r_m,m} = p_r$. If there are more than one arm in $S_r$ with arm counter $p_r$, $\rho_r$ is randomly chosen from them. Thus, it is guaranteed that there is one and only one representative arm for each exploration phase. With the arm counter updating rule specified in Section~\ref{subsec:alg_expl}, the counter of arm $\rho_r$ will certainly increase by $1$ after epoch $r$.
	
	\textbf{Step I: Regret decomposition. }With respect to the representative arm, we decompose the exploration regret as
	\begin{align}
		R_e(T) &= \Eb\left[\sum_r 2^{p_r}(V_{\boldsymbol{\mu},*}-V_{\boldsymbol{\mu},S_r})\right]\notag\\
		& = \Eb\left[\sum_r\sum_{(k,m)\in [K]\times [M]} 2^{p_r}(V_{\boldsymbol{\mu},*}-V_{\boldsymbol{\mu},S_r})\mathds{1}\left\{\rho_r = (k,m)\right\}\right]\notag\\
		& \overset{(a)}{=} \Eb\left[\sum_r\sum_{(k,m)\in [K]\times [M]} 2^{p^r_{k,m}}(V_{\boldsymbol{\mu},*}-V_{\boldsymbol{\mu},S_r})\mathds{1}\left\{\rho_r = (k,m)\right\}\right]\notag\\
		& \overset{(b)}{=} \Eb\left[\sum_r\sum_{(k,m)\in [K]\times [M]} \sum_{n=1}^{N(k,m)}2^{p^r_{k,m}}\Delta^{k,m}_{n}\mathds{1}\left\{\rho_r = (k,m), S_r = S^{k,m}_{n}\right\}\right]\notag\\
		& \overset{(c)}{=} \Eb\left[\sum_{(k,m)\in [K]\times [M]}\sum_{p_{k,m}\geq 0} \sum_{n=1}^{N(k,m)}2^{p_{k,m}}\Delta^{k,m}_{n}\mathds{1}\left\{S_{k,m,p_{k,m}} = S^{k,m}_{n}\right\}\right]\notag\\
		& \overset{(d)}{=} \sum_{(k,m)\in [K]\times [M]}R^{k,m}_e(T),\label{eqn:regret_decomp_general}
	\end{align}
	where equality (a) is from the definition of the representative arm that if $\rho_r = (k,m)$, it holds that $p_r = p^r_{k,m}$. Equality (b) further associates the regret of each exploration phase with specific sub-optimal matchings. $S_{k,m,p_{k,m}}$ denotes the exploration matching with representative arm $(k,m)$ and the corresponding arm counter $p_{k,m}$. Equality (c) holds because once $\rho_r = (k,m)$, its arm counter will increase. Equality (d) denotes $R^{k,m}_e(T) := \Eb\left[\sum_{p_{k,m}>0} \sum_{n=1}^{N(k,m)}2^{p_{k,m}}\Delta^{k,m}_{n}\mathds{1}\left\{S_{k,m,p_{k,m}} = S^{k,m}_{n}\right\}\right]$, which represents the regret associated with arm $(k,m)$.
	
	For term $R_e^{k,m}(T)$, we further have
	\begin{align*}
		R^{k,m}_e(T) =& \Eb\left[\sum_{p_{k,m}\geq 0} \sum_{n=1}^{N(k,m)}2^{p_{k,m}}\Delta^{k,m}_{n}\mathds{1}\left\{S_{k,m,p_{k,m}} = S^{k,m}_{n}\right\}\right]\\
		 =& \sum_{p_{k,m}\geq 0} \sum_{n=1}^{N(k,m)}2^{p_{k,m}}\Delta^{k,m}_{n}\Pb\left(S_{k,m,p_{k,m}} = S^{k,m}_{n}\right)\\
		 \overset{(a)}{\leq}&  \sum_{p_{k,m}>0} \sum_{n=1}^{N(k,m)}2^{p_{k,m}}\Delta^{k,m}_{n}\Pb\left(S_{k,m,p_{k,m}} = S^{k,m}_{n}|\Ec_{k,m,p_{k,m}}\right)\Pb\left(\Ec_{k,m,p_{k,m}}\right)\\
		&+\sum_{p_{k,m}\geq 0} \sum_{n=1}^{N(k,m)}2^{p_{k,m}}\Delta^{k,m}_{n}\Pb\left(S_{k,m,p_{k,m}} = S^{k,m}_{n}|\bar{\Ec}_{k,m,p_{k,m}}\right)\Pb\left(\bar{\Ec}_{k,m,p_{k,m}}\right)\\
		\leq&  \sum_{p_{k,m}\geq0} \sum_{n=1}^{N(k,m)}2^{p_{k,m}}\Delta^{k,m}_{n}\Pb\left(S_{k,m,p_{k,m}} = S^{k,m}_{n}|\Ec_{k,m,p_{k,m}}\right)\\
		&+\sum_{p_{k,m}\geq0} 2^{p_{k,m}}\Delta^{k,m}_{\max}\Pb\left(\bar{\Ec}_{k,m,p_{k,m}}\right)\\
		\leq&  \underbrace{\sum_{h=0}^{N(k,m)}\sum_{q_h^{k,m}\leq p_{k,m}< q_{h+1}^{k,m}} \sum_{n=1}^{N(k,m)}2^{p_{k,m}}\Delta^{k,m}_{n}\Pb\left(S_{k,m,p_{k,m}} = S^{k,m}_{n}|\Ec_{k,m,p_{k,m}}\right)}_{\text{term (A)}}\\
		&+\underbrace{\sum_{p_{k,m}\geq0} 2^{p_{k,m}}\Delta^{k,m}_{\max}\Pb\left(\bar{\Ec}_{k,m,p_{k,m}}\right)}_{\text{term (B)}},
	\end{align*}
	where equality (a) introduces the notion of the ``nice event'' $\Ec_{k,m,p_{k,m}}$, which is described in the following.
	
	At epoch $r$, the nice event $\Ec_{r}$ is defined as 
	\begin{align*}
		\Ec_r = \left\{\forall (k,m)\in [K]\times [M], -\sqrt{\frac{3\ln t_r}{2^{p^r_{k,m}+1}}}< \tilde{\mu}^r_{k,m}- \mu_{k,m} < \sqrt{\frac{3\ln t_r}{2^{p^r_{k,m}+1}}}+\sqrt{\frac{1}{2^{p^r_{k,m}}}} \right\}.
	\end{align*}
	
	Furthermore, when the representative arm in epoch $r$ is arm $(k,m)$ with counter $p_{k,m}$, $\Ec_r$ is denoted as $\Ec_{k,m,p_{k,m}}$.
	
	\textbf{Step II: Bounding term (B).} We start with term (B) by bounding the probability that event $\bar{\Ec}_{r}$ happens. Specifically, it holds that
	\begin{align}
		\Pb\left(\bar{\Ec}_{r}\right) 
		&\leq \sum_{(k,m)\in [K]\times [M]} \Pb\left(\tilde{\mu}^r_{k,m}- \mu_{k,m}\leq -\sqrt{\frac{3\ln t_r}{2^{p^r_{k,m}+1}}}\right) \notag\\
		&+ \sum_{(k,m)\in [K]\times [M]}\Pb\left(\tilde{\mu}^r_{k,m}- \mu_{k,m}\geq  \sqrt{\frac{3\ln t_r}{2^{p^r_{k,m}+1}}}+\sqrt{\frac{1}{2^{p^r_{k,m}}}} \right)\notag\\
		&= \sum_{(k,m)\in [K]\times [M]} \Pb\left(\tilde{\mu}^r_{k,m}- \hat{\mu}^r_{k,m}+\hat{\mu}^r_{k,m}-\mu_{k,m} \leq -\sqrt{\frac{3\ln t_r}{2^{p^r_{k,m}+1}}} \right)\notag\\
		&+\sum_{(k,m)\in [K]\times [M]} \Pb\left( \tilde{\mu}^r_{k,m}- \hat{\mu}^r_{k,m}+\hat{\mu}^r_{k,m}-\mu_{k,m} \geq \sqrt{\frac{3\ln t_r}{2^{p^r_{k,m}+1}}}+\sqrt{\frac{1}{2^{p^r_{k,m}}}} \right)\notag\\
		&\overset{(a)}{\leq} \sum_{(k,m)\in [K]\times [M]} \Pb\left(\hat{\mu}^r_{k,m}-\mu_{k,m} \leq -\sqrt{\frac{3\ln t_r}{2^{p^r_{k,m}+1}}} \right)\notag\\
		&+\sum_{(k,m)\in [K]\times [M]} \Pb\left( \hat{\mu}^r_{k,m}-\mu_{k,m} \geq \sqrt{\frac{3\ln t_r}{2^{p^r_{k,m}+1}}} \right)\notag\\
		&\leq \sum_{(k,m)\in [K]\times [M]}\sum_{p_{k,m}= 0}^{\lfloor\log_2(t_r)\rfloor} 2\Pb\left(\hat{\mu}^r_{k,m}-\mu_{k,m} \geq \sqrt{\frac{3\ln t_r}{2^{p^r_{k,m}+1}}},p^r_{k,m} = p_{k,m}\right)\notag\\
		&\leq \sum_{(k,m)\in [K]\times [M]}\sum_{p_{k,m}= 0}^{\lfloor\log_2(t_r)\rfloor} 2\Pb\left(\frac{\sum_{\tau = 1}^{2^{p_{k,m}}}\gamma^{k,m}_\tau}{2^{p_{k,m}}}-\mu_{k,m} \geq \sqrt{\frac{3\ln t_r}{2^{p_{k,m}+1}}}\right)\notag\\
		&\overset{(b)}{\leq} \sum_{(k,m)\in [K]\times [M]}\sum_{p_{k,m}= 0}^{\lfloor\log_2(t_r)\rfloor} 2\exp\left[-2\cdot2^{p_{k,m}}\frac{3\ln t_r}{2^{p_{k,m}+1}}\right]\notag\\
		& \leq  2KM\frac{\lfloor\log_2(t_r)\rfloor+1}{(t_r)^3}\notag\\
		& \leq 2KM\frac{1}{(t_r)^2}\notag\\
		& \overset{(c)}{\leq} 2KM\frac{1}{(2^{p_r})^2},\label{eqn:nice_event}
	\end{align}
    where inequality (a) holds because $\tilde{\mu}^r_{k,m}=\texttt{ceil}(\hat{\mu}^r_{k,m})$ with $\lceil 1 +p^r_{k,m}/2\rceil$ bits and $\tilde{\mu}^r_{k,m}-\hat{\mu}^r_{k,m}>0$ Inequality (b) is from the Hoeffding's inequality. Inequality (c) utilizes the observation that $t_r\geq 2^{p_r}$.
	
	With Eqn.~\eqref{eqn:nice_event}, we can further bound term (B) as
	\begin{align*}
		\text{term (B)} = &\sum_{p_{k,m}\geq0} 2^{p_{k,m}}\Delta^{k,m}_{\max}\Pb\left(\bar{\Ec}_{k,m,p_{k,m}}\right)\\
		\overset{(a)}{\leq}& 2\sum_{p_{k,m}\geq0} 2^{p_{k,m}}\Delta^{k,m}_{\max}\cdot KM\frac{1}{(2^{p_{k,m}})^2}\\
		=& 2\sum_{p_{k,m}\geq 0} \Delta^{k,m}_{\max}\cdot KM\frac{1}{2^{p_{k,m}}}\\
		\leq & 4KM\Delta^{k,m}_{\max},
	\end{align*}
	where inequality (a) is with Eqn.~\eqref{eqn:nice_event} and $p_r = p_{k,m}$. 
	
	\textbf{Step III: Bounding term (A).} Before bounding term (A), we first establish the following implications. For epoch $r$, if $\rho_r = (k,m)$ and $p_r = p^r_{k,m} = p_{k,m}$, denoting $\boldsymbol{\bar{\mu}}_r$ and $S_r$ as $\boldsymbol{\bar{\mu}}^{k,m,p_{k,m}}$ and $S_{k,m,p_{k,m}}$ respectively, if event $\Ec_{k,m,p_{k,m}}$ happens, we have
	\begin{align}
		&\text{$p_{k,m}\geq q^{k,m}_h$, the oracle outputs $S_{k,m,p_{k,m}} = S_n^{k,m}$}\notag\\
		\Rightarrow & p_{k,m}\geq q^{k,m}_h,\forall S\in\Sc_*\backslash\Sc_c, v(\boldsymbol{\bar{\mu}}^{k,m,p_{k,m}}_{S^{k,m}_{n}}\odot \boldsymbol{\eta}_{S^{k,m}_{n}})\geq v(\boldsymbol{\bar{\mu}}^{k,m,p_{k,m}}_{S}\odot \boldsymbol{\eta}_{S})\notag\\
		\Rightarrow & p_{k,m}\geq q^{k,m}_h,\forall S\in\Sc_*\backslash\Sc_c, v(\boldsymbol{\bar{\mu}}^{k,m,p_{k,m}}_{S^{k,m}_{n}})\geq v(\boldsymbol{\bar{\mu}}^{k,m,p_{k,m}}_{S})\notag\\
		\overset{(a)}{\Rightarrow} & p_{k,m}\geq q^{k,m}_h,\forall S\in\Sc_*\backslash\Sc_c, v(\boldsymbol{\mu}_{S^{k,m}_{n}})+ f\left(\left\|\boldsymbol{\bar{\mu}}^{k,m,p_{k,m}}_{S^{k,m}_{n}}-\boldsymbol{\mu}_{S^{k,m}_{n}}\right\|_{\infty}\right) \geq v(\boldsymbol{\bar{\mu}}^{k,m,p_{k,m}}_{S})\notag\\
		\overset{(b)}{\Rightarrow}& p_{k,m}\geq q^{k,m}_h,\forall S\in\Sc_*\backslash\Sc_c,  V_{\boldsymbol{\mu},S^{k,m}_{n}}+f\left(2\sqrt{\frac{3\ln t_r}{2^{p_{k,m}+1}}}+\sqrt{\frac{1}{2^{p_{k,m}}}}\right)\geq V_{\boldsymbol{\mu},*}\notag\\
		\overset{(c)}{\Rightarrow}& p_{k,m}\geq q^{k,m}_h,V_{S^{k,m}_{n}}+\Delta^{k,m}_h>V_*,\label{eqn:contra}
	\end{align}
	where implication (a) is from Assumption~\ref{asp:bound} and implication (b) utilizes the definition of $\Ec_{k,m,p_{k,m}}$, Assumption~\ref{asp:mono} and that arms in $S_{k,m,p_{k,m}}$ have counters at least $p_{k,m}$. Implication (c) is from the definition of $q^{k,m}_h$ and Eqn.~\eqref{eqn:qkm}.
	
	With Eqn.~\eqref{eqn:contra}, we can get that if $p_{k,m}\geq q^{k,m}_h$, the matchings $S^{k,m}_n$ with $n\leq h$ cannot be $S_r$; otherwise it contradicts with the definition of $\Delta^{k,m}_h$. Thus, we can further bound term (A) as
	\begin{align*}
		\text{term (A)} =& \sum_{h=0}^{N(k,m)}\sum_{q_h^{k,m}\leq p_{k,m}< q_{h+1}^{k,m}} \sum_{n=1}^{N(k,m)}2^{p_{k,m}}\Delta^{k,m}_{n}\Pb\left(S_{k,m,p_{k,m}} = S^{k,m}_{n}|\Ec_{k,m,p_{k,m}}\right)\\
		=& \sum_{h=0}^{N(k,m)}\sum_{q_h^{k,m}\leq p_{k,m}< q_{h+1}^{k,m}} \sum_{n=h+1}^{N(k,m)}2^{p_{k,m}}\Delta^{k,m}_{n}\Pb\left(S_{k,m,p_{k,m}} = S^{k,m}_{n}|\Ec_{k,m,p_{k,m}}\right)\\
		\overset{(a)}{\leq} & \sum_{h=0}^{N(k,m)}\sum_{q_h^{k,m}\leq p_{k,m}< q_{h+1}^{k,m}} \sum_{n=h+1}^{N(k,m)}2^{p_{k,m}}\Delta^{k,m}_{h+1}\Pb\left(S_{k,m,p_{k,m}} = S^{k,m}_{n}|\Ec_{k,m,p_{k,m}}\right)\\
		\overset{(b)}{\leq} &\sum_{h=0}^{N(k,m)}\sum_{q_h^{k,m}\leq p_{k,m}< q_{h+1}^{k,m}} 2^{p_{k,m}}\Delta^{k,m}_{h+1}\\
		 = & \sum_{h=0}^{N(k,m)} (2^{q^{k,m}_{h+1}}-2^{q^{k,m}_{h}})\Delta^{k,m}_{h+1}\\
		 = & \sum_{h=0}^{N(k,m)-1} (2^{q^{k,m}_{h+1}}-2^{q^{k,m}_{h}})\Delta^{k,m}_{h+1}\\
		  \leq &2^{q^{k,m}_{N(k,m)}}\Delta^{k,m}_{N(k,m)}+ \sum_{h=1}^{N(k,m)-1} 2^{q^{k,m}_{h}}\left(\Delta^{k,m}_h-\Delta^{k,m}_{h+1}\right)\\
		  \overset{(c)}{\leq} & \frac{28\Delta^{k,m}_{N(k,m)}\ln(T)}{(f^{-1}(\Delta^{k,m}_{N(k,m)}))^2}+ \sum_{h=1}^{N(k,m)-1} \frac{28\ln(T)}{(f^{-1}(\Delta^{k,m}_{h}))^2}\left(\Delta^{k,m}_h-\Delta^{k,m}_{h+1}\right)\\
		  \overset{(d)}{\leq} & \frac{28\Delta^{k,m}_{N(k,m)}\ln(T)}{(f^{-1}(\Delta^{k,m}_{N(k,m)}))^2}+ \int_{\Delta^{k,m}_{N(k,m)}}^{\Delta^{k,m}_1} \frac{28\ln(T)}{(f^{-1}(x))^2}dx\\
		  = & \frac{28\Delta^{k,m}_{\min}\ln(T)}{(f^{-1}(\Delta^{k,m}_{\min}))^2}+ \int_{\Delta^{k,m}_{\min}}^{\Delta^{k,m}_{\max}} \frac{28\ln(T)}{(f^{-1}(x))^2}dx,
	\end{align*}
	where inequality (a) holds because $\forall n\geq h+1$, $\Delta^{k,m}_{n}\leq \Delta^{k,m}_{h+1}$, and inequality (b) is from $ \sum_{n=h+1}^{N(k,m)}\Pb\left(S_{k,m,p_{k,m}} = S^{k,m}_{n}|\Ec_{k,m,p_{k,m}}\right)\leq 1$. Inequality (c) is from the definition of $q_n^{k,m}$ and inequality (d) is because $\frac{28\ln(T)}{(f^{-1}(x))^2}$ is strictly decreasing in $[{\Delta^{k,m}_{N(k,m)}},{\Delta^{k,m}_{1}} ]$.
	
	By combining terms (A) and (B), we have
	\begin{align*}
		R^{k,m}_e(T) &\leq \frac{28\Delta^{k,m}_{\min}\ln(T)}{(f^{-1}(\Delta^{k,m}_{\min}))^2}+ \int_{\Delta^{k,m}_{\min}}^{\Delta^{k,m}_{\max}} \frac{28\ln(T)}{(f^{-1}(x))^2}dx+
	4KM\Delta^{k,m}_{\max}\\
		&\leq \frac{28\Delta^{k,m}_{\max}\ln(T)}{(f^{-1}(\Delta^{k,m}_{\min}))^2}+4KM\Delta^{k,m}_{\max}.
	\end{align*}

	Overall, we conclude that
	\begin{align*}
		R_e(T) &= \sum_{(k,m)\in [K]\times [M]}R^{k,m}_e(T) \\
		&\leq \sum_{(k,m)\in [K]\times [M]}\left[\frac{28\Delta^{k,m}_{\min}\ln(T)}{(f^{-1}(\Delta^{k,m}_{\min}))^2}+ \int_{\Delta^{k,m}_{\min}}^{\Delta^{k,m}_{\max}} \frac{28\ln(T)}{(f^{-1}(x))^2}dx+
		4KM\Delta^{k,m}_{\max}\right]\\
		&\leq \sum_{(k,m)\in [K]\times [M]}\frac{28\Delta^{k,m}_{\max}\ln(T)}{(f^{-1}(\Delta^{k,m}_{\min}))^2}+4K^2M^2\Delta_{\max}.
	\end{align*}
 \end{proof}

Theorems~\ref{thm:general} and \ref{thm:general_full} can be proved by combining Lemmas~\ref{lem:comm_regret}, \ref{lem:expl_regret}, and Eqn.~\eqref{eqn:apdx1_Ro}.

\section{Proof for Theorem~\ref{thm:linear}}
A complete version of Theorem~\ref{thm:linear} is first presented in the following.
\begin{theorem}[\textbf{Complete version of Theorem~\ref{thm:linear}}]\label{thm:linear_full}
	With a linear reward function, the regret of BEACON is upper bounded as
	\begin{align*}
		R_{\textup{linear}}(T) &\leq \sum_{(k,m)\in [K]\times [M]}\frac{3727M}{\Delta^{k,m}_{\min}}\ln(T)+8K^2M^3+M^2K\\
		&+ \left(22 M+ 2M\log_2(K)\right)\left[\frac{2MK}{\ln 2}\ln(T)+MK\left(\frac{3M\sqrt{3\ln(T)}}{\sqrt{2}-1}+\frac{8KM^2}{3}\right)\right]\\
		 &= \tilde{O}\left(\sum_{(k,m)\in [K]\times [M]}\frac{M\log(T)}{\Delta^{k,m}_{\min}}+M^2K\log(T)\right)\\
		&=  \tilde{O}\left(\frac{M^2K\log(T)}{\Delta_{\min}}+M^2K\log(T)\right).
	\end{align*}
\end{theorem}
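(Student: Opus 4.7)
The plan is to follow the same $R_{\text{linear}}(T) = R_e(T) + R_c(T) + R_o(T)$ decomposition used in the proof of Theorem~\ref{thm:general_full}. The initialization term $R_o(T)$ is bounded as $(K^2 M/(K - M) + 2K)\Delta_c + K\Delta_{\max}$ via Lemma~\ref{lem:regret_init}, and in the linear case we simply use $\Delta_c \le M$ and $\Delta_{\max} \le M$, contributing at most the $M^2 K$ additive term in the stated bound. Below I outline what changes in the two substantive parts.

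For the exploration regret $R_e(T)$, I would keep the per-representative-arm decomposition $R_e(T) = \sum_{(k,m)} R_e^{k,m}(T)$ from the proof of Lemma~\ref{lem:expl_regret}, and keep the nice-event failure part (term (B)) essentially verbatim, contributing the $8K^2 M^3$ additive term after bounding $\Delta^{k,m}_{\max} \le M$. The new ingredient is in the good-event part (term (A)). Instead of applying Assumption~\ref{asp:bound} uniformly and obtaining the $f^{-1}$-based threshold $2^{q^{k,m}_n} \asymp M^2\ln T/(\Delta^{k,m}_n)^2$, I would replace the smoothness step by the \emph{coordinate-wise} decomposition $V_{\boldsymbol{\mu},*} - V_{\boldsymbol{\mu},S_r} = \sum_m(\mu_{s^*_m,m} - \mu_{s^r_m,m})$ and then follow the peeling argument of the tight CUCB analysis of \cite{kveton2015tight}, adapted to the batched MP-MAB setting: each sub-optimal epoch is charged to a single base arm along a bijection between $S_r$ and $S_*$, after which the $2^{p_r}$-sized batch is bounded in terms of the one-dimensional confidence radius of that charged arm rather than the $\ell_\infty$ radius inflated by a factor of $M$. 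This refined accounting replaces the $M^2/(\Delta^{k,m}_{\min})^2$ factor by $M/\Delta^{k,m}_{\min}$ per arm-player pair, yielding the main term $\sum_{(k,m)} \frac{3727 M}{\Delta^{k,m}_{\min}}\ln(T)$ after retracing the telescoping $\sum_h 2^{q^{k,m}_h}(\Delta^{k,m}_h - \Delta^{k,m}_{h+1})$.

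For the communication regret $R_c(T)$, the bound on the expected communication \emph{length} from Lemma~\ref{lem:comm_regret} is reused: the $(2MK/\ln 2)\ln T$-type bound on the number of epochs and the $(22 + 2\log_2 K)$-bits-per-epoch (per follower) accounting of arm statistics, matching indices, and batch-size signals both carry over. What must be sharpened is the \emph{per-step reward loss} inside a communication phase, which in Lemma~\ref{lem:comm_regret} is paid at the crude rate $\Delta_c \le M$. Since the forced-collision scheme in epoch $r$ uses the previous exploration matching $S_{r-1}$ as its set of communication arms (Appendix~\ref{subsec:supp_comm}), conditioning on the nice event $\mathcal{E}_{r-1}$ shows that a no-collision communication step incurs loss at most $V_{\boldsymbol{\mu},*} - V_{\boldsymbol{\mu},S_{r-1}} = O\bigl(M\sqrt{3\ln T/2^{p_{r-1}}}\bigr)$, while forced-collision bits and $\mathcal{E}_{r-1}$-failures (the latter occurring with probability $O(1/t_{r-1}^2)$) are paid at the trivial rate $M$. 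Summing the $2^{-p_{r-1}/2}$ term geometrically over the at most $MK\log_2 T$ epochs produces the $\frac{3M\sqrt{3\ln T}}{\sqrt{2}-1}$-type contribution, and the rare $\mathcal{E}_{r-1}$-failures contribute the $\frac{8KM^2}{3}$-type constant, which together multiplied by the per-epoch communication length $(22M + 2M\log_2 K)$ recover the final bracket of the stated bound.

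The main obstacle is the communication-regret refinement. Unlike the general case, in which applying $\Delta_c = f(1)$ uniformly is already consistent with the claimed scaling, in the linear case we cannot afford to multiply the $O(M^2 K \log K \log T)$ communication length by a per-step loss of $M$, so the per-bit loss must be tied back to the concentration of $S_{r-1}$ established in the previous exploration epoch. Making this coupling rigorous across (i) the random communication length produced by the adaptive differential scheme, whose length depends on the random differences $\tilde{\delta}^r_{k,m}$, (ii) nice-event failures at both epoch $r-1$ and epoch $r$, and (iii) the signal-then-communicate synchronization overhead, is the most delicate portion of the proof; the exploration-regret improvement, by contrast, is largely a linear-CMAB adaptation of known CUCB analyses plugged into the batched framework.
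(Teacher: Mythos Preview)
Your communication-regret refinement is essentially the paper's approach: bound the per-step loss during communication by $O(1)+V_{\boldsymbol{\mu},*}-V_{\boldsymbol{\mu},S_{r-1}}$ on the nice event, use concentration of the previously explored matching to get $V_{\boldsymbol{\mu},*}-V_{\boldsymbol{\mu},S_{r-1}}\le 3M\sqrt{3\ln T/2^{p_{r-1}+1}}$, and sum geometrically over epochs. (Minor point: forced-collision bits are also paid at the rate $2+V_{\boldsymbol{\mu},*}-V_{\boldsymbol{\mu},S_{r-1}}$, not $M$; only nice-event failures are paid at $M$.)

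The exploration-regret plan, however, has a genuine gap. You commit to keeping the per-representative-arm decomposition $R_e(T)=\sum_{(k,m)}R_e^{k,m}(T)$ from Lemma~\ref{lem:expl_regret} and then grafting a Kveton-style argument onto term~(A). But that decomposition already collapses the per-arm counter profile of $S_r$ down to the single minimum counter $p_r=p^r_{k,m}$; with only that information, the best linear consequence of the nice event is $\Delta_{S_r}\le M\cdot\big(2\sqrt{3\ln t_r/2^{p_{k,m}+1}}+\sqrt{1/2^{p_{k,m}}}\big)$, which reproduces the $M^2/\Delta^{k,m}_{\min}$ scaling you are trying to avoid. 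Your description of the fix---``charge each sub-optimal epoch to a single base arm along a bijection between $S_r$ and $S_*$'' and then ``retrace the telescoping $\sum_h 2^{q^{k,m}_h}(\Delta^{k,m}_h-\Delta^{k,m}_{h+1})$''---does not correspond to the mechanism in \citet{kveton2015tight} and would not recover the missing factor of $M$: the saving there comes precisely from the fact that \emph{many} arms in $S_r$ must be simultaneously under-sampled, so the epoch's regret is shared among them, not assigned to one arm via a bijection.

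The paper's proof of Lemma~\ref{lem:expl_regret_linear} therefore \emph{abandons} the representative-arm decomposition. It works directly with the event
\[
\Fc_r=\Big\{\sum_{m\in[\tilde M_r]}\big(2\sqrt{3\ln t_r/2^{p^r_{s^r_m,m}+1}}+\sqrt{1/2^{p^r_{s^r_m,m}}}\big)\ge \Delta_{S_r}\Big\},
\]
then introduces the Kveton $(a_i,b_i)$-peeling: events $G^r_i$ asserting that at least $b_iM$ arms in $[\tilde M_r]$ have counters below $q_{i,S_r}\asymp a_iM^2\ln T/\Delta_{S_r}^2$. Proposition~\ref{prop:split} shows $\Fc_r\Rightarrow\bigcup_i G^r_i$ under the constraint $\sqrt{14}\sum_i(b_{i-1}-b_i)/\sqrt{a_i}\le 1$, so each epoch's regret is split with weight $1/(b_iM)$ across the $\ge b_iM$ under-sampled arms. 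Summing over epochs for a fixed $(k,m)$ and fixed $i$ via the \emph{arm-pull} telescoping (not the representative-arm telescoping) and optimizing the geometric $(a_i,b_i)$ gives the $3727M/\Delta^{k,m}_{\min}$ constant. In short: drop the representative-arm scaffold, keep the full counter profile of $S_r$, and apply the cardinality-based peeling; that is where the factor-$M$ improvement actually lives.
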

\begin{proof}[Proof for Theorems~\ref{thm:linear} and \ref{thm:linear_full}]
	Similar to the previous proof, the overall regret $R_{\text{linear}}(T)$ can be decomposed into three parts: the exploration regret $R_{e,\text{linear}}(T)$, the communication regret $R_{c,\text{linear}}(T)$, and the other regret $R_{o,\text{linear}}(T)$, i.e.,
	\begin{align*}
		R_{\text{linear}}(T) = R_{e,\text{linear}}(T) + R_{c,\text{linear}}(T) +R_{o,\text{linear}}(T).
	\end{align*}
	The last component can be similarly bounded as
	\begin{align*}
		R_{o,\text{linear}}(T) \leq \left(\frac{K^2M}{K-M}+2K\right)\Delta_{c}+K\Delta_{\max},
	\end{align*}
	The communication regret and exploration regret are bounded Lemmas~\ref{lem:comm_regret_linear} and \ref{lem:expl_regret_linear} that are presented in the subsequent subsections. Putting them all together completes the proof.
\end{proof}

\subsection{Communication Regret}
\begin{lemma}\label{lem:comm_regret_linear}
	For BEACON, under time horizon $T$,  the communication loss $R_{c,\text{linear}}(T)$ is upper bounded as
	\begin{align*}
		R_{c,\text{linear}}(T) &\leq M^2K+ \left(22 M+ 2M\log_2(K)\right)\left[\frac{2MK}{\ln 2}\ln(T)+MK\left(\frac{3M\sqrt{3\ln(T)}}{\sqrt{2}-1}+\frac{8KM^2}{3}\right)\right].
	\end{align*}
\end{lemma}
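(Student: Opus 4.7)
The goal is to sharpen the general bound $\Eb[D_c]\Delta_c$ from Lemma~\ref{lem:comm_regret} by exploiting two linear-reward features. First, during epoch $r$'s communication phase all $M$ players occupy the previous exploration matching $S_{r-1}$ except at most two who force a single collision on one of its arms; since utilities lie in $[0,1]$, the instantaneous per-step loss is at most
\[
\ell_r := (V_{\boldsymbol{\mu},*} - V_{\boldsymbol{\mu},S_{r-1}}) + 2,
\]
rather than the crude $\Delta_c\le M$ used in the general proof. Second, on the nice event $\Ec_{r-1}$ from the proof of Lemma~\ref{lem:expl_regret}, the Hungarian oracle makes $S_{r-1}$ collision-free and UCB-optimal, so its sub-optimality gap is controlled by the exploration confidence radii.

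\textbf{Per-epoch length and decomposition.} Conditional on the past, the expected per-epoch communication length is bounded by $L_{\max}:=22M+2M\log_2(K)$: at most $M$ statistic updates (for arms in $S_r$ whose counters moved) costing at most $18$ bits each by the $\Eb[L^r_{k,m}]\le 7$ tail bound used in Lemma~\ref{lem:comm_regret}, plus $(M-1)(2\log_2 K+3)$ bits for matching transmission and $M-1$ bits of stop signals. The $M^2K$ prefactor absorbs the first-epoch cost of sharing all $MK$ raw statistics. Conditioning then yields
\[
R_{c,\textup{linear}}(T) \leq M^2K + L_{\max}\,\Eb\Bigl[\sum_r\ell_r\Bigr] \leq M^2K + L_{\max}\,\biggl(2R + \Eb\sum_{r\geq 2}\bigl(V_{\boldsymbol{\mu},*} - V_{\boldsymbol{\mu},S_{r-1}}\bigr)\biggr),
\]
where $R\le MK\log_2 T$ furnishes the $\tfrac{2MK}{\ln 2}\ln T$ term inside the bracket.

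\textbf{Main summation via the nice event.} To bound $\Eb\sum_{r\ge 2}(V_{\boldsymbol{\mu},*} - V_{\boldsymbol{\mu},S_{r-1}})$ I would split on $\Ec_{r-1}$. On the nice event, UCB-optimality combined with the linear-reward decomposition gives
\[
V_{\boldsymbol{\mu},*} - V_{\boldsymbol{\mu},S_{r-1}} \leq \sum_m\Bigl[2\sqrt{\tfrac{3\ln T}{2^{p^{r-1}_{s^{r-1}_m,m}+1}}} + \sqrt{\tfrac{1}{2^{p^{r-1}_{s^{r-1}_m,m}}}}\Bigr] \leq \frac{3M\sqrt{3\ln T}}{\sqrt{2^{p_{r-1}+1}}},
\]
using $p^{r-1}_{s^{r-1}_m,m}\ge p_{r-1}$. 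On the complement, the loss is crudely at most $M$ while $\Pb(\bar\Ec_{r-1})\le 2KM/2^{2p_{r-1}}$ from Eqn.~\eqref{eqn:nice_event}. Summing over epochs thus reduces to controlling $\sum_r 2^{-p_{r-1}/2}$ and $\sum_r 2^{-2p_{r-1}}$.

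\textbf{Counting key and main obstacle.} The decisive combinatorial observation is that the batch exponent $p_r$ equals any fixed value $p\in\mathbb{N}$ in at most $MK$ epochs over the entire horizon: $p_r=p$ forces some $(k,m)\in S_r$ with $p^r_{k,m}=p$, and since $T^r_{k,m}\in[2^p,2^{p+1})$ while the batch adds exactly $2^p$, the updated count $T^{r+1}_{k,m}\in[2^{p+1},3\cdot 2^p)\subset[2^{p+1},2^{p+2})$ pushes the counter to exactly $p+1$, forever disqualifying $(k,m)$ from re-triggering $p_r=p$. Each of the $MK$ arm-counter pairs $((k,m),p)$ therefore contributes at most one epoch, yielding
\[
\sum_r 2^{-p_r/2} \le MK\cdot\tfrac{\sqrt{2}}{\sqrt{2}-1}, \qquad \sum_r 2^{-2p_r}\le MK\cdot\tfrac{4}{3},
\]
and after multiplication by the leading $M$ and $2KM^2$ factors respectively these produce precisely the $MK\bigl(\tfrac{3M\sqrt{3\ln T}}{\sqrt{2}-1}+\tfrac{8KM^2}{3}\bigr)$ piece inside the bracket. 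The main technical hurdles are making the counting argument airtight when several arms tie for the minimum counter in $S_r$, and carefully invoking the tower property to separate the (past-measurable) loss $\ell_r$ from the random length $L_r$ so that $\Eb[L_r\ell_r]\le L_{\max}\,\Eb[\ell_r]$ is legitimate; the remainder is bookkeeping that assembles the three bracket summands into the stated bound.
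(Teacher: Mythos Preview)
Your proposal is correct and follows essentially the same approach as the paper's own proof: bound the per-step communication loss by $2+(V_{\boldsymbol{\mu},*}-V_{\boldsymbol{\mu},S})$ on the nice event and by $M$ otherwise, bound the expected per-epoch length by $22M+2M\log_2 K$, and then reduce to the geometric sums $\sum_r 2^{-(p_r+1)/2}$ and $\sum_r 2^{-2p_r}$ via the observation that each value of $p_r$ occurs in at most $MK$ epochs. The only cosmetic discrepancy is indexing: you (correctly, per the algorithm description) take the communication matching in epoch $r$ to be $S_{r-1}$ and condition on $\Ec_{r-1}$, whereas the paper writes $S_r$ and $\Ec_r$; this is a one-step shift that does not affect the bound. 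Your explicit counting argument is exactly the step the paper invokes tacitly when it writes $MK\sum_{p_r=0}^{\lceil\log_2 T\rceil}(\cdots)$, and the tie-breaking worry is harmless since it suffices to assign \emph{one} representative arm per epoch.
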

\begin{proof}[Proof for Lemma~\ref{lem:comm_regret_linear}]
	From the proof for Lemma~\ref{lem:comm_regret}, we can draw the following facts:
	\begin{itemize}
		\item[(i)] For epoch $1$, communicating $\tilde{\delta}^1_{k,m}$ takes 1 time  step;
		\item[(ii)] For epoch $r>1$, if $p^r_{k,m}> p^{r-1}_{k,m}$, $\tilde{\delta}^r_{k,m}$ is communicated and the communication in expectation takes $2+2\times (1+\Eb[L^r_{k,m}])\leq 18$ time steps;
		\item[(iii)] For epoch $r>1$, the communication of the chosen matching and the batch size parameter takes less than $M(3+2\log_2(K))+M$ time steps.
	\end{itemize}
	These facts hold for the general reward functions, thus naturally hold for the linear reward function. 
	
	However, with the linear reward function, the loss caused by communication can be characterized more carefully as
	\begin{align*}
		 &R_{c,\text{linear}}(T) \overset{(a)}{\leq} MK\times M \\
		 &+ \Eb\left[\sum_{r}(2+V_{\boldsymbol{\mu},*}-V_{\boldsymbol{\mu},S_r})\mathds{1}\left\{\Ec_r\right\}\left[\sum_{(k,m)} 18 \mathds{1}\left\{p^r_{k,m}\geq p^{r-1}_{k,m}\right\}+ M(3+2\log_2(K))+M\right]\right]\\
		 &+ \Eb\left[\sum_{r}M\mathds{1}\left\{\bar{\Ec}_r\right\}\left[\sum_{(k,m)} 18 \mathds{1}\left\{p^r_{k,m}\geq p^{r-1}_{k,m}\right\}+ M(3+2\log_2(K))+M\right]\right]\\
		 &\overset{(b)}{\leq} M^2K+ \sum_{r}\Eb\left[(2+V_{\boldsymbol{\mu},*}-V_{\boldsymbol{\mu},S_r})\mathds{1}\left\{\Ec_r\right\}+M\mathds{1}\left\{\bar{\Ec}_r\right\}\right]\left(22 M+ 2M\log_2(K)\right)\\
		 &\overset{(c)}{\leq} M^2K+ \sum_{r}\left(2+3M\sqrt{\frac{3\ln(T)}{2^{p_r+1}}}+2M\frac{KM}{(2^{p_r})^2}\right)\left(22 M+ 2M\log_2(K)\right)\\
		 &\leq M^2K+ \left(22 M+ 2M\log_2(K)\right)\left[2MK\log_2(T)+MK\sum_{p_r=0}^{\lceil\log_2 T\rceil}\left(3M\sqrt{\frac{3\ln(T)}{2^{p_r+1}}}+2\frac{KM^2}{(2^{p_r})^2}\right)\right]\\
		 &\leq M^2K+ \left(22 M+ 2M\log_2(K)\right)\left[2MK\log_2(T)+MK\left(3M\sqrt{3\ln(T)}\frac{1}{\sqrt{2}-1}+\frac{8KM^2}{3}\right)\right]
	\end{align*}
	where inequality (a) is from that there are at most $2$ players colliding with each other (leader and one follower) under the nice event $\Ec_r$. Specifically, with arms in $S_r$ used for communications in epoch $r$, one communication step leads to a loss at most $2+V_{\boldsymbol{\mu},*}-V_{\boldsymbol{\mu},S_r}$. Inequality (b) is from that in each epoch $r>1$, at most $M$ arms statistics need to be communicated. Inequality (c)  holds because if the nice event $\Ec_r$ happens
	\begin{align*}
		&\forall S\in \Sc_*\backslash \Sc_c, v({\boldsymbol{\bar{\mu}}^r_{S_r}})\geq v({\boldsymbol{\bar{\mu}}^r_{S}})\\
		\Rightarrow & \forall S\in \Sc_*\backslash \Sc_c, V_{\boldsymbol{\mu},S_r}+ M\left(2\sqrt{\frac{3\ln t_r}{2^{p_r+1}}}+\sqrt{\frac{1}{2^{p_r}}}\right) \geq v(\boldsymbol{\bar{\mu}}^r_{S_r})\geq v(\boldsymbol{\bar{\mu}}^r_{S})> v(\boldsymbol{\mu}_{S})  = V_{\boldsymbol{\mu},*}\\
		\Rightarrow & V_{\boldsymbol{\mu},*} - V_{\boldsymbol{\mu},S_r} \leq M\left(2\sqrt{\frac{3\ln t_r}{2^{p_r+1}}}+\sqrt{\frac{1}{2^{p_r}}}\right)\leq 3M\sqrt{\frac{3\ln(T)}{2^{p_r+1}}};
	\end{align*}
	otherwise, the nice event does not happen with $\Pb(\bar{\Ec}_r)\leq \frac{2KM}{(2^{p_r})^2}$ proved in the Eqn.~\eqref{eqn:nice_event}, $\Eb[M\mathds{1}\left\{\bar{\Ec}_r\right\}]\leq  2M\frac{KM}{(2^{p_r})^2}$.
\end{proof}

\subsection{Exploration Regret}
\begin{lemma}\label{lem:expl_regret_linear}
	For BEACON, under time horizon $T$,  the exploration loss $R_{e,\text{linear}}(T)$ is upper bounded as
	\begin{align*}
		R_{e,\textup{linear}}(T) &\leq \sum_{(k,m)}\frac{3727M}{\Delta^{k,m}_{\min}}\ln(T)+4K^2M^2\Delta_{\max}.
	\end{align*}
\end{lemma}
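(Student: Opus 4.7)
The plan is to follow the regret decomposition used in the proof of Lemma~\ref{lem:expl_regret}: charge each exploration epoch to its representative arm $\rho_r=(k,m)$, split on the nice event $\Ec_r$, and sum across $(k,m)\in[K]\times[M]$. The complement $\bar{\Ec}_r$ contributes a lower-order term that can be bounded exactly as in the general case via the Hoeffding computation in Eqn.~\eqref{eqn:nice_event}; this yields the $4K^2M^2\Delta_{\max}$ piece essentially unchanged, so the work should concentrate on the nice-event component.

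For the nice-event component, I would exploit the linear structure in place of the generic bounded-smoothness assumption (Assumption~\ref{asp:bound}). When $S_r$ is chosen in epoch $r$ with $\rho_r=(k,m)$ and $p_r=p$, linearity together with the counter inequality $p^r_{s^r_{m'},m'}\ge p$ for every $m'$ and the nice-event concentration give the deterministic bound
\begin{equation*}
V_{\boldsymbol{\mu},*}-V_{\boldsymbol{\mu},S_r}\le \sum_{m'\in[M]}\bigl(\bar{\mu}^r_{s^r_{m'},m'}-\mu_{s^r_{m'},m'}\bigr)\le 3M\sqrt{\tfrac{3\ln t_r}{2^{p+1}}}.
\end{equation*}
Hence whenever $S_r=S^{k,m}_n$, the representative counter must satisfy $2^p\le 14M^2\ln T/(\Delta^{k,m}_n)^2$. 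Defining thresholds $q^{k,m}_n$ accordingly and telescoping $\sum_n(2^{q^{k,m}_n}-2^{q^{k,m}_{n-1}})\Delta^{k,m}_n$ in the same way as in Lemma~\ref{lem:expl_regret} would already yield a per-pair contribution of order $M^2\Delta^{k,m}_{\max}/(\Delta^{k,m}_{\min})^2\cdot\ln T$.

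The main obstacle is saving one factor of $M$ to match the claimed $3727M/\Delta^{k,m}_{\min}$. This tightening is precisely the gain of the refined argument of \citet{kveton2015tight} for linear stochastic combinatorial semi-bandits: instead of charging the entire suboptimality of $S_r$ to a single representative arm, one exploits that at least one arm in $S_r$ must have confidence radius of order $\Delta(S_r)/M$, and then amortizes the per-pull charge over all arms in $S_r$. Because BEACON proceeds in batches rather than single pulls, this amortization has to be recast at the epoch level, so that the increment $2^{q^{k,m}_n}-2^{q^{k,m}_{n-1}}$ is weighted by $\Delta^{k,m}_n/M$ rather than by $\Delta^{k,m}_n$ when assigned back to the representative arm; this is the delicate step of the proof. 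Combining this refinement with the nice-event and bad-event bounds, aggregating over all $(k,m)\in[K]\times[M]$, and tracking the numerical constants would then produce the stated bound $\sum_{(k,m)}(3727M/\Delta^{k,m}_{\min})\ln T+4K^2M^2\Delta_{\max}$.
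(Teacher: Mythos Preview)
Your treatment of the bad-event term (yielding $4K^2M^2\Delta_{\max}$) is correct and matches the paper. The gap is in the nice-event part: you begin with the representative-arm decomposition from Lemma~\ref{lem:expl_regret}, obtain the expected $M^2$ dependence, and then assert that the Kveton argument will shave one $M$ by ``weighting by $\Delta^{k,m}_n/M$ when assigned back to the representative arm.'' This is not a coherent mechanism, and it is not what the paper does. Once you have charged epoch $r$ solely to its representative arm, there is nothing to amortize over---the representative is a single arm, and nothing in the batched structure lets you divide its charge by $M$ after the fact. Your description also oscillates between ``amortize over all arms in $S_r$'' and ``assign back to the representative arm,'' which are mutually exclusive.

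The paper abandons the representative-arm viewpoint entirely for the nice-event term. Under $\Ec_r$, linearity gives the confidence-sum inequality
\[
\Fc_r:\quad \sum_{m\in[\tilde M_r]}\Bigl(2\sqrt{\tfrac{3\ln t_r}{2^{p^r_{s^r_m,m}+1}}}+\sqrt{\tfrac{1}{2^{p^r_{s^r_m,m}}}}\Bigr)\ge \Delta_{S_r},
\]
and the whole Kveton machinery is applied directly to $\Fc_r$: two geometric sequences $(a_i),(b_i)$, layered events $G^r_i=\{|H^r_i|\ge b_iM\}\cap\cdots$ where $H^r_i$ collects arms with $p^r_{s^r_m,m}<q_{i,S_r}$ and $2^{q_{i,S_r}}\asymp a_i M^2\ln T/\Delta_{S_r}^2$, and a covering lemma $\Fc_r\Rightarrow\bigcup_i G^r_i$. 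Each epoch with $G^r_i$ is then charged to its $\ge b_iM$ undersampled arms via $\mathds{1}\{G^r_i\}\le (b_iM)^{-1}\sum_{(k,m)}\mathds{1}\{G^r_{i,k,m}\}$. The batched twist is that, for a fixed arm $(k,m)$, one controls $\sum_r 2^{p_r}\mathds{1}\{s^r_m=k,\,p^r_{k,m}<q\}\le 3\cdot 2^{q-1}$ (the last epoch before the counter crosses $q$ can add at most $2^{q-1}$ extra). Telescoping in $n$ and optimizing over $(a_i,b_i)$ produces the constant $3727$. The crucial cancellation is that the $1/(b_iM)$ from spreading the charge meets one $M$ from $2^{q_{i,S_r}}\asymp a_iM^2\ln T/\Delta^2$, leaving a single $M$; this cannot be replicated from the representative-arm starting point.
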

\begin{proof}[Proof for Lemma~\ref{lem:expl_regret_linear}]
	The following proof is based on the proof for CUCB with a linear reward function in \cite{kveton2015tight}, but is carefully designed for the complicated batched exploration. In the following proof, we introduce the following notations:
	\begin{align*}
		 &S^*=[s_1^*,...,s_M^*]\in \Sc_*\backslash\Sc_c \text{: one particular collision-free optimal matching}; \\
		 &\Delta_{S_r}:=V_{\boldsymbol{\mu},*}-V_{\boldsymbol{\mu},S_r};\\
		 &[\tilde{M}_r]:=\{m|m\in[M], s^r_m\neq s^*_m\}.
	\end{align*}

	\textbf{Step I: Regret decomposition. } First, we can decompose the exploration regret $R_{e,\text{linear}}(T)$ as
	\begin{align}
		R_{e,\text{linear}}(T) &= \Eb\left[\sum_{r} 2^{p_r} (V_{\boldsymbol{\mu},*}-V_{\boldsymbol{\mu},S_r})\right]\notag\\
			& = \Eb\left[\sum_{r} 2^{p_r} \Delta_{S_r}\mathds{1}\left\{\Ec_r, \Delta_{S_r}>0\right\}\right]+\Eb\left[\sum_{r} 2^{p_r} \Delta_{S_r}\mathds{1}\left\{\bar{\Ec}_r, \Delta_{S_r}>0\right\}\right]\label{eqn:regret_linear_decomp}\\
			&\overset{(a)}{\leq} \underbrace{\Eb\left[\sum_{r} 2^{p_r} \Delta_{S_r}\mathds{1}\left\{\sum_{m\in[\tilde{M}_r]}\left(2\sqrt{\frac{3\ln t_r}{2^{p^r_{s^r_m,m}+1}}}+\sqrt{\frac{1}{2^{p^r_{s^r_m,m}}}}\right) \geq \Delta_{S_r}, \Delta_{S_r}>0\right\}\right]}_{\text{term (C)}}\notag\\
			&+\underbrace{\Eb\left[\sum_{r} 2^{p_r} \Delta_{S_r}\mathds{1}\left\{\bar{\Ec}_r\right\}\right]}_{\text{term (D)}},\notag
	\end{align}
	where inequality (a) is because when the nice event $\Ec_r$ happens, choosing a sub-optimal matching $S_r$, i.e., $\Delta_{S_r}>0$, implies
	\begin{align*}
		&\forall S\in\Sc_*, v(\boldsymbol{\bar{\mu}}^r_{S_r}) \geq v(\boldsymbol{\bar{\mu}}^r_{S})\\
		\Rightarrow &v(\boldsymbol{\bar{\mu}}^r_{S_r}) \geq v(\boldsymbol{\bar{\mu}}^r_{S^*})\\
		\Rightarrow &\sum_{m\in[\tilde{M}_r]}\bar{\mu}^r_{s^r_m,m} \geq \sum_{m\in[\tilde{M}_r]}\bar{\mu}^r_{s^*_m,m}\\
		\Rightarrow & \sum_{m\in[\tilde{M}_r]}\mu_{s^r_m,m}+ \sum_{m\in[\tilde{M}_r]}\left(2\sqrt{\frac{3\ln t_r}{2^{p^r_{s^r_m,m}+1}}}+\sqrt{\frac{1}{2^{p^r_{s^r_m,m}}}}\right)\geq\sum_{m\in[\tilde{M}_r]}\mu_{s^*_m,m}\\
		\Rightarrow & \sum_{m\in[\tilde{M}_r]}\left(2\sqrt{\frac{3\ln t_r}{2^{p^r_{s^r_m,m}+1}}}+\sqrt{\frac{1}{2^{p^r_{s^r_m,m}}}}\right) \geq V_{\boldsymbol{\mu},*}- V_{\boldsymbol{\mu},S_r}= \Delta_{S_r}.
	\end{align*}

	\textbf{Step II: Bounding term (D). }
	With essentially the same approach of bounding term (B) in the proof of Lemma~\ref{lem:expl_regret}, especially Eqn.~\eqref{eqn:nice_event},  we can directly bound term (D) as
	\begin{align*}
		\text{term (D)} = \Eb\left[\sum_{r} 2^{p_r} \Delta_{S_r}\mathds{1}\left\{\bar{\Ec}_r\right\}\right]\leq 4K^2M^2\Delta_{\max}.
	\end{align*}
	
	\textbf{Step III: Bounding term (C).} First, we denote event
	\begin{align*}
		\Fc_r = \left\{\sum_{m\in[\tilde{M}_r]}\left(2\sqrt{\frac{3\ln t_r}{2^{p^r_{s^r_m,m}+1}}}+\sqrt{\frac{1}{2^{p^r_{s^r_m,m}}}}\right) \geq \Delta_{S_r}, \Delta_{S_r}>0\right\},
	\end{align*}
	thus 
	\begin{align*}
		\text{term (C)} &= \Eb\left[\sum_{r} 2^{p_r} \Delta_{S_r}\mathds{1}\left\{\sum_{m\in[\tilde{M}_r]}\left(2\sqrt{\frac{3\ln t_r}{2^{p^r_{s^r_m,m}+1}}}+\sqrt{\frac{1}{2^{p^r_{s^r_m,m}}}}\right) \geq \Delta_{S_r}, \Delta_{S^r}>0\right\}\right]\\
		&= \Eb\left[\sum_{r} 2^{p_r} \Delta_{S_r}\mathds{1}\left\{\Fc_r\right\}\right].
	\end{align*}

	Following the ideas in \cite{kveton2015tight}, we introduce two decreasing sequences of constants:
	\begin{align*}
		1 = b_0>&b_1>b_2>\cdots>b_i>\cdots\\
		  & a_1>a_2>\cdots >a_i >\cdots
	\end{align*}
	such that $\lim_{i\to\infty}a_i = \lim_{i\to\infty}b_i = 0$. Furthermore, we specify
    $q_{i,S_r}$ as the integer satisfying
	\begin{align*}
		2^{q_{i,S_r}-1}\leq a_i\frac{M^2}{(\Delta_{S_r})^2}\ln(T) < 2^{q_{i,S_r}} \leq 2a_i\frac{M^2}{(\Delta_{S_r})^2}\ln(T).
	\end{align*}
	For convenience, we denote $q_{0,S_r} = 0$ and $q_{\infty,S_r} = \infty$. Also, set $H^r_{i}$ is defined as
	\begin{align*}
		\forall i\geq 1, H^r_i = \left\{m|m\in[\tilde{M}_r], p^r_{s^r_m,m}< q_{i,S_r}\right\},
	\end{align*}
    which represents the arms that are not sufficiently sampled compared with $q_{i,S_r}$, and $H^r_0:=[\tilde{M}_r]$.
    
	With the above introduce notations, we define the following infinitely-many events at epoch $r$ as
	\begin{align*}
		G^r_{1}&=\left\{\left|H^r_1\right|\geq b_1 M \right\};\\
		G^r_{2}&=\left\{\left|H^r_1\right|<b_1 M \right\}\cap \left\{\left|H^r_2\right|\geq b_2 M \right\};\\
		&\cdots\\
		G^r_{i}&=\left\{\left|H^r_1\right|<b_1 M \right\}\cap \left\{\left|H^r_2\right|< b_2 M \right\}\cap \cdots \cap \left\{\left|H^{r}_{i-1}\right|< b_{i-1} M \right\}\cap \left\{\left|H^r_i\right|\geq b_{i} M \right\};\\
		&\cdots
	\end{align*}
	Clearly, these events are mutually exclusive. We have the following proposition.
	\begin{proposition}\label{prop:split}
		Let
		\begin{align}\label{eqn:alpha_beta}
			\sqrt{14} \sum_{i=1}^{\infty}\frac{b_{i-1}-b_i}{\sqrt{a_i}}\leq 1.
		\end{align}
		If event $\Fc_r$ happens at epoch $r$, then there exists $i$ such that $G^r_i$ happens. 
	\end{proposition}
	This proposition can be proved by assuming that $\Fc_r$ happens while none of $G^r_i$ happens. Denoting $\bar{G}_r = \overline{\cup_i G^r_i}$, we can get
	\begin{align*}
		\bar{G}_r &= \overline{\cup_{i=1}^{\infty} G^r_i} \\
		& = \cap_{i=1}^{\infty} \bar{G}^r_i\\
		& = \cap_{i=1}^{\infty} \left[\overline{\left(\cap_{j=1}^{i-1}\left\{\left|H^{r}_{j}\right|< b_{j} M \right\}\right)}\cup \overline{\left\{\left|H^{r}_{i}\right|\geq b_{i} M \right\}}\right]\\
		& = \cap_{i=1}^{\infty} \left[\left(\cup_{j=1}^{i-1}\overline{\left\{\left|H^{r}_{j}\right|< b_{j} M \right\}}\right)\cup \overline{\left\{\left|H^{r}_{i}\right|\geq  b_{i} M \right\}}\right]\\
		& = \cap_{i=1}^{\infty} \left[\left(\cup_{j=1}^{i-1}\left\{\left|H^{r}_{j}\right|\geq b_{j} M \right\}\right)\cup \left\{\left|H^{r}_{i}\right|<  b_{i} M \right\}\right]\\
		& = \cap_{i=1}^{\infty} \left\{\left|H^{r}_{i}\right|<  b_{i} M \right\}.
	\end{align*}
	If $\bar{G}_r$ happens, denoting $\tilde{H}^r_i = [\tilde{M}_r]\backslash H^r_i$, which implies $\tilde{H}^r_{i-1}\subseteq \tilde{H}^r_{i}$  and $[\tilde{M}_r] = \cup_i(\tilde{H}_{i}^r\backslash\tilde{H}_{i-1}^r)$, then it holds that
	\begin{align*}
		&\sum_{m\in[\tilde{M}_r]}\left(2\sqrt{\frac{3\ln T}{2^{p^r_{s^r_m,m}+1}}}+\sqrt{\frac{1}{2^{p^r_{s^r_m,m}}}}\right)\\
		\leq & 3\sqrt{3\ln T}\sum_{m\in[\tilde{M}_r]}\frac{1}{\sqrt{2^{p^r_{s^r_m,m}+1}}}\\
		 = & 3\sqrt{3\ln T}\sum_{i=1}^{\infty}\sum_{m\in \tilde{H}_{i}^r\backslash \tilde{H}_{i-1}^r}\frac{1}{\sqrt{2^{p^r_{s^r_m,m}+1}}}\\
		 = & 3\sqrt{3\ln T}\sum_{i=1}^{\infty}\frac{|\tilde{H}_{i}^r\backslash \tilde{H}_{i-1}^r|}{\sqrt{2^{q_{i-1,S_r}+1}}}\\
		 \leq & 3\sqrt{3\ln T}\sum_{i=1}^{\infty}\frac{|\tilde{H}_{i}^r\backslash \tilde{H}_{i-1}^r|}{\sqrt{2a_i\frac{M^2}{(\Delta_{S_r})^2}\ln(T)}}\\
		 \leq & 3\sqrt{3/2} \frac{\Delta_{S_r}}{M}\sum_{i=1}^{\infty}\left(|H^r_{i-1}|-|H^r_{i}|\right)\frac{1}{\sqrt{a_i}}\\
		  = & 3\sqrt{3/2} \frac{\Delta_{S_r}}{M}|H^r_0|\frac{1}{\sqrt{a_1}} + 3\sqrt{3/2} \frac{\Delta_{S_r}}{M}\sum_{i=1}^{\infty}|H^r_i|\left(\frac{1}{\sqrt{a_{i+1}}}-\frac{1}{\sqrt{a_{i}}}\right)\\
		  \overset{(a)}{\leq} & 3\sqrt{3/2} \frac{\Delta_{S_r}}{M}b_0M\frac{1}{\sqrt{a_1}} + 3\sqrt{3/2} \frac{\Delta_{S_r}}{M}\sum_{i=1}^{\infty}b_i M\left(\frac{1}{\sqrt{a_{i+1}}}-\frac{1}{\sqrt{a_{i}}}\right)\\
		   < & \sqrt{14} \sum_{i=1}^{\infty}\frac{b_{i-1}-b_i}{\sqrt{a_i}} \Delta_{S_r}\\
		   \leq & \Delta_{S_r},
	\end{align*}
	where inequality is because $|H^{r}_{i}<  b_{i} M$ with $\bar{G}_r$ happening. This result contradicts with the definition of $\Fc_r$ as
	\begin{align*}
	    \Fc_r = \left\{\sum_{m\in[\tilde{M}^r]}\left(2\sqrt{\frac{3\ln t_r}{2^{p^r_{s^r_m,m}+1}}}+\sqrt{\frac{1}{2^{p^r_{s^r_m,m}}}}\right) \geq \Delta_{S_r}, \Delta_{S_r}>0\right\}.
	\end{align*}
	
	With Proposition~\ref{prop:split}, when Eqn.~\eqref{eqn:alpha_beta} holds, we can further decompose term (C) as
	\begin{align*}
		\text{term (C)}  &= \Eb\left[\sum_{r} 2^{p_r} \Delta_{S_r}\mathds{1}\left\{\Fc_r\right\}\right] = \Eb\left[\sum_{r}\sum_{i=1}^{\infty} 2^{p_r} \Delta_{S_r}\mathds{1}\left\{G^r_i,\Delta_{S_r}>0\right\}\right].
	\end{align*}
	Then, the following events are defined
	\begin{align*}
		G^r_{i,k,m} = G^r_{i}\cap\left\{m\in [\tilde{M}_r],s^r_m=k, p^r_{k,m}<q_{i,S_r}\right\},
	\end{align*}
	which imply that
	\begin{align*}
		\mathds{1}\left\{G^r_i,\Delta_{S_r}>0\right\}\leq \frac{1}{b_i M}\sum_{(k,m)}\mathds{1}\left\{G^r_{i,s^r_m,m},\Delta_{S_r}>0\right\}
	\end{align*}
	since at least $b_i M$ arms with event $G^r_{i,k,m}$ happening are required to make $G^r_i$ happen.
	
	Thus, recall $\Sc^{k,m}_{b} = \{S|S\in \Sc_b, s_m=k\}= \{S^{k,m}_{1},..., S^{k,m}_{N(k,m)}\}$, we can get
	\begin{align*}
		&\text{term (C)}   = \Eb\left[\sum_{r}\sum_{i=1}^{\infty} 2^{p_r} \Delta_{S_r}\mathds{1}\left\{G^r_i,\Delta_{S_r}>0\right\}\right]\\
		&\leq \Eb\left[\sum_{r}\sum_{i=1}^{\infty} 2^{p_r} \Delta_{S_r}\frac{1}{b_i M}\sum_{(k,m)}\mathds{1}\left\{G^r_{i,k,m},\Delta_{S_r}>0\right\}\right]\\
		& \leq \Eb\left[\sum_{r}\sum_{i=1}^{\infty} 2^{p_r} \Delta_{S_r}\frac{1}{b_i M}\sum_{(k,m)}\mathds{1}\left\{m\in[\tilde{M}_r],s^r_m=k, p^r_{k,m}<q_{i,S_r},\Delta_{S_r}>0\right\}\right]\\
		& = \Eb\left[\sum_{(k,m)}\sum_{n=1}^{N(k,m)}\sum_{r}\sum_{i=1}^{\infty} 2^{p_r} \frac{1}{b_i M}\mathds{1}\left\{s^r_m=k, p^r_{k,m}<q_{i,S^{k,m}_n},S_r=S^{k,m}_n\right\}\Delta^{k,m}_n\right]\\
		& = \Eb\left[\sum_{(k,m)}\sum_{i=1}^{\infty}\underbrace{\sum_{r} \sum_{n=1}^{N(k,m)}2^{p_r} \frac{1}{b_i M}\mathds{1}\left\{s^r_m=k, p^r_{k,m}<q_{i,S^{k,m}_n},S_r=S^{k,m}_n\right\}\Delta^{k,m}_n}_{\text{term (E)}}\right]\\
		& \overset{(a)}{\leq } \Eb\left[\sum_{(k,m)}\left[\sum_{i=1}^{\infty}\frac{6a_i }{b_i}\right]\frac{M}{\Delta^{k,m}_{N(k,m)}}\ln(T)\right]
	\end{align*}
	where inequality (a) holds because term (E) can be bounded as
	\begin{align*}
		\text{term (E)}&=\sum_{r} \sum_{n=1}^{N(k,m)}2^{p_r} \frac{1}{b_i M}\mathds{1}\left\{s^r_m=k, p^r_{k,m}<q_{i,S^{k,m}_n},S_r=S^{k,m}_n\right\}\Delta^{k,m}_n\\
		&\leq 3\times 2^{q_{i,S^{k,m}_1}-1}\frac{\Delta^{k,m}_1}{b_iM}+\frac{1}{b_iM}\sum_{n=2}^{N(k,m)}\left(3\times 2^{q_{i,S^{k,m}_n}-1}-3\times 2^{q_{i,S^{k,m}_{n-1}}-1}\right)\Delta^{k,m}_n\\
		&\leq \frac{3a_i M}{b_i\Delta_{1}^{k,m}}\ln(T)+\frac{3a_i M}{b_i}\sum_{n=2}^{N(k,m)}\left(\frac{1}{(\Delta_{n}^{k,m})^2}-\frac{1}{(\Delta_{n-1}^{k,m})^2}\right)\Delta^{k,m}_n\ln(T)\\
		&= \frac{3a_i M}{b_i}\ln(T)\left[\sum_{n=1}^{N(k,m)-1}\frac{\Delta_n^{k,m}-\Delta^{k,m}_{n+1}}{(\Delta^{k,m}_n)^2}+\frac{1}{\Delta^{k,m}_{N(k,m)}}\right]\\
		&\leq \frac{3a_i M}{b_i}\ln(T)\left[\sum_{n=1}^{N(k,m)-1}\frac{\Delta_n^{k,m}-\Delta^{k,m}_{n+1}}{\Delta^{k,m}_n \Delta^{k,m}_{n+1}}+\frac{1}{\Delta^{k,m}_{N(k,m)}}\right]\\
		&\leq \frac{3a_i M}{b_i}\ln(T)\frac{2}{\Delta^{k,m}_{N(k,m)}}.
	\end{align*}
	At last, we specify the choices of $a_i$ and $b_i$, which resolve to the following optimization problem:
	\begin{align*}
		\text{minimize }&\sum_{i=1}^{\infty}\frac{6a_i }{b_i}\\
		\text{subject to }& \lim_{i\to\infty}a_i = \lim_{i\to\infty}b_i = 0\\
		&\text{Monotonicity: }1 = b_0>b_1>b_2>\cdots>b_i>\cdots; a_1>a_2>\cdots >a_i >\cdots\\
		&\text{Eqn.~\eqref{eqn:alpha_beta}: }\sqrt{14} \sum_{i=1}^{\infty}\frac{b_{i-1}-b_i}{\sqrt{a_i}}\leq 1.
	\end{align*}
	We choose $a_i$ and $b_i$ to be geometric sequences as in \cite{kveton2015tight}, specifically $a_i = d(a)^i$ and $b_i = (b)^i$ with $0<a, b<1$ and $d> 0$. Moreover, if $b\leq \sqrt{a}$, to meed Eqn.~\eqref{eqn:alpha_beta}, it needs
	\begin{align*}
		\sqrt{14} \sum_{i=1}^{\infty}\frac{b_{i-1}-b_i}{\sqrt{a_i}} = \sqrt{14} \sum_{i=1}^{\infty}\frac{(b)^{i-1}-(b)^i}{\sqrt{d(a)^i}} = \sqrt{\frac{14}{d}}\frac{1-b}{\sqrt{a}-b}\leq 1 \Rightarrow d\geq  14\left(\frac{1-b}{\sqrt{a}-b}\right)^2.
	\end{align*}
	Thus, the best choice for $d$ is $d = 14\left(\frac{1-b}{\sqrt{a}-b}\right)^2$ and the problem is reformulated as
	\begin{align*}
		\text{minimize }&\sum_{i=1}^{\infty}\frac{6a_i }{b_i} = 84\left(\frac{1-b}{\sqrt{a}-b}\right)^2\frac{\alpha}{b-a}\\
		\text{conditioned on }& 0<a<b<\sqrt{a}<1.
	\end{align*}
	With numerically calculated $a = 0.1459$ and $b = 0.2360$ in \cite{kveton2015tight}, we get $\sum_{i=1}^{\infty}\frac{6a_i }{b_i}\leq 3727$. Thus, we conclude that
	\begin{align*}
		\text{term (C)}  &\leq \Eb\left[\sum_{(k,m)}\left[\sum_{i=1}^{\infty}\frac{6a_i }{b_i}\right]\frac{M}{\Delta^{k,m}_{N(k,m)}}\ln(T)\right]\\
		&\leq \sum_{(k,m)}\frac{3727M}{\Delta^{k,m}_{N(k,m)}}\ln(T) \\
		&\leq \sum_{(k,m)}\frac{3727M}{\Delta^{k,m}_{\min}}\ln(T).
	\end{align*}
	Lemma~\ref{lem:expl_regret_linear} can be proved by combining term (C) and term (D).
\end{proof}

\section{Proof for Theorem~\ref{thm:linear_gapfree}}

\begin{proof}
This proof follows naturally from Theorem~\ref{thm:linear_full} by categorizing sub-optimal gaps with  a threshold $\epsilon$.

Specifically, we can modify Eqn.~\eqref{eqn:regret_linear_decomp} as
\begin{align*}
    R_{e,\text{linear}}(T) &= \Eb\left[\sum_{r} 2^{p_r} (V_{\boldsymbol{\mu},*}-V_{\boldsymbol{\mu},S_r})\right]\notag\\
			& = \Eb\left[\sum_{r} 2^{p_r} \Delta_{S_r}\mathds{1}\left\{\Ec_r, \Delta_{S_r}>0\right\}\right]+\Eb\left[\sum_{r} 2^{p_r} \Delta_{S_r}\mathds{1}\left\{\bar{\Ec}_r, \Delta_{S_r}>0\right\}\right]\\
			&\leq T\epsilon+\Eb\left[\sum_{r} 2^{p_r} \Delta_{S_r}\mathds{1}\left\{\Ec_r, \Delta_{S_r}>\epsilon\right\}\right]+\Eb\left[\sum_{r} 2^{p_r} \Delta_{S_r}\mathds{1}\left\{\bar{\Ec}_r, \Delta_{S_r}>\epsilon\right\}\right]\\
			&\overset{(a)}{\leq} T\epsilon+ \sum_{(k,m)}\frac{3727M}{\epsilon}\ln(T)+4K^2M^2\Delta_{\max},
\end{align*}
where inequality (a) follows the same proof for Lemma \ref{lem:expl_regret_linear}.
For the overall regret, we can further get
\begin{align*}
    R_{\text{linear}}(T) 
    \leq &  T\epsilon+\frac{3727M^2K}{\epsilon}\ln(T)+\text{terms of order $O(\ln(T))$ and independent with $\epsilon$}\\
    \overset{(a)}{\leq} & 124M\sqrt{KT\ln(T)} +\text{terms of order $O(\ln(T))$ and independent with $\epsilon$}\\
     = & O\left(M\sqrt{KT\log(T)}\right),
\end{align*}
where $\epsilon$ is taken as $62M\sqrt{\frac{K\ln(T)}{T}}$ in inequality (a). Theorem~\ref{thm:linear_gapfree} is then proved.
\end{proof}

\section{$(\alpha,\beta)$-Approximation Oracle and Regret}\label{app:alpha_beta}
In this section, we discuss how to extend from exact oracles to $(\alpha,\beta)$-approximation oracles, and the corresponding performance guarantees. With the definition given in Section~\ref{subsec:theory_general}, it is straightforward to use $(\alpha,\beta)$-approximation oracles to replace the original exact oracles in BEACON. To facilitate the discussion, we further assume that this approximation oracle always outputs collision-free matchings, which naturally holds for most of approximate optimization solvers \citep{vazirani2013approximation}.

With an $(\alpha,\beta)$-approximation oracle, as stated in Section~\ref{subsec:theory_general}, a regret bound similar to Theorem~\ref{thm:general} can be obtained regarding the $(\alpha,\beta)$-approximation regret. First, the following notations are redefined and slightly abused to accommodate the $(\alpha,\beta)$-approximation regret:
$\Sc_* = \{S|S\in \Sc, V_{\boldsymbol{\mu},S} \geq \alpha V_{\boldsymbol{\mu},*} \}$: the set of matchings with rewards larger than $\alpha V_{\boldsymbol{\mu},*}$; $\Delta^{k,m}_{\min} = \alpha V_{\boldsymbol{\mu},*} - \max\{V_{\boldsymbol{\mu},S}|S\in\Sc_b, s_m = k \}$; $\Delta^{k,m}_{\max} = \alpha V_{\boldsymbol{\mu},*} - \min\{V_{\boldsymbol{\mu},S}|S\in\Sc_b, s_m = k \}$. With these notations, BEACON's performance with an approximate oracle is established in the following.
\begin{theorem}[$(\alpha,\beta)$-approximation regret]\label{thm:app_regret}
	Under Assumptions~\ref{asp:reward}, \ref{asp:mono}, and  \ref{asp:bound}, with an $(\alpha,\beta)$-approximation oracle, the $(\alpha,\beta)$-approximation regret of BEACON is upper bounded as
	\begin{align*}
	R(T)& =\tilde{O}\left( \sum_{(k,m)\in[K]\times [M]}\left[\frac{\Delta^{k,m}_{\min}}{(f^{-1}(\Delta^{k,m}_{\min}))^2}+ \int_{\Delta^{k,m}_{\min}}^{\Delta^{k,m}_{\max}} \frac{1}{(f^{-1}(x))^2}\mathrm{d}x\right]\log(T)+ M^2K\Delta_c\log(T) \right). 
	\end{align*}
\end{theorem}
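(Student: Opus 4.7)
The plan is to adapt the proof of Theorem~\ref{thm:general_full} with targeted modifications for the $(\alpha,\beta)$-approximation oracle, keeping most of the machinery intact. I would begin with the same three-way decomposition $R_{\alpha,\beta}(T) = R_e(T) + R_c(T) + R_o(T)$, now measured against the reduced benchmark $T\alpha\beta V_{\boldsymbol{\mu},*}$. The communication regret $R_c(T)$ (Lemma~\ref{lem:comm_regret}) and the orthogonalization/activation regret $R_o(T)$ are oracle-agnostic: the communication bit count per epoch and the initialization cost depend only on $K$, $M$, and $T$, and any per-step loss is upper bounded by $V_{\boldsymbol{\mu},*}$, so their bounds transfer with at most cosmetic changes. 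Likewise, the ``term (B)'' contribution to $R_e(T)$ relying on the concentration tail $\Pb(\bar{\Ec}_r) \le 2KM/(2^{p_r})^2$ from Eqn.~\eqref{eqn:nice_event} carries over verbatim.

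The essential work is in ``term (A)'' of the exploration regret. I would modify the key implication chain of Eqn.~\eqref{eqn:contra} to account for both the $\alpha$-approximation and the $\beta$-success probability. Conditioned on the nice event $\Ec_r$ together with the oracle-success event $\mathcal{A}_r$ (which has probability at least $\beta$), the $(\alpha,\beta)$-guarantee yields $v(\boldsymbol{\bar{\mu}}^r_{S_r}) \geq \alpha \, v(\boldsymbol{\bar{\mu}}^r_{S^*}) \geq \alpha V_{\boldsymbol{\mu},*}$, where the second inequality uses monotonicity (Assumption~\ref{asp:mono}) together with $\bar{\mu}^r_{k,m} \geq \mu_{k,m}$ under $\Ec_r$. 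Applying bounded smoothness (Assumption~\ref{asp:bound}) on the left-hand side then gives $V_{\boldsymbol{\mu},S_r} \geq \alpha V_{\boldsymbol{\mu},*} - f(2\sqrt{3\ln t_r/2^{p_r+1}} + \sqrt{1/2^{p_r}})$. Hence, once $p_{k,m} \geq q^{k,m}_h$ for the constants $q^{k,m}_h$ rebuilt from the redefined gaps $\Delta^{k,m}_h$ (now measured from $\alpha V_{\boldsymbol{\mu},*}$), matchings with suboptimality exceeding $\Delta^{k,m}_h$ cannot be selected; the same telescoping argument as in Step III of Lemma~\ref{lem:expl_regret} reproduces the logarithmic bound in the new gap quantities.

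To fold in the $(1-\beta)$ probability of oracle failure, I would apply the standard CMAB conditioning trick used in \citep{chen2013combinatorial,wang2017improving}: writing $\Eb[V(S_r,t)\mid \boldsymbol{\bar{\mu}}_r] \geq \beta \,\Eb[V_{\boldsymbol{\mu},S_r}\mid \boldsymbol{\bar{\mu}}_r,\mathcal{A}_r]$ and using $V_{\boldsymbol{\mu},S_r}\ge 0$ on $\bar{\mathcal{A}}_r$, one obtains $\alpha\beta V_{\boldsymbol{\mu},*} - \Eb[V(S_r,t)\mid \boldsymbol{\bar{\mu}}_r] \leq \beta\bigl(\alpha V_{\boldsymbol{\mu},*} - \Eb[V_{\boldsymbol{\mu},S_r}\mid \boldsymbol{\bar{\mu}}_r,\mathcal{A}_r]\bigr)$. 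Thus the $(\alpha,\beta)$-exploration regret is bounded by $\beta$ times the conditional $\alpha$-regret analyzed in the previous paragraph, which inherits the same asymptotic form as Theorem~\ref{thm:general_full}. Summing the three components then yields the claimed bound.

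The main obstacle, in my view, is the measure-theoretic bookkeeping around $\mathcal{A}_r$: one must ensure that the oracle's internal randomness at epoch $r$ is independent of the concentration event $\Ec_r$ and of the fresh arm pulls used in subsequent batches, so that the decomposition over $\mathcal{A}_r \cap \Ec_r$ and the conditional-expectation manipulation above are legitimate. Using the filtration setup of \citep{chen2013combinatorial,chen2016combinatorial} (where each oracle call is treated as a fresh randomized function of the sigma-algebra generated by all prior observations) resolves this cleanly, and the batched structure of BEACON poses no additional difficulty since the oracle is invoked only once per epoch.
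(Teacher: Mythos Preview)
Your proposal is correct and follows essentially the same route as the paper: the same three-way decomposition, the observation that $R_c(T)$ and $R_o(T)$ are oracle-agnostic, and the modification of the implication chain in Eqn.~\eqref{eqn:contra} to land at $\alpha V_{\boldsymbol{\mu},*}$ instead of $V_{\boldsymbol{\mu},*}$ under the oracle-success event, with the gap quantities redefined accordingly. The only cosmetic difference is that the paper absorbs the $(1-\beta)$ failure mass via an indicator split $\mathds{1}\{\Gc_r\}+\mathds{1}\{\bar{\Gc}_r\}$ and a cancellation against the $\alpha(\beta-1)V_{\boldsymbol{\mu},*}T_e$ term, which is algebraically equivalent to your conditional-expectation inequality $\alpha\beta V_{\boldsymbol{\mu},*}-\Eb[V_{\boldsymbol{\mu},S_r}]\le \beta\bigl(\alpha V_{\boldsymbol{\mu},*}-\Eb[V_{\boldsymbol{\mu},S_r}\mid \mathcal{A}_r]\bigr)$.
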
   
\begin{proof}
The proof for Theorem~\ref{thm:app_regret} closely follows the proof for Theorem~\ref{thm:general}. To avoid unnecessarily redundant exposition, we here only highlight the key steps and major differences.

The communication regret and the other regret can be obtained with the same approach in the proof for Theorem~\ref{thm:general}. The main difference lies in the exploration regret. In the following proof, unless specified explicitly before, the adopted notations share the same definition as in the proof for Theorem~\ref{thm:general}. Similar to Eqn.~\eqref{eqn:regret_decomp_general}, we can decompose the exploration regret w.r.t. the definition of the $(\alpha,\beta)$-approximation regret as
	\begin{align*}
		R_e(T) &= \Eb\left[\sum_r 2^{p_r}(\alpha\beta V_{\boldsymbol{\mu},*}-V_{\boldsymbol{\mu},S_r})\right]\\
		&=\Eb\left[\sum_r 2^{p_r}(\alpha V_{\boldsymbol{\mu},*}-V_{\boldsymbol{\mu},S_r})\right]+\alpha(\beta-1)V_{\boldsymbol{\mu},*} \Eb[T_e]\\
		&=\Eb\left[\sum_r 2^{p_r}(\alpha V_{\boldsymbol{\mu},*}-V_{\boldsymbol{\mu},S_r})(\mathds{1}\{\Gc_r\}+\mathds{1}\{\bar{\Gc}_r\})\right]+\alpha(\beta-1)V_{\boldsymbol{\mu},*} T_e\\
		&\leq \Eb\left[\sum_r 2^{p_r}(\alpha V_{\boldsymbol{\mu},*}-V_{\boldsymbol{\mu},S_r})(\mathds{1}\{\Gc_r\}+1-\beta)\right]+\alpha(\beta-1)V_{\boldsymbol{\mu},*} T_e\\
		&\leq \Eb\left[\sum_r 2^{p_r}(\alpha V_{\boldsymbol{\mu},*}-V_{\boldsymbol{\mu},S_r})\mathds{1}\{\Gc_r\}\right]
	\end{align*}
where $T_e$ is the length of overall exploration phases. Notation $\Gc_r := \{V_{\boldsymbol{\mu},S_r}\geq \alpha V_{\boldsymbol{\mu},*}\}$ denotes the event that the oracle successfully outputs a good matching at epoch $r$, which happens with a probability at least $\beta$. Then, conditioned on event $\Gc_r$, the remaining analysis follows the same process in the proof for Lemma~\ref{lem:expl_regret}, and Theorem~\ref{thm:app_regret} can be obtained.
\end{proof}
\end{document}